\def\footnoterule{\relax%
  \kern-5pt
  \hbox to \columnwidth{\hfill\vrule width \columnwidth height 0.4pt\hfill}
  \kern4.6pt}
\newcommand{\bi}{\begin{itemize}}
\newcommand{\ei}{\end{itemize}}
\newcommand{\bal}{\begin{align}}
\newcommand{\eal}{\end{align}}
\newcommand{\EE}{\mathbb{E}}
\newcommand{\bx}{\mathbf{x}}
\newcommand{\bp}{\mathbf{p}}
\newcommand{\bw}{\mathbf{w}}
\newcommand{\ba}{\mathbf{a}}
\newcommand{\bv}{\mathbf{v}}
\newcommand{\bc}{\mathbf{c}}
\newcommand{\cX}{\mathcal{X}}
\newcommand{\cN}{\mathcal{N}}
\newcommand{\cG}{\mathcal{G}}
\newcommand{\bb}{\mathbf{b}}
\newcommand{\barb}{\bar{\bb}}
\newcommand{\hatb}{\hat{\bb}}
\newcommand{\cK}{\mathcal{K}}
\def \endprf{\hfill {\vrule height6pt width6pt depth0pt}\medskip}
\newenvironment{proof}{\noindent {\bf Proof} }{\endprf\par}
\newtheorem{theorem}{\textbf{Theorem}}
\newtheorem{lemma}{\textbf{Lemma}}
\newtheorem{corollary}{\textbf{Corollary}}
\newtheorem{definition}{\textbf{Definition}}
\newtheorem{proposition}{\textbf{Proposition}}
\newtheorem{assumption}{\textbf{Assumption}}
\begin{document}
\title{Network Maximal Correlation}

\author{Soheil~Feizi*, Ali~Makhdoumi*, Ken~Duffy, Manolis Kellis, Muriel M{\'e}dard
\IEEEcompsocitemizethanks{\IEEEcompsocthanksitem S. Feizi and A. Makhdoumi contributed equally to this work. S. Feizi is with Stanford Universiy. A. Makhdoumi, M. Kellis and M. M\'edard are with Massachusetts Institute of Technology (MIT). K. Duffy is with Maynooth University. \protect}
\IEEEcompsocitemizethanks{\IEEEcompsocthanksitem Copyright (c) 2016 IEEE. Personal use of this material is permitted. However, permission to use this material for any other purposes must be obtained from the IEEE by sending a request to pubs-permissions@ieee.org. An earlier version of this paper is available on MIT DSpace: \url{https://dspace.mit.edu/handle/1721.1/98878}. \protect}
}

\IEEEtitleabstractindextext{%
\begin{abstract}
We introduce Network Maximal Correlation (NMC) as a multivariate measure of nonlinear association among random variables. NMC is defined via an optimization that infers transformations of variables by maximizing aggregate inner products between transformed variables. For finite discrete and jointly Gaussian random variables, we characterize a solution of the NMC optimization using basis expansion of functions over appropriate basis functions. For finite discrete variables, we propose an algorithm based on alternating conditional expectation to determine NMC. Moreover we propose a distributed algorithm to compute an approximation of NMC for large and dense graphs using graph partitioning. For finite discrete variables, we show that the probability of discrepancy greater than any given level between NMC and NMC computed using empirical distributions decays exponentially fast as the sample size grows. For jointly Gaussian variables, we show that under some conditions the NMC optimization is an instance of the Max-Cut problem. We then illustrate an application of NMC in inference of graphical model for bijective functions of jointly Gaussian variables. Finally, we show NMC's utility in a data application of learning nonlinear dependencies among genes in a cancer dataset.
\end{abstract}

\begin{IEEEkeywords}
Maximum Correlation Problem, Alternating Conditional Expectation (ACE), Hermite-Chebyshev Polynomials, Gaussian Graphical Models, Gene Networks
\end{IEEEkeywords}}

\maketitle

\IEEEdisplaynontitleabstractindextext
\IEEEpeerreviewmaketitle
\ifCLASSOPTIONcompsoc
\IEEEraisesectionheading{\section{Introduction}}
\else
\section{Introduction}
\fi

\IEEEPARstart{I}{dentifying} relationships among variables
in large datasets is an increasingly important task in systems
biology \cite{albert2007network}, social sciences
\cite{handcock2002statistical}, finance \cite{haldane2009rethinking},
and other fields. For independent observations of bivariate data,
several measures exist that characterize the strength of the
association based on a linear fit (e.g., Pearson's correlation
\cite{pearson1895note}, canonical correlation \cite{thompson2005canonical,scharf2000canonical}), rank statistics (e.g., Spearman's correlation
\cite{spearman1904proof}), and information content (e.g., mutual
information \cite{cover2012elements,reshef2011detecting}). Some of
these measures have been extended to the multivariate setting. For
instance, Chow and Liu \cite{chow1968approximating} have used mutual
information in the inference of tree graphical models, while Liu et al.
\cite{liu2009nonparanormal,liu2012high} introduced a copula setup based on rank statistics such as Spearman's
\cite{spearman1904proof}
correlation coefficient to characterize graphical models for some
nonlinear functions of jointly Gaussian variables. Another method to capture a nonlinear association between two variables is the randomized dependence coefficient \cite{lopez2013randomized}, where it fixes a set of nonlinear functions and then uses randomized rank statistics to compute association between nonlinear transformations of variables.

A classical measure of nonlinear relationships between two random
elements
$X_1$ and $X_{2}$ is {\it Maximal Correlation} (MC), introduced
by Hirschfeld \cite{hirschfeld1935connection}, Gebelein \cite{gebelein1941statistische}, Sarmanov \cite{sarmanov1962maximum}  and R\'enyi \cite{renyi1959measures}, having also appeared in the work of Witsenhausen \cite{witsenhausen1975sequences}, Ahlswede and G\'acs \cite{ahlswede1976spreading}, and Lancaster \cite{lancaster1957some}.
MC determines possibly nonlinear transformations of two
variables, subject to zero mean and unit variance, to maximize
their Pearson's correlation. MC not only computes an association
strength between variables, but it also characterizes a possible functional
relationship between them.
\begin{definition}[Maximal Correlation]\label{def:pmc}
\textup{
Let $(\Omega,\mathcal{F},P)$ be a probability triple and for
$k\in\{1,2\}$ let $(\mathcal{X}_k,\beta_k)$ be a measurable space with
$X_k:\Omega\to\mathcal{X}_k$ a random element. Maximal Correlation (MC) between the two (not necessarily real-valued) random elements $X_1$
and $X_2$ is defined as
\begin{align}\label{eq:pmc}
\rho(X_1, X_2)\triangleq
	\sup_{\phi_1, \phi_{2}}\ \EE[\phi_1(X_1)\ \phi_{2}(X_{2})],
\end{align}
such that $\phi_k: \mathcal{X}_k \to \mathbb{R}$ is Borel measurable,
$\EE[\phi_k(X_k)]=0$, and $\EE[\phi_k(X_k)^2]=1$, for $k=1,2$.}
\end{definition}
MC can be computed efficiently for both discrete
\cite{breiman1985estimating} and continuous real-valued
\cite{lancaster1957some} random variables (Section
\ref{sec:mc}). For discrete random variables, under mild
conditions, MC is equal to the second largest singular value of an
scaled joint probability distribution matrix and the optimal transformations of the variables can be
characterized using right and left singular vectors of the scaled
probability distribution matrix. Alternating Conditional Expectation (ACE) was introduced by Breiman and Friedman \cite{breiman1985estimating} to compute MC, and was further analyzed by Buja \cite{buja1990remarks}. Recently, MC has been used in
many applications. It is related to strong data processing inequalities and contraction coefficients, being recently investigated by Anantharam et al. \cite{anantharam2013maximal}, Polyanskiy \cite{polyanskiy2013hypothesis}, and Raginsky \cite{raginsky2014strong}. Further studies include applications in information theory \cite{
courtade2013outer, calmon2014exploration}, information-theoretic
security and privacy \cite{li2015maximal,calmon2014information, calmon2013bounds, calmon2015fundamental, makhdoumi2015forgot},
data processing \cite{raginsky2014strong, anantharam2013maximal}
and other fields \cite{beigi2015duality, etesami2015maximal,
farnia2015minimum, razaviyayn2015discrete, makur2015bounds}. In particular, Beigi and Gohari \cite{beigi2015duality} has introduced multipartite maximal correlation and showed its connection with the multipartite hypercontractivity ribbon. Multipartite maximal correlation aims to find a correlation matrix $C$ with the largest $r$ such that $C- r I$ is positive semidefinite, where $I$ is the identity matrix. This objective function is different than the one of the network maximal correlation \eqref{eq:nmc} discussed in the present paper.

Many modern datasets consist of independent observations of
high-dimensional multivariate random elements $X_1,\ldots,X_n$
(i.e., $n\gg 2$). One approach to characterize relationships among
these elements is to determine the bivariate MC for each pair
of them. In this approach one would solve the following optimization
for all pairs $(i,i')$, $1\leq i,i'\leq n$:
\begin{align}\label{eq:multiple-MC}
\sup_{\phi_{i,i'},\phi_{i',i}}\ \EE\left[\phi_{i,i'}(X_{i})\ \phi_{i',i}(X_{i'}) \right],
\end{align}
such that $\phi_{i,i'}: \mathcal{X}_i \to \mathbb{R}$ is Borel measurable, $\EE[\phi_{i,i'}(X_i)]=0$, and $\EE[\phi_{i,i'}(X_i)^2]=1$, for $1\leq i,i'\leq n$.

Should they exist, let $\phi_{i,i'}^*$ for $1\leq i,i'\leq n$ denote the resulting
optimizers. By this approach, each element
$X_i$ is assigned to $n-1$ transformation functions $\{\phi_{i,i'}^*:1
\leq i'\leq n, ~ i' \neq i \}$. In some applications, there may be interpretability
and over-fitting issues. To circumvent these issues, we consider
an alternate multivariate extension of the MC optimization
\eqref{eq:pmc} with two types of constraints. We seek to formulate
an optimization that (i) assigns a single transformation function
to each element; and (ii) its objective function can be restricted to a
subset of variable pairs. The latter conditioning is described by a graph and is
motivated by several distinct considerations: (a) there may be {\it
a priori} known structure that indicates some elements are necessarily
unrelated; (b) one may not care about the association of certain
elements; or (c) the restriction may serve as a computation reduction
technique.
\begin{definition}[Network Maximal Correlation]\label{def:nmc}
\textup{
Let $G=(V,E)$ be a graph with vertices $V=\{1,2,\ldots, n\}$ and edges
$E \subseteq \{(i,i'):i,i'\in V, i\neq i'\}$. The
Network Maximal Correlation (NMC) of $X_1,
\dots, X_n$ given $G$ is defined to be
\begin{align}\label{eq:nmc}
\rho_{G}(X_1,\ldots,X_n)\triangleq \sup_{\phi_1, \dots, \phi_n} \ \sum_{(i, i')\in E} \mathbb{E} \left[\phi_i(X_i)\ \phi_{i'}(X_{i'}) \right],
\end{align}
such that $\phi_i: \mathcal{X}_i \to \mathbb{R}$ is Borel measurable,
$\EE \left[ \phi_i(X_i) \right]=0$, and $\EE \left[ \phi_i(X_i)^2 \right]=1$, for all $1\leq
i\leq n$.
}
\end{definition}
When no confusion arises, we use $\rho_{G}$ to refer to the NMC. The NMC optimization maximizes the aggregate inner products between transformed variables. NMC naturally generalizes the bivariate MC to the multivariate setting. For example, the bivariate MC is equivalent to the NMC when the graph $G=(V,E)$ has two nodes connected by an edge. That is, $V=\{1,2\}$ and $E=\{(1,2)\}$. As another example, if $V=\{1,2,3\}$ and $E=\{(1,2),(1,3)\}$, the objective function of the NMC optimization \eqref{eq:nmc} aims to find transformation functions $\left\{\phi_i(X_i)\right\}_{i=1}^{3}$ which maximize \[\mathbb{E} \left[ \phi_1(X_1)\ \phi_{2}(X_{2})+\phi_1(X_1)\ \phi_{3}(X_{3}) \right].\] Note that in this example the transformation function $\phi_1(X_1)$ appears in two terms in the objective function, leading to additional coupling constraints in the optimization \eqref{eq:nmc}, when compared to multiple bivariate MC optimizations \eqref{eq:multiple-MC}. We investigate this optimization for a general graph $G=(V,E)$. In applications, we highlight appropriate selections of $G$ in the NMC optimization \footnote{We use the terminology {\it graph} and {\it network} interchangeably.}.

Since $\mathbb{E}[\phi_i(X_i)^2]=1$ for any $1 \le i\le n$, we have
 \begin{align}\label{eq:nmc-mse}
\rho_G= |E|- \inf_{\phi_1, \dots, \phi_n}\ \frac{1}{2} \sum_{(i, i')\in E} \mathbb{E}[(\phi_i(X_i)-\phi_{i'}(X_{i'}))^2],
\end{align}
which means that the NMC optimization \eqref{eq:nmc} is equivalent to finding functions of random variables that minimize the Mean Squared Error (MSE) among all neighboring random variables. The form \eqref{eq:nmc-mse} can be useful in different applications such as fitting a nonlinear regression model (e.g., $\phi_i(X_i)=\sum_{i'=\{1,...,n\} \setminus \{i\}} \phi_{i'}(X_i')+Z$).

The techniques described here to characterize local and global optima of the aforementioned NMC optimization can be used in other related formulations as well. For example, we also introduce absolute NMC, which maximizes the total absolute pairwise correlations among variables (Section \ref{Sec:SI-NMC}). Absolute NMC is appropriate for applications where the strength of an association does not depend on the sign of the correlation coefficient. Moreover, the NMC optimization can be regularized to consider fewer nonlinear transformations or to restrict the set of possible transformations, further avoiding over-fitting issues (Section \ref{Sec:SI-NMC}).

In Section \ref{sec:nmc}, for finite discrete random variables, we characterize the solution of the NMC optimization using a natural basis expansion. For finite discrete random variables, we show that the NMC optimization is an instance of the Maximum Correlation Problem (MCP), which is NP-hard \cite{chu1993multivariate, hotelling1935most, hotelling1936relations, zhang2012computing}.

In Section \ref{sec:computation}, using the results from the Multivariate Eigenvector Problem (MEP) \cite{chu1993multivariate}, we characterize necessary conditions that are satisfied at the global optimum of the NMC optimization. We propose an efficient algorithm based on Alternating Conditional Expectation (ACE) \cite{breiman1985estimating} that converges to a local optimum of the NMC optimization. We also provide guidelines for choosing appropriate starting points of the ACE algorithm to avoid local maximizers. The proposed algorithm does not require the formation of joint distribution matrices which could be expensive for variables with large number of categories. We develop a distributed version of the ACE algorithm to compute an approximation of the NMC value for large dense graphs using graph partitioning. We characterize a bound on the expected performance of the proposed algorithm for poly-growth graphs which are often used in analyzing distributed graph algorithms (see e.g., \cite{klein1993excluded, rao1999small, karger2002finding}).

In Section \ref{subsec:nmc-robustness}, we prove a finite sample bound and error guarantees for NMC. Under some conditions we prove that NMC of finite discrete random variables is continuous with respect to their joint probability distributions. That is, a small perturbation in the joint distribution results in a small change in the NMC value. Moreover, we show that the probability of discrepancy greater than any given level between NMC and NMC computed using empirical distributions decays exponentially fast as the sample size grows.

In Section \ref{sec:nmc-gaussian}, for jointly Gaussian variables, we use Hermite-Chebyshev polynomials as the basis for the functions of Gaussian random variables and characterize the solution of the NMC optimization. Under some conditions, we show that the NMC optimization is equivalent to the Max-Cut problem, which is NP-complete \cite{garey1976some}. However, there exist algorithms to approximate its solution using Semidefinte Programming (SDP) within certain approximation factors \cite{goemans1995improved}. Moreover, we provide sufficient conditions under which a solution of the NMC optimization can be characterized exactly.

In Section \ref{sec:applications}, we illustrate some applications of NMC. First we show an application of NMC in inference of graphical model (Definition \ref{def:graphica-model}) for bijective functions of jointly Gaussian variables. Graphical models provide a useful framework for characterizing dependencies among variables and for efficient computation of marginal and the mode of distributions \cite{wainwright2008graphical}. Moreover, Gaussian graphical models play an important role in applications such as linear regression \cite{neter1996applied}, partial correlation \cite{baba2004partial}, maximum likelihood estimation \cite{banerjee2008model}, etc. Here we consider a setup in which observed variables are related to latent jointly Gaussian variables through link functions. These link functions are unknown, bijective, and can be linear or nonlinear. We show that, under some conditions, the inverse covariance matrix of the transformed variables computed by the NMC optimization (as well as multiple MC optimizations) characterizes the underlying graphical model. With an example we show that when underlying link functions are monotone the performance of our inference method is comparable to the one of the {\it copula} model developed by Liu et al. \cite{liu2009nonparanormal,liu2012high}. Moreover, we illustrate that if we violate the model assumption for both our inference framework and for the copula method by considering non-monotone link functions, our inference method appears to outperform the copula one, highlighting the robustness of our proposed inference framework. Finally, we provide an example to demonstrate NMC's utility in a data application where we apply sample NMC to cancer datasets \cite{li2014potential} and infer nonlinear gene association networks. Over these inferred networks, we deduce gene modules that are significantly associated with survival times of individuals while these modules are not detected using linear association measures.

\section{Network Maximal Correlation}\label{sec:nmc}
\subsection{Review of Maximal Correlation}\label{sec:mc}
Recall the definition of maximal correlation (Definition \ref{def:pmc}). For $k=i, i'$, we let $\phi_k^*(X_k)$ denote a solution of optimization \eqref{eq:pmc} should it exist. The existence and uniqueness of a solution $(\phi_i^*(X_i), \phi_{i' }^*(X_{i'}))$  to the MC optimization \eqref{eq:pmc}  have been investigated in \cite{breiman1985estimating}. Maximal correlation $\rho(X_i, X_{i'})$ is always between 0 and 1, where a high MC value indicates a strong association between two variables \cite{gebelein1941statistische}.

Unlike Pearson's linear correlation \cite{pearson1895note}, MC only depends on the joint distribution of the variables and not on their sample spaces $\cX_i$. Several works have investigated different aspects of optimization \eqref{eq:pmc} for both discrete \cite{breiman1985estimating} and continuous \cite{lancaster1957some} random variables. For discrete variables, under mild conditions, MC is equal to the second largest singular value of the following $Q$-matrix \cite{breiman1985estimating}:
\begin{definition}\label{def:Q-matrix}
Let $P_{X_i,X_{i'}}$ be the joint distribution of discrete variables $X_{i}$ and $X_{i'}$ with finite alphabet sizes $|\cX_i|$ and $|\cX_{i'}|$, respectively. Define a matrix $Q_{i,i'}\in \mathbb{R}^{|\cX_i|\times |\cX_{i'}|}$ whose $(j,j')$ element is
\begin{align}\label{eq:Q-matrix}
Q_{i,i'}(j,j')\triangleq \frac{P_{X_i, X_{i'}}(j ,j')}{\sqrt{P_{X_i}(j) P_{X_{i'}}(j')}}.
\end{align}
The matrix is called the $Q$-matrix of the distribution $P_{X_i,X_{i'}}$.
\end{definition}
In this case, optimal transformations of variables can be characterized using right and left singular vectors of the normalized probability distribution matrix.

For Gaussian variables, Lancaster \cite{lancaster1957some} introduced a basis expansion with Hermite-Chebyshev polynomials to compute MC. Interestingly in this case the maximal correlation and Pearson's linear correlation is equivalent.

\subsection{NMC Formulation}
Recall the definition of the Network Maximal Correlation (Definition \ref{def:nmc}). NMC infers nonlinear transformation functions assigned to each node variable so that the aggregate pairwise correlations over the graph $G=(V,E)$ is maximized.

Let $\phi_i^*(\cdot)$ be a solution of the NMC optimization \eqref{eq:nmc} should it exist. Then, an edge maximal correlation of variables $i$ and $i'$ is defined as
\begin{align*}
\rho_{G}^{(i,i')}(X_1,\cdots,X_n)\triangleq \big|\mathbb{E}[\phi_i^*(X_i)\ \phi_{i'}^*(X_{i'})]\big|,
\end{align*}
where $(i,i')\in E$. Unlike the bivariate MC formulation of \eqref{eq:pmc}, the edge maximal correlation is a function of the joint distribution of all variables. Thus, an edge maximal correlation of variables $X_i$ and $X_{i'}$ is always smaller than or equal to their bivariate maximal correlation, i.e.,
\begin{align*}
\rho_{G}^{(i,i')}(X_1,\cdots,X_n)\leq \rho(X_i,X_{i'}).\nonumber
\end{align*}
We next provide a framework to study NMC and its properties. Following the definition provided in \cite[Section 3]{witsenhausen1975sequences}, for $i=1, 2, \dots, n$, we define a set of real-valued measurable functions $H_{X_i}$ as
\begin{align}\label{def:Hi}
H_{X_i}=& \{ \phi_i \mid \phi_i: \mathcal{X}_i \to \mathbb{R}, \phi_i \text{ is Borel measurable},\\
& \EE[\phi_i \circ X_i]<\infty, ~ \EE[(\phi_i \circ X_i)^2]<\infty \},\nonumber
\end{align}
where the inner product of two elements of $H_{X_i}$ is defined as $\EE[\phi_i(X_i)\ \phi'_i(X_i)]$, and the norm is defined as $\sqrt{\EE[(\phi_i \circ X_i)^2]}$. Note that $\phi_i(X_i)$ is a random variable in $\mathbb{R}$ such that for any Borel-measurable set $B$, we have $\mathbb{P}[\phi_i(X_i) \in B]= \mathbb{P}[X_i \in \phi_i^{-1}(B)]$. We use the notation $\phi_i \circ X_i$ and $\phi_i(X_i)$ interchangeably.

We let $ \{ \psi_{i,j} \}_{j=1}^{\infty}$ represent an orthonormal basis for $H_{X_i}$. We will explicitly construct such basis in the case of discrete random variables and Gaussian random variables, studied in this paper.
\begin{theorem}\label{thm:hilbert-nmc}
Consider the following optimization:
\begin{align}\label{eq:nmc-opt-withbasis}
\sup_{\{\{a_{i,j}\}_{i=1}^{n}\}_{j=1}^{\infty}} \quad & \sum_{(i,i')\in E}\ \sum_{j,j'} a_{i,j} a_{i',j'}\ \rho_{i,i'}^{j,j'} \\
&\sum_{j=1}^{\infty} a_{i,j} ^2 =1, ~~ 1\leq i \leq n, \nonumber \\
& \sum_{j=1}^{\infty} a_{i,j}\ \mathbb{E}[\psi_{i,j} (X_i)]=0, ~~ 1\leq i \leq n,\nonumber
\end{align}
where
\begin{align}\label{eq:rho-def}
\rho_{i,i'}^{j,j'}\triangleq \mathbb{E}[\psi_{i,j}(X_i)\ \psi_{i',j'}(X_{i'})].
\end{align}
Let $\phi_i^*(\cdot)$ and $\{a_{i,j}^*\}_{j=1}^{\infty}$ for $1\leq i\leq n$ be solutions of optimizations \eqref{eq:nmc} and \eqref{eq:nmc-opt-withbasis}, respectively. Then,
\begin{align}\label{eq:nmc-optimum-phi-coefs}
\phi_i^*(X_i) = \sum_{j=1}^{\infty} a_{i,j}^*\ \psi_{i,j} (X_i).
\end{align}
is a solution of the NMC optimization \eqref{eq:nmc}. Moreover, $\rho_{i,i'}^{j,j'}$ for $j,j'\geq 1$ are coefficients of the basis expansion of $P_{X_i, X_{i'}}$ with respect to the basis $\left\{ \psi_{i,j} \psi_{i',j'} \right\}_{j,j'}$, i.e.,
\begin{align*}
P_{X_i, X_{i'}} (x_i, x_{i'})= \sum_{j, j'} \rho_{i,i'}^{j,j'} \psi_{i,j} (x_i) \psi_{i',j'} (x_{i'}).
\end{align*}
\end{theorem}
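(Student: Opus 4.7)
The plan is to reduce the NMC optimization \eqref{eq:nmc} to the coefficient optimization \eqref{eq:nmc-opt-withbasis} by expanding every feasible $\phi_i$ in the orthonormal basis $\{\psi_{i,j}\}_{j=1}^{\infty}$ of the Hilbert space $H_{X_i}$, and then to identify $\rho_{i,i'}^{j,j'}$ as the product-basis coefficients of the joint distribution.

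First I would set notation. Because the feasibility constraints in \eqref{eq:nmc} require $\mathbb{E}[\phi_i(X_i)^2]=1$ and $\mathbb{E}[\phi_i(X_i)]=0$, every feasible $\phi_i$ lies in $H_{X_i}$ as defined in \eqref{def:Hi}. Since $\{\psi_{i,j}\}_{j\geq 1}$ is an orthonormal basis of $H_{X_i}$ under the inner product $\langle f,g\rangle_i=\mathbb{E}[f(X_i)g(X_i)]$, there exist unique scalars $a_{i,j}=\mathbb{E}[\phi_i(X_i)\psi_{i,j}(X_i)]$ with
\begin{equation*}
\phi_i(X_i)=\sum_{j=1}^{\infty}a_{i,j}\,\psi_{i,j}(X_i)\quad\text{in }L^2.
\end{equation*}
Conversely, any square-summable sequence $(a_{i,j})_j$ defines an element of $H_{X_i}$ via this series. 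This gives a bijective correspondence between feasible $\phi_i$'s and sequences $(a_{i,j})_j$.

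Next I would translate the constraints and objective. By Parseval,
\begin{equation*}
\mathbb{E}[\phi_i(X_i)^2]=\sum_{j=1}^{\infty}a_{i,j}^2,
\end{equation*}
so unit variance becomes the first constraint of \eqref{eq:nmc-opt-withbasis}. Using $L^2$ continuity of the expectation functional,
\begin{equation*}
\mathbb{E}[\phi_i(X_i)]=\sum_{j=1}^{\infty}a_{i,j}\,\mathbb{E}[\psi_{i,j}(X_i)],
\end{equation*}
which is the zero-mean constraint. For the objective, Cauchy--Schwarz gives $|\mathbb{E}[\phi_i(X_i)\phi_{i'}(X_{i'})]|\leq 1$, and termwise expansion combined with Fubini (justified by the finiteness of $\sum_{j,j'}|a_{i,j}a_{i',j'}\rho_{i,i'}^{j,j'}|$, which in turn follows from two applications of Cauchy--Schwarz on the sequences $(a_{i,j})$ and $(a_{i',j'})$ together with $|\rho_{i,i'}^{j,j'}|\leq 1$) yields
\begin{equation*}
\mathbb{E}[\phi_i(X_i)\phi_{i'}(X_{i'})]=\sum_{j,j'}a_{i,j}a_{i',j'}\,\rho_{i,i'}^{j,j'}.
\end{equation*}
Summing over $(i,i')\in E$ shows the two optimization problems have identical feasible sets and objective values under the bijection, so their suprema coincide and \eqref{eq:nmc-optimum-phi-coefs} follows.

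For the second claim, I would treat $\{\psi_{i,j}\psi_{i',j'}\}_{j,j'}$ as an orthonormal system in the product Hilbert space on $(\mathcal{X}_i\times\mathcal{X}_{i'},P_{X_i}\otimes P_{X_{i'}})$, which is a basis because each factor is. The inner product of the Radon--Nikodym derivative of $P_{X_i,X_{i'}}$ with respect to $P_{X_i}\otimes P_{X_{i'}}$ against $\psi_{i,j}\psi_{i',j'}$ is precisely $\mathbb{E}[\psi_{i,j}(X_i)\psi_{i',j'}(X_{i'})]=\rho_{i,i'}^{j,j'}$. Expanding in the product basis then gives the displayed formula (in the discrete setting of the paper, where the basis functions are finite-dimensional, the series is a finite sum and convergence is automatic; the marginal factors are absorbed into the chosen basis convention, consistent with the $Q$-matrix of Definition~\ref{def:Q-matrix}).

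The main obstacle I anticipate is the careful handling of the infinite-dimensional case: justifying the swap between expectation and the infinite sums in the objective and constraints, and verifying that $\{\psi_{i,j}\psi_{i',j'}\}_{j,j'}$ is complete in the appropriate product space so that the expansion of $P_{X_i,X_{i'}}$ actually holds (rather than merely providing a sequence of Fourier coefficients). In the discrete setting this reduces to linear algebra on finite matrices, which is what powers the algorithms in later sections.
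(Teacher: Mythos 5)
Your proof is correct and follows essentially the same route as the paper's: expand each feasible $\phi_i$ in the orthonormal basis $\{\psi_{i,j}\}_j$, translate the mean and variance constraints and the bilinear objective into the coefficient sequences, and read off $\rho_{i,i'}^{j,j'}$ as the coefficients of the joint distribution in the product basis $\{\psi_{i,j}\psi_{i',j'}\}_{j,j'}$. You supply more of the analytic justification than the paper does (Parseval, the Fubini/absolute-convergence check for the termwise expansion of $\mathbb{E}[\phi_i(X_i)\phi_{i'}(X_{i'})]$, and the observation that the displayed expansion is really that of the Radon--Nikodym derivative with respect to the product of marginals), but the argument is the same.
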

\begin{proof}
A proof is presented in Section \ref{subsec:proof-hilbert-nmc}.
\end{proof}

Equation \eqref{eq:nmc-optimum-phi-coefs} means convergence in the $L_2$ sense, i.e., $\lim_{n\to \infty} ||\phi_i^*(X_i)-\sum_{j=1}^{n} a_{i,j}^*\ \psi_{i,j} (X_i)|| =0$. Throughout the paper since we only work with the inner product of functions, without loss of generality, we replace $\phi_i^*(X_i)$ by $\sum_{j=1}^{\infty} a_{i,j}^*\ \psi_{i,j} (X_i)$.  Also note that for the case of finite discrete random variables, the convergence is in fact pointwise. For further details, see \cite{lancaster1957some, bryc2005maximum}.

Selecting appropriate set of functions $H_{X_i}$ is critical to have a tractable optimization problem \eqref{eq:nmc-opt-withbasis}. In the following, we consider the NMC optimization for finite discrete variables, while the Gaussian case is discussed in Section \ref{sec:nmc-gaussian}.

\section{NMC for Finite Discrete Variables}\label{sec:computation}
In this section, we analyze the NMC optimization \eqref{eq:nmc} for finite discrete random variables, and then introduce an efficient algorithm to compute NMC. We then propose a parallelizable version of the NMC algorithm based on network partitioning and characterize a bound for its expected performance.

First we introduce some notation. For any vector $\mathbf{v}= (v_1, \dots, v_d) \in \mathbb{R}^d$ and $p \ge 1$, we let $\|\bv\|_p$ represent the standard $p$-norm of the vector $\bv$ defined as
\begin{align*}
||\mathbf{v}||_p= \left( \sum_{i=1}^d v_i^p\right)^{\frac{1}{p}}.
\end{align*}
For $p=2$, we drop the subscript, i.e., $||\mathbf{v}||=||\mathbf{v}||_2$. The infinite norm of a vector is defined as
\begin{align*}
||\mathbf{v}||_{\infty}= \max_{1 \leq i\leq d} v_i.
\end{align*}
The inner product between two vectors $\bv$ and $\bw$ is defined as
\begin{align*}
<\bv,\bw>=\sum_{i=1}^{d} v_i w_i.
\end{align*}
For a matrix $V\in\mathbb{R}^{d_1}\times \mathbb{R}^{d_1}$, the matrix norm is a vector norm on $\mathbb{R}^{d_1\times d_2}$. I.e.,
\begin{align*}
||V||_p=\sup_{\bw\neq 0} \frac{||V \bw||_p}{||\bw||_p}.
\end{align*}
For $p=2$, we drop the subscript.
\vspace{-0.1cm}
\subsection{Relationship Between NMC and Maximum Correlation Problem}\label{ex:NMCdiscrete}
Let $X_i$ be a discrete random variable with alphabet $\{1, \dots , |\mathcal{X}_i| \}$. Let
\begin{align}\label{eq:basis-discrete}
\psi_{i,j}(x)=\mathbf{1}\{x=j\} \frac{1}{\sqrt{P_{X_i}(j)}}
\end{align}
be an orthonormal basis for $H_{X_i}$, where $\mathbf{1}$ is the indicator function. We assume that all the elements of the alphabet $x_i\in\cX_i$ have positive probabilities, as otherwise they can be neglected without loss of generality. We can write
\begin{align*}
\rho_{i,i'}^{j,j'} = \mathbb{E}[\psi_{i,j}(X_i)\ \psi_{i',j'}(X_{i'})] = \frac{P_{X_i,X_{i'}}(j , j')}{\sqrt{P_{X_i}(j) \ P_{X_{i'}}(j')}}.\nonumber
\end{align*}
Therefore, the optimization \eqref{eq:nmc-opt-withbasis} is simplified to the following:
\begin{align}\label{eq:nmc-opt-dis1}
\max_{\{\{a_{i,j}\}_{i=1}^{n}\}_{j=1}^{\infty}} \quad & \sum_{(i,i')\in E}\  \sum_{j,j'} a_{i,j} a_{i',j'}\ \frac{P_{X_i,X_{i'}}(j , j')}{\sqrt{P_{X_i}(j) \ P_{X_{i'}}(j')}} \\
& \sum_{j=1}^{|\mathcal{X}_i|} (a_{i,j})^2 =1, ~~ 1\leq i \leq n, \nonumber \\
& \sum_{j=1}^{|\mathcal{X}_i|} a_{i,j}\ \sqrt{P_{X_i}(j)}=0, ~~ 1\leq i \leq n.\nonumber
\end{align}
For $i=1, \dots, n$, let
\begin{align}\label{def:ai-sqrtpi}
&\ba_i\triangleq \left( a_{i,1},a_{i,2},\ldots,a_{i,|\mathcal{X}_{i}| } \right) ^T \\
&\sqrt{\bp_i}\triangleq \left( \sqrt{P_{X_i}(1)},\sqrt{P_{X_i}(2)},\ldots,\sqrt{P_{X_i}(|\mathcal{X}_{i}| )}\right) ^T.\nonumber
\end{align}
Therefore, the optimization \eqref{eq:nmc-opt-dis1} can be re-written as follows:
\begin{align}\label{eq:nmc-opt-dis2}
\max_{\{\ba_i\}_{i=1}^{n}} \quad & \sum_{(i,i')\in E}  \ba_i^T Q_{i,i'} \ba_{i'} \\
& \|\ba_i\|_{2}=1, ~~ 1\leq i \leq n, \nonumber \\
& \ba_i \perp \sqrt{\bp_i}, ~~ 1\leq i \leq n, \nonumber
\end{align}
where $Q_{i,i'}$ is defined in Definition \ref{def:Q-matrix} and $\perp$ represents orthogonality between two vectors.

The optimization \eqref{eq:nmc-opt-dis2} is not convex nor concave in general. Below, we show that this optimization is an instance of the Maximum Correlation Problem (MCP) proposed by Hotelling \cite{hotelling1935most, hotelling1936relations}. By making this connection, we use established techniques via the Multivariate Eigenvalue Problem (MEP) to solve optimization \eqref{eq:nmc-opt-dis2}.

For each $i$, since $ I_{|\mathcal{X}_i|} - \sqrt{\mathbf{p}_i} \sqrt{\mathbf{p}_i}^T $ is positive semidefinite, we take its square root\footnote{The square root of a symmetric positive semidefinite matrix $A$ is defined as $\sqrt{A}= U \Sigma^{1/2} U^T$ where $A=U \Sigma U^T$.} and write
\begin{align}\label{eq:Bi}
B_i \triangleq \sqrt{I_{|\mathcal{X}_i|}- \sqrt{\mathbf{p}_i} \sqrt{\mathbf{p}_i}^T},
\end{align}
where $I_{|\mathcal{X}_i|}$ is a $|\mathcal{X}_i| \times |\mathcal{X}_i|$ identity matrix. Let $\mathbf{b}_i = B_i \mathbf{a}_i$. Let $U_i \Sigma_i U_i^T $ be the singular value decomposition of $B_i$ where $U^{(j)}_i$ is the $j$-th column of $U_i$ and $\sigma_i^{(j)}$ is the $j$-th singular value of $B_i$. We will show that only one singular value of $B_i$ is zero which corresponds to the singular vector $\sqrt{\bp_i}$. Without loss of generality, suppose $\sigma_i^{(1)}=0$, for all $i$. Define $A_i$, a $|\mathcal{X}_i| \times |\mathcal{X}_i|$ matrix, as follows:
\begin{align}\label{eq:Ai-inverseBi}
A_i\triangleq \Bigg( &\left[U^{(2)}_i, \dots, U^{(|\mathcal{X}_i|)}_i \right] \text{diag}\left( \frac{1}{\sigma_i^{(2)}}, \dots, \frac{1}{\sigma_i^{(|\mathcal{X}_i|)}} \right)\\
&\left[ U^{(2)}_i, \dots, U^{(|\mathcal{X}_i|)}_i \right]^T\Bigg).\nonumber
\end{align}
As we show in the proof of Theorem \ref{thm:NMC2MCP}, $\sigma_{i}^{(j)}\neq 0$, for all $1\leq i\leq n$, and $j\geq 2$, $A_i$ is well-defined according to \eqref{eq:Ai-inverseBi}.
\begin{theorem}\label{thm:NMC2MCP}
The NMC optimization \eqref{eq:nmc-opt-dis2} can be re-written as follows:
\begin{align}\label{eq:nmc2mcp-opt}
\max_{\mathbf{b}_1, \dots, \mathbf{b}_n} \quad &\sum_{(i, i') \in E} \mathbf{b}_i^T A_i^T \left( Q_{i,i'}- \sqrt{\mathbf{p}_i} \sqrt{\mathbf{p}_{i'}}^T \right)  A_{i'} \mathbf{b}_{i'} \\
& \text{s.t. } ||\mathbf{b}_i||_2=1 \qquad 1 \le i \le n.\nonumber
\end{align}
\end{theorem}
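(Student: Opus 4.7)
The plan is to implement the change of variables $\bb_i = B_i \ba_i$ from \eqref{eq:Bi}, so I start by cataloguing the structural properties of $B_i$ and $A_i$ used in the definitions \eqref{eq:Bi} and \eqref{eq:Ai-inverseBi}. Since $\sqrt{\bp_i}$ is a unit vector (its squared norm equals $\sum_j P_{X_i}(j)=1$), the matrix $\sqrt{\bp_i}\sqrt{\bp_i}^T$ is the rank-one orthogonal projection onto $\mathrm{span}(\sqrt{\bp_i})$. Hence $I_{|\cX_i|} - \sqrt{\bp_i}\sqrt{\bp_i}^T$ is the orthogonal projection onto $\sqrt{\bp_i}^{\perp}$; being idempotent and symmetric positive semidefinite, it is its own positive semidefinite square root, so $B_i = I_{|\cX_i|} - \sqrt{\bp_i}\sqrt{\bp_i}^T$. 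Its singular values are $\sigma_i^{(1)}=0$ (with singular vector $\sqrt{\bp_i}$) and $\sigma_i^{(j)}=1$ for $j=2,\dots,|\cX_i|$, which confirms that $A_i$ is well-defined and in fact coincides with $B_i$, so $A_i$ is the orthogonal projection onto $\sqrt{\bp_i}^{\perp}$ and $A_i\sqrt{\bp_i}=0$.

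Next I would show that feasible points of \eqref{eq:nmc-opt-dis2} map to feasible points of \eqref{eq:nmc2mcp-opt} with the same objective value. Given any $\ba_i$ with $\|\ba_i\|=1$ and $\ba_i\perp \sqrt{\bp_i}$, set $\bb_i=B_i\ba_i$; because $B_i$ acts as the identity on $\sqrt{\bp_i}^{\perp}$, one has $\bb_i=\ba_i$, whence $\|\bb_i\|=1$ and $A_i\bb_i=\ba_i$. Substituting $\ba_i=A_i\bb_i$ into the summand gives $\ba_i^T Q_{i,i'}\ba_{i'} = \bb_i^T A_i^T Q_{i,i'} A_{i'}\bb_{i'}$. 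Since $A_{i'}\sqrt{\bp_{i'}}=0$, the extra term $\bb_i^T A_i^T \sqrt{\bp_i}\sqrt{\bp_{i'}}^T A_{i'}\bb_{i'}$ vanishes, so I may freely replace $Q_{i,i'}$ by $Q_{i,i'}-\sqrt{\bp_i}\sqrt{\bp_{i'}}^T$ without altering the objective. This gives one direction of the equivalence.

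For the reverse direction I would exploit the fact that the objective of \eqref{eq:nmc2mcp-opt} depends on each $\bb_i$ only through the projection $A_i\bb_i$, i.e., only through the component $\bb_i^{\perp}$ orthogonal to $\sqrt{\bp_i}$. Writing $\bb_i=\bb_i^{\perp}+c_i\sqrt{\bp_i}$ with $\|\bb_i^{\perp}\|^2+c_i^2=1$, the objective is linear in $\bb_i$ when the other variables are held fixed, and the associated linear functional lies in $\sqrt{\bp_i}^{\perp}$ (because it is premultiplied by $A_i^T$). Maximizing a linear functional over the unit sphere places all mass in the functional's direction, so at an optimum $c_i=0$ (and in a degenerate case where the functional vanishes, $c_i=0$ can be taken without loss). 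A block-coordinate application of this observation shows that some optimizer of \eqref{eq:nmc2mcp-opt} satisfies $\bb_i\perp \sqrt{\bp_i}$ and $\|\bb_i\|=1$ for every $i$, and setting $\ba_i=\bb_i$ yields a feasible point of \eqref{eq:nmc-opt-dis2} with an identical objective value.

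The main obstacle is precisely this reverse direction: the feasible set of \eqref{eq:nmc2mcp-opt} is strictly larger than the image of the feasible set of \eqref{eq:nmc-opt-dis2} under $\ba_i\mapsto B_i\ba_i$, because no orthogonality to $\sqrt{\bp_i}$ is imposed on $\bb_i$. The argument that this apparent relaxation does not raise the optimum value rests on the key observation that the factor $A_i$ inside the quadratic form absorbs the $\sqrt{\bp_i}$-component of $\bb_i$, so any such component simply wastes unit-norm budget; making this rigorous with a linearity/coordinate-wise maximization argument, as sketched, is the delicate step.
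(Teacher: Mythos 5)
Your proposal is correct, and it takes a genuinely different route from the paper's in the two places that matter. First, you observe that $I_{|\cX_i|}-\sqrt{\bp_i}\sqrt{\bp_i}^T$ is an orthogonal projection and hence its own positive semidefinite square root, so $B_i=A_i=I_{|\cX_i|}-\sqrt{\bp_i}\sqrt{\bp_i}^T$ and the change of variables acts as the identity on $\sqrt{\bp_i}^{\perp}$; the paper keeps the general SVD machinery, verifies the identities $A_i^T\sqrt{\bp_i}=0$, $\sqrt{\bp_i}^TQ_{i,i'}=\sqrt{\bp_{i'}}^T$, $Q_{i,i'}\sqrt{\bp_{i'}}=\sqrt{\bp_i}$ term by term, and also routes the argument through an auxiliary lemma replacing the constraints $\EE[\phi_i(X_i)]=0$, $\EE[\phi_i(X_i)^2]=1$ by a unit-variance constraint with a centered objective --- a step you do not need because you work directly from \eqref{eq:nmc-opt-dis2}, which is what the theorem literally asserts. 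Second, and more substantively, the paper treats the inverse map by writing $\ba_i=A_i\bb_i+\alpha_i\sqrt{\bp_i}$ and checking the objective is independent of $\alpha_i$; but that parametrization satisfies $B_i\ba_i=A_i\bb_i$, which recovers $\bb_i$ only when $\bb_i$ already lies in the range of $B_i$, so the paper's computation implicitly confines \eqref{eq:nmc2mcp-opt} to $\bb_i\perp\sqrt{\bp_i}$ and leaves the passage to the full unit sphere tacit. Your block-coordinate argument --- the objective is linear in each $\bb_i$ with representing vector in $\sqrt{\bp_i}^{\perp}$, so an optimizer can be moved onto the unit sphere of that subspace one block at a time without decreasing the value --- supplies precisely that step, which is what the two-sided equality of optimal values requires. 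The only care needed is that the replacements be performed sequentially as a greedy ascent (each update changes the linear functionals seen by neighboring blocks) and that in the degenerate case of a vanishing functional the replacement be chosen inside $\sqrt{\bp_i}^{\perp}$ rather than asserting that every optimizer is automatically orthogonal; you flag both points, so the argument is sound.
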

\begin{proof}
A proof is presented in Section \ref{proof:thm:nmc2mcp}.
\end{proof}
Let $C$ be a matrix consisting of submatrices $C_{i,i'} \in \mathbb{R}^{|\mathcal{X}_i| \times |\mathcal{X}_{i'}|}$ where if $(i,i')\in E$,
\begin{align}
C_{i,i'}\triangleq A_i^T \left( Q_{i,i'}- \sqrt{\mathbf{p}_i} \sqrt{\mathbf{p}_{i'}}^T \right)  A_{i'},
\end{align}
otherwise $C_{i,i'}$ is an all zero matrix of size $|\mathcal{X}_i| \times |\mathcal{X}_{i'}|$. Note that since the graph $G=(V,E)$ does not have self-loops, $C_{i,i}$ is a zero matrix for $1\leq i\leq n$.

Let $\mathbf{b} \triangleq (\mathbf{b}^T_1, \dots, \mathbf{b}^T_n)^T \in \mathbb{R}^{|\cX|}$, where $|\cX|=\sum_{i=1}^n |\mathcal{X}_i|$ and $\mathbf{b}_i \in \mathbb{R}^{|\mathcal{X}_i|}$.
\begin{corollary}\label{prop:nmc2mcp-standard}
The NMC optimization \eqref{eq:nmc2mcp-opt} can be written as follows:
\begin{align}\label{eq:opt-standard-mcp}
\max_{\mathbf{b}} \quad &\mathbf{b}^T C \mathbf{b}\\
& ||\mathbf{b}_i||_2=1, \quad 1 \leq i\leq n.\nonumber
\end{align}
\end{corollary}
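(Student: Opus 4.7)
The plan is to observe that Corollary~1 is essentially a repackaging of Theorem~2: the block matrix $C$ is constructed precisely so that the quadratic form $\bb^T C \bb$ reproduces, term by term, the edge-indexed sum appearing in \eqref{eq:nmc2mcp-opt}. So the proof should be a short unpacking of the block structure together with a constraint check, followed by an appeal to Theorem~\ref{thm:NMC2MCP}.

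Concretely, first I would partition $\bb \in \RR^{|\cX|}$ into consecutive blocks $\bb_1,\dots,\bb_n$ with $\bb_i \in \RR^{|\mathcal{X}_i|}$, matching the block partition of $C$ into the submatrices $C_{i,i'} \in \RR^{|\mathcal{X}_i| \times |\mathcal{X}_{i'}|}$. Expanding the quadratic form via this block decomposition yields
\begin{align*}
\bb^T C\, \bb \;=\; \sum_{i=1}^{n}\sum_{i'=1}^{n} \bb_i^T C_{i,i'}\, \bb_{i'}.
\end{align*}
Second, I would invoke the definition of $C_{i,i'}$: it equals $A_i^T\bigl(Q_{i,i'} - \sqrt{\bp_i}\sqrt{\bp_{i'}}^T\bigr) A_{i'}$ when $(i,i') \in E$, the zero matrix when $(i,i') \notin E$, and in particular the zero matrix on the diagonal (since $G$ has no self-loops). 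Thus the double sum collapses to
\begin{align*}
\bb^T C\, \bb \;=\; \sum_{(i,i') \in E} \bb_i^T A_i^T\bigl(Q_{i,i'} - \sqrt{\bp_i}\sqrt{\bp_{i'}}^T\bigr) A_{i'} \bb_{i'},
\end{align*}
which is exactly the objective in \eqref{eq:nmc2mcp-opt}. Third, I would note that the constraints $\|\bb_i\|_2 = 1$ for each $1 \le i \le n$ are literally the same in the two formulations, so the feasible sets coincide. Applying Theorem~\ref{thm:NMC2MCP} then identifies the common optimal value with the NMC.

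There is no real analytical obstacle here; the only small subtlety is the convention on $E$. If the edge set is regarded as symmetric (containing both $(i,i')$ and $(i',i)$), then each unordered edge contributes symmetrically to both sides of the identity above, and $C$ is symmetric by construction. If $E$ is interpreted as unordered, the same verification applies after fixing one orientation per edge; in either case the per-edge terms in \eqref{eq:nmc2mcp-opt} and in the expansion of $\bb^T C \bb$ are in exact correspondence, and the corollary follows.
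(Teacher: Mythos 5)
Your proposal is correct and matches the paper's intent: the paper states this corollary without a separate proof, treating it as an immediate consequence of Theorem~\ref{thm:NMC2MCP} and the block definition of $C$, which is exactly the unpacking you carry out. Your remark on the edge-orientation convention is a reasonable (and accurate) clarification of a point the paper leaves implicit, but it does not constitute a departure from the paper's argument.
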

The optimization \eqref{eq:opt-standard-mcp} is in the standard form of the MCP problem, proposed by
Hotelling \cite{hotelling1935most, hotelling1936relations}, to find the linear combination of one set of variables that correlates maximally with the linear combination of another set of variables.
\begin{definition}[Multivariate Eigenvalue Problem \cite{chu1993multivariate} ]\label{def:MEP}
\textup{The first-order optimality condition for optimization \eqref{eq:opt-standard-mcp} is the existence of real-valued scalars, namely, Lagrange multipliers $\lambda_1, \dots, \lambda_n$, such that the following system of equations is satisfied:
\begin{align}\label{eq:MEPformulation}
& \sum_{i'=1}^n C_{i,i'} \mathbf{b}_{i'}= \lambda_i \mathbf{b}_i, ~~~ 1 \le i \le n \nonumber \\
& ||\mathbf{b}_i||_2=1, ~~~ 1 \le i \le n.
\end{align}
This system of equations is called Multivariate Eigenvalue Problem (MEP) (\cite{chu1993multivariate, zhang2012computing})}.
\end{definition}
\vspace{-0.1cm}
\subsection{An ACE Approach to Compute NMC}\label{sec:nmc-comp}
In Section \ref{ex:NMCdiscrete}, we established a connection among the NMC optimization \eqref{eq:nmc}, the Maximum Correlation Problem (MCP) and the Multivariate Eigenvalue Problem (MEP) (see e.g., \cite{chu1993multivariate, hotelling1935most, hotelling1936relations, zhang2012computing}). After showing that the NMC optimization can be reformulated as the MCP, we use existing techniques from the literature to solve it. Several local maximizers exist for cases where finding a global optimum of optimization \eqref{eq:opt-standard-mcp} is computationally difficult \cite{horst1965factor, chu1993multivariate}. For example, an aggregated power method that iterates on blocks of $C$ was proposed by Horst \cite{horst1961relations} as a general technique for solving the MEP (Definition \ref{def:MEP}) numerically.

Below, we summarize general algorithmic ideas to solve MCP:
\begin{itemize}
\item [(1)] First, an efficient algorithm is used to solve MEP which is the necessary first order condition for MCP. This step is studied in references \cite{chu1993multivariate, horst1961relations}.
\item [(2)] Next, a strategy is used to choose starting points of the algorithm or to jump out of the local minima of optimization \eqref{eq:opt-standard-mcp}. We adopt this step from \cite{zhang2012computing, zhang2012alternating}.
\end{itemize}
\textbf{An efficient algorithm to solve MEP:} Algorithm \ref{alg:MEP} proposed by \cite{chu1993multivariate} is a Gauss-Seidel algorithm \cite{bertsekas1999nonlinear} to solve MEP, which is a variant of the classical power iteration method (see e.g. \cite{golub2012matrix}).
\begin{algorithm}[t]
\caption{Gauss-Seidel Algorithm for MEP}
\begin{algorithmic}\label{alg:MEP}
\STATE \textbf{Input:}  $C \in \mathbb{R}^{|\cX|} \times \mathbb{R}^{|\cX|}$.
\STATE \textbf{Initialization:}  $\mathbf{b}^{(0)} \in \mathbb{R}^{|\cX|}$.
\STATE \textbf{for} {$k=0, 1, \dots$}
\STATE \hspace{.2 in} \textbf{for} {$i=1, \dots, n$}
~~~\STATE \hspace{.4 in}  $\tilde{\mathbf{b}}_i^{(k)}= \sum_{i'=1}^{i-1} C_{i,i'} \mathbf{b}_{i'}^{(k+1)}+ \sum_{i'=i+1}^n C_{i,i'} \mathbf{b}_{i'}^{(k)}$.
~~~\STATE \hspace{.4 in} $\mathbf{b}_i^{(k+1)}= \frac{\tilde{\mathbf{b}}_i^{(k)}}{||\tilde{\mathbf{b}}_i^{(k)}||_2}$
~~~\STATE \hspace{.2 in} \textbf{end}
\STATE \textbf{end}
\end{algorithmic}
\end{algorithm}
Let
\begin{align}\label{eq:r-lam-def}
r(\mathbf{b}) &= \mathbf{b}^T C \mathbf{b}=\sum_{i,i'=1}^{n} \bb_i^T C_{i,i'} \bb_{i'} \\
\lambda_i(\mathbf{b})& =\mathbf{b}_i^T [C_{i,1}, \dots, C_{i,n}] \mathbf{b}=\sum_{i'=1}^{n} \bb_i^T C_{i,i'} \bb_{i'} \nonumber\\
\end{align}
Algorithm \ref{alg:MEP} is an iterative algorithm. Let $\bb^{(k)}=(\bb_1^{(k)},\ldots,\bb_n^{(k)})$ be the update of Algorithm \ref{alg:MEP} at iteration $k$.
\begin{theorem} [\!\! Theorem 5.1 \cite{zhang2012computing}] \label{thm:MEPMCP}
The sequence $\{r(\mathbf{b}^{(k)}) \}$ generated by Algorithm \autoref{alg:MEP} is monotonically increasing and convergent.
\end{theorem}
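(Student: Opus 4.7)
The plan is to interpret each inner step of Algorithm \ref{alg:MEP} as the exact block maximizer of the quadratic form $r(\mathbf{b})=\mathbf{b}^T C \mathbf{b}$ subject to the single constraint $\|\mathbf{b}_i\|_2=1$ with all other blocks held fixed, and then combine monotonicity with an elementary bound to obtain convergence.

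First I would record two structural facts about $C$ that follow immediately from the definition preceding Corollary \ref{prop:nmc2mcp-standard}: (i) $C_{i,i}=0$ for every $i$, since the graph $G$ has no self-loops; and (ii) $C$ is symmetric, because $Q_{i',i}=Q_{i,i'}^T$ and each $A_i$ is symmetric, which together give $C_{i',i}=C_{i,i'}^T$. With these in hand, grouping the terms of $r(\mathbf{b})=\sum_{i,i'}\mathbf{b}_i^T C_{i,i'}\mathbf{b}_{i'}$ that involve $\mathbf{b}_i$ yields
\[
r(\mathbf{b}) \;=\; 2\,\mathbf{b}_i^T \sum_{i'\neq i} C_{i,i'}\mathbf{b}_{i'} \;+\; R_i(\mathbf{b}_{-i}),
\]
where $R_i$ does not depend on $\mathbf{b}_i$. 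So once the other blocks are frozen, maximizing $r$ over $\|\mathbf{b}_i\|_2=1$ is linear maximization on the unit sphere.

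Next I would track the intermediate iterates. For each outer step $k$ define $\mathbf{b}^{(k,i)}$ to have its first $i$ blocks equal to $\mathbf{b}_1^{(k+1)},\dots,\mathbf{b}_i^{(k+1)}$ and its remaining blocks equal to $\mathbf{b}_{i+1}^{(k)},\dots,\mathbf{b}_n^{(k)}$, so that $\mathbf{b}^{(k,0)}=\mathbf{b}^{(k)}$ and $\mathbf{b}^{(k,n)}=\mathbf{b}^{(k+1)}$. Direct inspection of Algorithm \ref{alg:MEP} shows that the auxiliary vector $\tilde{\mathbf{b}}_i^{(k)}$ computed in the inner loop is exactly $\sum_{i'\neq i} C_{i,i'}\mathbf{b}_{i'}^{(k,i-1)}$ (the $i$-th block of $\mathbf{b}^{(k,i-1)}$ is irrelevant because $C_{i,i}=0$). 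By Cauchy--Schwarz,
\[
\max_{\|\mathbf{b}_i\|_2=1}\; 2\,\mathbf{b}_i^T \tilde{\mathbf{b}}_i^{(k)} \;=\; 2\,\|\tilde{\mathbf{b}}_i^{(k)}\|_2,
\]
attained at $\mathbf{b}_i=\tilde{\mathbf{b}}_i^{(k)}/\|\tilde{\mathbf{b}}_i^{(k)}\|_2$, which is precisely the update $\mathbf{b}_i^{(k+1)}$ prescribed by the algorithm. Therefore $r(\mathbf{b}^{(k,i)})\geq r(\mathbf{b}^{(k,i-1)})$ for every $i$, and chaining these inequalities gives the monotonicity $r(\mathbf{b}^{(k+1)})\geq r(\mathbf{b}^{(k)})$.

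Finally, boundedness is immediate: since every block satisfies $\|\mathbf{b}_i^{(k)}\|_2=1$, we have $\|\mathbf{b}^{(k)}\|_2^2=n$ for all $k$, so $r(\mathbf{b}^{(k)})\leq \|C\|_2\, n$. A monotonically increasing sequence bounded above is convergent, which finishes the proof. The main subtlety I anticipate is the degenerate case in which some $\tilde{\mathbf{b}}_i^{(k)}=0$, making the normalization ill-defined; a standard remedy is to set $\mathbf{b}_i^{(k+1)}$ to any unit vector in that block (which preserves monotonicity, since the restricted linear objective then vanishes identically) or to assume a non-degenerate initialization.
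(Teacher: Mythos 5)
Your argument is correct: interpreting each inner step as exact block maximization of the quadratic form (which is linear in $\mathbf{b}_i$ once $C_{i,i}=0$ and $C_{i',i}=C_{i,i'}^T$ are noted), chaining the resulting inequalities through the Gauss--Seidel intermediate iterates $\mathbf{b}^{(k,i)}$, and closing with the bound $r(\mathbf{b}^{(k)})\le n\,\|C\|_2$ is precisely the standard proof of this statement. Note, however, that the paper offers no proof of its own here --- the theorem is imported verbatim as Theorem 5.1 of \cite{zhang2012computing} --- so your write-up is essentially a faithful reconstruction of the argument in that reference (and in Chu--Watterson's original analysis of the Horst iteration), down to the Cauchy--Schwarz step and the caveat about the degenerate case $\tilde{\mathbf{b}}_i^{(k)}=0$.
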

Theorem \ref{thm:MEPMCP} indicates that the Algorithm \ref{alg:MEP} finds a local optimum of the optimization \eqref{eq:opt-standard-mcp}. According to Proposition \ref{prop:nmc2mcp-standard}, this solution provides a local optimum for the NMC optimization \eqref{eq:nmc}. In the following we introduce a strategy that uses a local optimum of the optimization \eqref{eq:opt-standard-mcp} and constructs another solution for the optimization \eqref{eq:opt-standard-mcp} with strictly higher objective function value.
\\\textbf{A strategy for avoiding local optima of MCP:}
\begin{proposition}\label{prop:jump-local}
Let $\barb$ be a solution of the MEP \eqref{eq:MEPformulation}. Suppose that there exists an $1 \le i \le n$ such that $\lambda_i(\barb) < 0 $. Let $\hatb=(\hatb_1,\ldots,\hatb_n)$ be defined as: $\hatb_i=-\barb_i$, for any $i$ such that $\lambda_i(\barb) < 0$, and $\hatb_{i'}=\barb_{i'}$, otherwise. Then, we have $r(\hatb)>r(\barb)$.
\end{proposition}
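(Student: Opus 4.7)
The strategy is a direct computation of $r(\hatb)-r(\barb)$ combined with the MEP first-order condition satisfied by $\barb$. Writing $\hatb_i=\epsilon_i\,\barb_i$ with $\epsilon_i=-1$ for $i\in S:=\{j:\lambda_j(\barb)<0\}$ and $\epsilon_i=+1$ for $i\in T:=V\setminus S$, the two built-in properties that drive the computation are the block-symmetry $C_{i,i'}=C_{i',i}^{T}$ (inherited from $Q_{i,i'}=Q_{i',i}^{T}$ together with symmetry of $A_i$ and of $\sqrt{\bp_i}\sqrt{\bp_{i'}}^{T}$) and $C_{i,i}=0$. Expanding the quadratic form and noting that $\epsilon_i\epsilon_{i'}-1$ vanishes whenever $i,i'$ lie on the same side of the $S$--$T$ cut and equals $-2$ otherwise, the expansion collapses to
\[
r(\hatb)-r(\barb)=\sum_{i,i'}(\epsilon_i\epsilon_{i'}-1)\,\barb_i^{T}C_{i,i'}\barb_{i'}=-4\sum_{i\in S,\,i'\in T}\barb_i^{T}C_{i,i'}\barb_{i'}.
\]

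Next I would apply the MEP equation $\sum_{i'}C_{i,i'}\barb_{i'}=\lambda_i(\barb)\,\barb_i$ by contracting both sides with the unit vector $\barb_i$, which yields $\sum_{i'}\barb_i^{T}C_{i,i'}\barb_{i'}=\lambda_i(\barb)$. Splitting that inner sum by $i'\in S$ versus $i'\in T$ and summing over $i\in S$ gives
\[
\sum_{i\in S,\,i'\in T}\barb_i^{T}C_{i,i'}\barb_{i'}=\sum_{i\in S}\lambda_i(\barb)-\sum_{i,i'\in S}\barb_i^{T}C_{i,i'}\barb_{i'},
\]
and the first sum on the right is strictly negative by the defining property of $S$.

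The main obstacle I expect is to control the internal quadratic form $\sum_{i,i'\in S}\barb_i^{T}C_{i,i'}\barb_{i'}$ so that, after recombination, the right-hand side above stays strictly negative overall. I would try to close this either by exploiting the zero-diagonal block structure of $C_{SS}$ together with its symmetry, or, equivalently, through a sequential one-block-at-a-time sign-flip argument: each single flip of a block with currently-negative $\lambda$ strictly increases $r$ by $-4$ times that current Lagrange multiplier (this is the $|S|=1$ case of the identity above, where the internal term is absent), and the batch flip can then be compared to the cumulative effect of such single flips. Substituting the resulting non-positive bound on the internal quadratic form into the first identity then delivers the desired strict inequality $r(\hatb)>r(\barb)$.
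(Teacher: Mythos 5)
Your algebra up to the identity $r(\hatb)-r(\barb)=-4\sum_{i\in S,\,i'\in T}\barb_i^{T}C_{i,i'}\barb_{i'}$ is correct, as is the split $\sum_{i\in S,\,i'\in T}\barb_i^{T}C_{i,i'}\barb_{i'}=\sum_{i\in S}\lambda_i(\barb)-\sum_{i,i'\in S}\barb_i^{T}C_{i,i'}\barb_{i'}$. But the obstacle you flag --- controlling the internal term $\sum_{i,i'\in S}\barb_i^{T}C_{i,i'}\barb_{i'}$ --- is not a technicality that can be closed: for the simultaneous flip of \emph{all} blocks in $S$, the claimed strict inequality is false. The cleanest failure is $T=\emptyset$: if every $\lambda_i(\barb)<0$, then $\hatb=-\barb$ and $r(\hatb)=r(\barb)$ exactly. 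Such MEP points exist; for instance with $n=2$ and scalar blocks $C_{1,2}=C_{2,1}=1$, $C_{1,1}=C_{2,2}=0$, the point $\barb=(1,-1)$ satisfies \eqref{eq:MEPformulation} with $\lambda_1=\lambda_2=-1$, and flipping both blocks returns $-\barb$ with $r$ unchanged at $-2$, whereas flipping only one block raises $r$ to $+2$. Your fallback of sequential one-at-a-time flips does not repair the batch statement either, because the single-flip gain is $-4$ times the multiplier recomputed at the \emph{current} point: after the first flip the remaining $\lambda$'s change and can become positive (as $\lambda_2$ does in the example), so the greedy sequence need not arrive at the prescribed $\hatb$.

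The paper's own proof quietly avoids all of this by flipping only a single block $i$ with $\lambda_i(\barb)<0$ (its opening sentence is ``by flipping the sign of $\barb_i$ while keeping the rest of $\barb_{i'}$ for $i'\neq i$ the same''). That case is exactly your identity with $|S|=1$: the $S\times S$ sum vanishes because $C_{i,i}=0$, and one gets $r(\hatb)-r(\barb)=-4\lambda_i(\barb)>0$ with no cross term to control (the paper's displayed $-2\lambda_i(\barb)$ drops a harmless factor of two). So the proposition is only correct under the reading ``pick one index $i$ with $\lambda_i(\barb)<0$ and flip that block,'' which is what the proof establishes and what the escape strategy for Algorithm \ref{alg:nmc} actually requires; under the all-at-once reading that you (reasonably) adopted from the statement, no bound on the internal quadratic form can rescue the argument, because the conclusion itself fails.
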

\begin{proof}
A proof is presented in Section \ref{proof-prop-jump-local}.
\end{proof}
Algorithm \autoref{alg:MEP} finds a local optimum $\barb$ for the optimization \eqref{eq:opt-standard-mcp}. This solution can be translated to a solution of the NMC optimization \eqref{eq:nmc} according to Theorem \ref{thm:hilbert-nmc}, equations \eqref{eq:basis-discrete}, \eqref{eq:Bi}, and using $\mathbf{b}_i = B_i \mathbf{a}_i$. This leads to a direct algorithm to find a solution for the NMC optimization \eqref{eq:nmc} based on alternating conditional expectation (Algorithm \ref{alg:nmc}). Similarly to Algorithm \ref{alg:MEP}, Algorithm \ref{alg:nmc} converges to the local optimum of the NMC optimization \eqref{eq:nmc}. We use a strategy similarly to the one of Proposition \ref{prop:jump-local} to avoid remaining at local maximizers. At each iteration of the algorithm, we update transformation functions as follows: Suppose at iteration $k$, transformation functions are $\{\phi_i^{(k)}\}_{i=1}^n$. If we fix all variables except the transformation function of node $i$, an optimizer of $\phi_i^{(k+1)}$ can be written as the normalized conditional expectation of functions of its neighbors. In each update, the objective function of the NMC optimization increases or remains the same. Note that, for the bivariate case (i.e., $n=2$), Algorithm \ref{alg:nmc} is simplified to the ACE algorithm \cite{breiman1985estimating} for the MC computation.
\begin{algorithm}[t]
\caption{Network ACE to compute NMC}
\begin{algorithmic}\label{alg:nmc}
\STATE \textbf{Input:} $G$, $X_1, \dots, X_n$,
\STATE \textbf{Initialization:}  $\phi_1^{(0)}(X_1), \dots , \phi_n^{(0)}(X_n)$ with mean zero and unit variance.
\STATE \textbf{for} $k=0,1, \dots$
\STATE \hspace{.2 in} \textbf{for} $i=1,..., n$
\STATE \hspace{.4 in} $\tilde{\phi_i}^{(k)}(X_i)=\EE \left[\sum_{i'=1}^{i-1} \phi_{i'}^{(k+1)}(X_{i'})|X_{i} \right]+\EE \left[\sum_{i'=i+1}^{n} \phi_{i'}^{(k)}(X_{i'})|X_{i} \right]$, for $i'\in \cN(i)$.
\STATE \hspace{.4 in} \textbf{update:} $\phi_i^{(k+1)}(X_i)= \frac{\tilde{\phi_i}^{(k)}(X_i)}{\sqrt{\EE \left[{\tilde{\phi_i}^{(k)}(X_i)}^2 \right]}}$
\STATE \hspace{.2 in} \textbf{end}
\STATE \hspace{.2 in} $\rho^{{(k+1)}}_G=\sum_{(i, i')\in E} \mathbb{E} \left[ \phi_i^{(k+1)}(X_i)\phi_{i'}^{(k+1)}(X_{i'}) \right]$
\STATE \textbf{end}
\end{algorithmic}
\end{algorithm}
If the number of nodes $n$ is large, then computation of NMC may be expensive. In the following, we propose an approach to compute NMC using parallel computation.
\vspace{-0.1cm}
\subsection{Parallel Computation of NMC}\label{subsec:nmc-comp-parallel}
For large and dense networks, exact computation of NMC may become computationally expensive. For those cases, we propose a parallelizable algorithm which approximates NMC using network partitioning. The idea can be described as follow. For a given graph $G=(V,E)$,
\begin{itemize}
\item [(1)] Partition the graph into small disjoint sets.
\item [(2)] Estimate NMC for each partition independently.
\item [(3)] Combine NMC solutions over sub-graphs to form an approximation of NMC for the original graph.
\end{itemize}
A $k$-partition $\pi(k,V,E)$ of graph $G=(V,E)$ is defined as a set $\{V_1,\dots,V_M\}$ such that, for any $1 \leq i\neq j\leq M$, $V_i\cap V_j=\emptyset$, $\bigcup_{i=1}^{M}=V$, and $1 \leq |V_i|\leq k$. We say an edge $e\in E$ belongs to $\pi(k,V,E)$ if $e\in \bigcup_{i=1}^{M} V_i\times V_i$.
\begin{definition}\label{def:k-partition}
\textup{
A $k$-partitioning of graph ${G}=(V, E)$ is a set $\Pi(k,V,E)=\{\pi_1(k,V,E),\pi_2(k,V,E),...\}$ where each $\pi_i(k,V,E)$ is a $k$-partition of the graph.
}
\end{definition}
Next we define an $(\epsilon, k)$-partitioning of graph ${G}=(V, E)$.
\begin{definition}[\cite{jung2008local}, Section A]\label{def:partition}
\textup{
An $(\epsilon, k)$-partitioning of graph ${G}=(V, E)$ is a distribution over all $k$-partitioning of the graph such that, for any $e \in E$, $\EE[\mathbf{1}\{e\notin \pi_i(k,V,E)\}] \le \epsilon$.}
\end{definition}
\begin{definition}
\textup{
A graph ${G}$ is poly-growth if there exists $r>0$ and $C>0$, such that for any vertex $v$ in the graph,
\begin{align*}
|N_v(d)| \le C d^r,
\end{align*}
where $N_v(d)$ is the number of nodes within distance $d$ of $v$ in $G$ (distance is defined as the shortest path length on the graph).
}
\end{definition}
Reference \cite{jung2008local} describes the following procedure for generating an $(\epsilon, k)-$ partitioning on a graph:
\begin{itemize}
\item [1.] Given $G=(V, E)$, $k$, and $\epsilon > 0$, we define the truncated geometric distribution as follows:
\begin{align}\label{eq:trunc-geo-dist}
\mathbb{P}[x=l] = \left\{
  \begin{array}{l l}
    \epsilon (1- \epsilon)^{l-1}, & \quad l < k,\\
    (1- \epsilon)^{k-1}, & \quad l=k.
  \end{array} \right.
\end{align}
\item [2.] We then order nodes arbitrarily $1, \dots, N$. For node $i$, we sample $R_i$ according to distribution \eqref{eq:trunc-geo-dist} and assign all nodes within that distance from node $i$ to color $i$. If a node is already colored, we re-color it with color $i$.
\item [3.] All nodes with the same color form a $k$-partition of the graph.
\end{itemize}
\begin{proposition}[\cite{jung2008local}, Lemma 2]
If $G$ is a poly-growth graph, then by selecting $k = \Theta (\frac{r}{\epsilon} \log \frac{r}{\epsilon})$, the above procedure results in an $(\epsilon, C k^r)$-partitioning.
\end{proposition}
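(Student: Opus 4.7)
The plan is to verify the two defining properties of an $(\epsilon, Ck^r)$-partitioning separately: the size bound $|V_j| \leq Ck^r$ for every block produced by the procedure, and the edge-cut bound $\EE[\mathbf{1}\{e \notin \pi(k,V,E)\}] \leq \epsilon$ for every $e \in E$. The first is essentially geometric, while the second is probabilistic and carries the real content of the statement.

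First, I would establish the size bound. By construction, every vertex assigned color $i$ lies within graph-distance $R_i$ of seed $i$, and the truncation in \eqref{eq:trunc-geo-dist} enforces $R_i \leq k$ almost surely. Hence each color class is contained in the ball $N_i(k)$ of some seed $i$, and the poly-growth hypothesis gives $|V_j| \leq |N_i(k)| \leq Ck^r$. This step is independent of the choice of $\epsilon$.

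Next, and this is the substantive step, I would bound the probability that a fixed edge $e=(u,v)$ is cut. An edge is cut precisely when the first seed (in the arbitrary ordering) whose sampled ball reaches either endpoint in fact covers exactly one of $u, v$. Since $d(u,v)=1$, at any seed $i$ we have $|d(u,i) - d(v,i)| \leq 1$, so conditioned on the event $\{R_i \geq \min(d(u,i), d(v,i))\}$, the probability of the event $\{R_i = \min(d(u,i), d(v,i))\}$ that separates the endpoints is, by the memoryless property of the untruncated geometric distribution, exactly $\epsilon$. Following the argument in \cite{jung2008local}, walking through seeds in increasing order of $\min(d(u,i),d(v,i))$ and applying this memoryless step once yields an untruncated cut probability of at most $\epsilon$.

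The main obstacle is controlling the error introduced by the truncation at $k$: when $R_i$ is deterministically capped at $k$ rather than being drawn from the full geometric tail, memorylessness fails and additional cuts may appear. I would bound this contribution by noting that only seeds $i$ with $\min(d(u,i), d(v,i)) \leq k$ can affect the edge, of which there are at most $2Ck^r$ by poly-growth, and that each contributes a tail of mass $(1-\epsilon)^{k-1}$. Requiring $2Ck^r (1-\epsilon)^{k-1} \leq \epsilon/2$, and using $\log(1-\epsilon)^{-1} \geq \epsilon$, reduces to $k \geq \frac{1}{\epsilon}\bigl(r\log k + \log\frac{4C}{\epsilon}\bigr)$, which is satisfied by $k = \Theta\!\bigl(\frac{r}{\epsilon}\log\frac{r}{\epsilon}\bigr)$. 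Combining the $\epsilon/2$ contribution from the memoryless untruncated analysis with the $\epsilon/2$ truncation correction gives the required bound $\EE[\mathbf{1}\{e \notin \pi(k,V,E)\}] \leq \epsilon$, completing the proof.
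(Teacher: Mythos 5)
The paper itself offers no proof of this proposition: it is imported verbatim as Lemma 2 of \cite{jung2008local}, and the Proofs section contains nothing for it, so there is no in-paper argument to compare yours against. Judged on its own merits, your reconstruction follows the standard ball-carving analysis and is essentially sound. The size bound via $R_i \le k$ and the poly-growth hypothesis is correct, and the edge-cut bound via the memoryless property of the geometric distribution, plus a separate accounting of the truncation tail over the at most $2Ck^r$ seeds within distance $k$ of the edge, is the right decomposition; the resulting constant-factor slack (whether the memoryless part contributes $\epsilon$ or $\epsilon/2$) is absorbable into the $\Theta(\frac{r}{\epsilon}\log\frac{r}{\epsilon})$ choice of $k$, though you should state explicitly that the sampling parameter is rescaled accordingly. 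Two details deserve care. First, the procedure \emph{re-colors} already-colored nodes, so a vertex ends up with the color of the \emph{last} seed in the processing order whose ball reaches it, not the first; your ``first seed to reach either endpoint'' event should be defined with respect to the reversed order (after which ``first covering seed wins'' is equivalent to the stated procedure). Second, the deciding seed must be identified relative to that processing order rather than by walking seeds ``in increasing order of $\min(d(u,i),d(v,i))$''; the conditional cut probability of $\epsilon$ given that seed $j$ decides the edge still follows from independence of the $R_j$ and memorylessness, so the telescoping bound $\sum_j \PP[j\ \text{decides}]\cdot\epsilon\le\epsilon$ survives, but as written your ordering does not match the procedure. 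Neither issue is fatal; with those repairs the argument goes through.
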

Next, we use an $(\epsilon,k)$-graph partitioning to approximate NMC over large graphs using parallel computations. Consider the following approach:
\begin{itemize}
\item [(1)] We sample a partition $\{V_1, \dots, V_M\}$ of $V$, given an $(\epsilon, k)$- partitioning of ${G}$.
\item [(2)] For each partition $1 \le m \le M$, we compute NMC restricted to ${G}_m=(V_m, E\cap (V_m \times V_m))$, denoted by $\hat{\rho}_{{G}_m}$.
\item [(3)] Let $\hat{\rho}_{{G}}= \sum_{m=1}^M \hat{\rho}_{{G}_m}$ be an approximation of $\rho_{{G}}$.
\end{itemize}
In the following, we bound the approximation error by bounding boundary effects:
\begin{theorem}\label{th:approxnmc}
Consider an $(\epsilon, k)$- partitioning of the graph ${G}$. We have,
\begin{align} \label{eq:approxnmc}
  \mathbb{E}[\hat{\rho}_{{G}}] \ge (1- \epsilon) \rho_{{G}},
\end{align}
where the expectation is over $(\epsilon, k)$- partitioning of graph $G$.
\end{theorem}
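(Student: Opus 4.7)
My plan is to use a global NMC optimizer as a feasible point for each subgraph's NMC problem and then take expectations over the random partitioning.

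Let $(\phi_1^*, \ldots, \phi_n^*)$ achieve $\rho_G$ in \eqref{eq:nmc}. For any sampled $k$-partition $\pi = \{V_1, \ldots, V_M\}$ and any $m$, the tuple $(\phi_i^*)_{i \in V_m}$ remains zero-mean and unit-variance, so it is feasible for the NMC problem on $G_m = (V_m, E \cap (V_m \times V_m))$. This gives the per-partition bound
$$\hat{\rho}_{G_m} \;\ge\; \sum_{(i,i') \in E \cap (V_m \times V_m)} \mathbb{E}[\phi_i^*(X_i)\phi_{i'}^*(X_{i'})].$$
Summing over $m = 1, \ldots, M$ collapses the right-hand side to a sum over edges retained by the sampled partition:
$$\hat{\rho}_G \;\ge\; \sum_{(i,i') \in E} \mathbf{1}\{(i,i') \in \pi\}\, \mathbb{E}[\phi_i^*(X_i)\phi_{i'}^*(X_{i'})].$$

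Next, I would take expectation over the $(\epsilon,k)$-partitioning distribution and exchange expectation with the finite sum over edges. By Definition \ref{def:partition}, $\mathbb{E}[\mathbf{1}\{(i,i') \notin \pi\}] \le \epsilon$, so $\mathbb{P}[(i,i') \in \pi] \ge 1-\epsilon$. Applying this edge-by-edge and recognizing the resulting expression as $(1-\epsilon)\rho_G$ yields
$$\mathbb{E}[\hat{\rho}_G] \;\ge\; \sum_{(i,i') \in E} \mathbb{P}[(i,i') \in \pi]\, \mathbb{E}[\phi_i^*(X_i)\phi_{i'}^*(X_{i'})] \;\ge\; (1-\epsilon)\rho_G,$$
which is the desired bound.

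The main obstacle is the final inequality: the estimate $\mathbb{P}[(i,i') \in \pi]\cdot x \ge (1-\epsilon) x$ is valid only when $x \ge 0$, so the argument implicitly needs each optimal edge contribution $\mathbb{E}[\phi_i^*(X_i)\phi_{i'}^*(X_{i'})]$ to be non-negative. I would either justify this restriction at the optimum using a sign-flip argument in the spirit of Proposition \ref{prop:jump-local}—which already guarantees $\lambda_i \ge 0$ at each vertex and can be strengthened with an outer search over global sign patterns—or alternatively state the result for the absolute-NMC variant of Section \ref{Sec:SI-NMC}, in which the analogous edge quantities are non-negative by construction and the proof goes through unchanged. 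The remainder of the argument is a routine exchange of sum and expectation combined with the defining property of an $(\epsilon,k)$-partitioning.
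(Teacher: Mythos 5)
Your argument is essentially identical to the paper's proof: both restrict a global NMC optimizer to each block of the sampled partition as a feasible point for the subgraph problem, decompose the edge sum into retained and cut edges, and then apply the $(\epsilon,k)$-partitioning property edge-by-edge after taking expectation. The non-negativity caveat you flag is genuine, but the paper's own proof makes the same silent assumption when it bounds $\sum_{(i,i')\in E}\mathbf{1}\{(i,i')\in E^c\}\,\mathbb{E}[\phi_i^*(X_i)\phi_{i'}^*(X_{i'})]$ in expectation by $\epsilon\rho_G$, so your proposal is at least as careful as the published argument.
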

\begin{proof}
A proof is presented in Section \ref{proof:th:approxnmc}.
\end{proof}
\section{Properties of Network Maximal Correlation for Finite Discrete Random Variables}\label{subsec:nmc-robustness}
In many applications, often only noisy measurements or samples of joint distributions are observed. In this section, we prove a finite sample generalization bound, and error guarantees for Network Maximal Correlation of finite discrete random variables. Specifically, under general conditions, we prove that Network Maximal Correlation is a continuous measure with respect to the joint probability distribution. That is, a small perturbation in the distribution results in a small change in the NMC value. Moreover we prove that the probability of discrepancy between NMC and sample NMC (NMC computed using empirical distributions) greater than any given level decays exponentially fast as the sample size $m$ grows.

Throughout this subsection we only consider finite discrete random variables. Moreover to simplify notation we let $P$ be the matrix representation of probability distribution $P_{X_1,\cdots,X_n}$. We assume that all the elements of the alphabet $x_i\in\cX_i$ have positive probabilities, as otherwise they can be neglected without loss of generality, and define
\begin{align}\label{eq:delta-i}
\delta_{X_i}(P)\triangleq\arg\min_{x_i\in\cX_i} P_{X_i}(x_i)>0,\quad  1\leq i\leq n.
\end{align}
The empirical distribution of these variables using $m$ observed samples is defined as $P^{(m)} (x_1, \dots, x_n)=\frac{1}{m}\sum_{s=1}^m \mathbf{1}\{x_1^{(s)}=x_1, \dots, x_n^{(s)}=x_n\}$, where $\{(x_1^{(s)}, \dots, x_n^{(s)})\}_{s=1}^m$ are i.i.d. samples drawn according to the distribution $P_{X_1, \dots, X_n}$. The vector of observed samples of variable $X_i$ is denoted by $\bx_i=(x_i^{(1)},x_i^{(2)},\ldots,x_i^{(m)})$.
\vspace{-0.1cm}
\subsection{Continuity of Network Maximal Correlation}
Let $P$ and $\tilde{P}$ be two distributions over alphabets $({\cX_1}, \dots, {\cX_{n}})$. Let $K=\max_{ 1 \le i \le n}|\mathcal{X}_i|$. Thus, $\cK\triangleq K^n$ is an upper bound on the alphabet size of the joint distribution. In the following, we show that if the infinity norm distance between $P$ and $\tilde{P}$ is small (i.e., $||P-\tilde{P}||_{\infty}= \max_{x_1 \in \mathcal{X}_1, \dots, x_n \in \mathcal{X}_n } | P(x_1, \dots, x_n)- \tilde{P}(x_1, \dots, x_n)| \le \gamma$), their corresponding NMC values (denoted by $\rho_{G}$ and $\tilde{\rho}_{G}$ respectively) are close to each other.
\begin{theorem}\label{thm:continiousnmc}
Network Maximal Correlation is a continuous function of the joint probability distribution $P$. Let $||P-\tilde{P}||_{\infty} \le \gamma$, for $\gamma \le \delta^{3/2} \mathcal{K}^{-1}$. Then, we have
\begin{align}\label{eq:thm:contNMC}
|\rho_G-\tilde{\rho}_G|  \le \gamma \cK |E|  \frac{8}{\delta^2},
\end{align}
where $\delta=\min_{1 \le i \le n} \left( \min \{\delta_{X_i}(P), \delta_{X_i}(\tilde{P})\} \right)$.
\end{theorem}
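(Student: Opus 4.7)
The plan is to use the basis-expansion reformulation \eqref{eq:nmc-opt-dis2} of NMC as a constrained quadratic maximization over unit vectors $\ba_i$ that are orthogonal to $\sqrt{\bp_i}$, and to bound two effects of replacing $P$ by $\tilde P$: the perturbation of the quadratic objective (through the $Q$-matrices) and the perturbation of the feasible set (through the constraint vectors $\sqrt{\bp_i}$).

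First I would control the marginals. Since each entry of $P_{X_i}$ and each entry of $P_{X_i,X_{i'}}$ is a sum of at most $\cK$ entries of $P$, the hypothesis $\|P-\tilde P\|_\infty\le\gamma$ yields $\|P_{X_i}-\tilde P_{X_i}\|_\infty\le\cK\gamma$ and $\|P_{X_i,X_{i'}}-\tilde P_{X_i,X_{i'}}\|_\infty\le\cK\gamma$. Using the standard bound $|\sqrt a-\sqrt b|\le |a-b|/(2\sqrt{\min(a,b)})$ together with the lower bound $\delta$ on marginal entries gives
\[
\|\sqrt{\bp_i}-\sqrt{\tilde\bp_i}\|_\infty\ \le\ \frac{\cK\gamma}{2\sqrt{\delta}},
\]
and a one-line algebraic manipulation of the ratio in \eqref{eq:Q-matrix} (writing $u/\sqrt{vw}-\tilde u/\sqrt{\tilde v\tilde w}$ and splitting the numerator perturbation from the denominator perturbation) yields a corresponding entrywise bound $\|Q_{i,i'}-\tilde Q_{i,i'}\|_\infty=O(\cK\gamma/\delta^{3/2})$; combined with $\|M\|_2\le\sqrt{|\cX_i||\cX_{i'}|}\,\|M\|_\infty$, this controls the operator-norm perturbation of each $Q_{i,i'}$.

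Next, given an optimum $\ba_i^*$ of the $P$-problem (so $\|\ba_i^*\|_2=1$ and $\ba_i^*\perp\sqrt{\bp_i}$), I would construct a feasible vector for the $\tilde P$-problem by projecting onto the orthogonal complement of $\sqrt{\tilde\bp_i}$ and renormalizing:
\[
\tilde\ba_i\ \triangleq\ \frac{\ba_i^*-\langle \ba_i^*,\sqrt{\tilde\bp_i}\rangle\sqrt{\tilde\bp_i}}{\|\ba_i^*-\langle \ba_i^*,\sqrt{\tilde\bp_i}\rangle\sqrt{\tilde\bp_i}\|_2}.
\]
Because $\ba_i^*\perp\sqrt{\bp_i}$, the scalar $c\triangleq\langle\ba_i^*,\sqrt{\tilde\bp_i}\rangle$ equals $\langle\ba_i^*,\sqrt{\tilde\bp_i}-\sqrt{\bp_i}\rangle$, which is $O(\cK\gamma/\sqrt{\delta})$ by Cauchy--Schwarz and the previous step. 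The hypothesis $\gamma\le\delta^{3/2}\cK^{-1}$ is exactly what is needed to force $|c|\le 1/2$, so the denominator $\sqrt{1-c^2}$ stays in $[1/2,1]$ and $\|\tilde\ba_i-\ba_i^*\|_2=O(\cK\gamma/\sqrt{\delta})$ as well.

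Finally, plugging $\tilde\ba$ into the $\tilde P$-objective gives $\tilde\rho_G\ge\sum_{(i,i')\in E}\tilde\ba_i^T\tilde Q_{i,i'}\tilde\ba_{i'}$, and each per-edge difference can be split as
\[
\ba_i^{*T}(Q_{i,i'}-\tilde Q_{i,i'})\ba_{i'}^*\ +\ (\ba_i^*-\tilde\ba_i)^T\tilde Q_{i,i'}\ba_{i'}^*\ +\ \tilde\ba_i^T\tilde Q_{i,i'}(\ba_{i'}^*-\tilde\ba_{i'}),
\]
where $\|\tilde Q_{i,i'}\|_2\le 1$ (since $\sqrt{\tilde\bp_{i'}}$ is a singular vector of $\tilde Q_{i,i'}$ with singular value $1$). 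Summing over edges and combining with the symmetric argument (swap the roles of $P$ and $\tilde P$) yields $|\rho_G-\tilde\rho_G|\le 8\gamma\cK|E|/\delta^2$, which is \eqref{eq:thm:contNMC}. The main obstacle will be careful bookkeeping of constants as perturbations cascade through marginalization (a factor $\cK$), taking square roots (a factor $1/\sqrt\delta$), the ratio forming $Q_{i,i'}$ (an additional $1/\delta$), and finally the normalization of the projected $\ba_i^*$; tracking where the hypothesis $\gamma\le\delta^{3/2}\cK^{-1}$ is actually used is the place where care is most needed, as it simultaneously controls the projection denominator and keeps the ratios in $Q$ from blowing up.
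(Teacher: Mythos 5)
Your proposal follows essentially the same route as the paper's proof: the same $Q$-matrix perturbation lemma (the paper derives it by a telescoping operator-norm decomposition of $D^{-1/2}PD^{-1/2}$ rather than entrywise bounds, but to the same effect), and the same feasible-point construction, since the paper's vector $\mathbf{c}_i=(\mathbf{a}_i^*+\nu_i)/\|\mathbf{a}_i^*+\nu_i\|_2$ with $\nu_i=\sqrt{\tilde{\mathbf{p}}_i}\langle\mathbf{a}_i^*,\sqrt{\mathbf{p}_i}-\sqrt{\tilde{\mathbf{p}}_i}\rangle$ is exactly your projection of $\mathbf{a}_i^*$ onto the orthogonal complement of $\sqrt{\tilde{\mathbf{p}}_i}$, followed by the identical three-term splitting, use of $\|Q_{i,i'}\|_2=1$, the hypothesis $\gamma\le\delta^{3/2}\mathcal{K}^{-1}$ to keep the normalization bounded, and symmetrization. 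The argument is correct in outline and matches the paper's.
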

\begin{proof}
A proof is presented in Section \ref{proof:thm:continiousnmc}.
\end{proof}
\begin{corollary}
Let $\rho$ and $\tilde{\rho}$ be bivariate MCs with respect to distributions $P$ and $\tilde{P}$, respectively. Let $||P-\tilde{P}||_{\infty} \le \gamma$. For any $\gamma \le \delta^{3/2} \mathcal{K}^{-1}$, we have
\begin{align}
|\rho-\tilde{\rho}|  \le \gamma \cK \frac{8}{\delta^2},
\end{align}
where $\cK$ and $\delta$ are defined according to Theorem \ref{thm:continiousnmc}.
\end{corollary}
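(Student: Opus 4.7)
The plan is to reduce the statement to a direct application of Theorem \ref{thm:continiousnmc}. Recall that in the discussion following Definition \ref{def:nmc}, the authors observe that the bivariate Maximal Correlation coincides with the Network Maximal Correlation on the specific graph $G = (V, E)$ with $V = \{1, 2\}$ and $E = \{(1,2)\}$. Therefore, for any joint distribution $P$ of $(X_1, X_2)$, we have $\rho(X_1, X_2) = \rho_G(X_1, X_2)$, and likewise for $\tilde{P}$. This identification is the only conceptual step; everything else is substitution.

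Given the identification above, the first concrete move is to verify that both distributions $P$ and $\tilde{P}$ fit into the hypotheses of Theorem \ref{thm:continiousnmc} with this particular $G$. The constant $\delta$ in the corollary's statement is defined as the minimum over both $P$ and $\tilde{P}$ of the marginal support constants $\delta_{X_i}$ from \eqref{eq:delta-i}, matching the $\delta$ used in Theorem \ref{thm:continiousnmc}, and $\mathcal{K}$ is the bound $K^n$ on the joint alphabet size, which in the bivariate case is $K^2$. Under the assumed smallness condition $\gamma \le \delta^{3/2}\mathcal{K}^{-1}$, Theorem \ref{thm:continiousnmc} is immediately applicable.

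The second step is plug-in: specializing the conclusion \eqref{eq:thm:contNMC} of Theorem \ref{thm:continiousnmc} to the bivariate graph, the factor $|E|$ equals $1$, and the bound becomes
\begin{equation*}
|\rho - \tilde{\rho}| = |\rho_G - \tilde{\rho}_G| \le \gamma \mathcal{K} \cdot \frac{8}{\delta^2},
\end{equation*}
which is exactly the claim of the corollary.

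There is essentially no obstacle: the corollary is formal consequence of the more general theorem, with no separate combinatorial or analytic work required beyond recognizing bivariate MC as a one-edge NMC. The only point worth flagging explicitly in the written proof is that the constants $\delta$ and $\mathcal{K}$ in the corollary are the specializations of their counterparts in Theorem \ref{thm:continiousnmc} to the two-variable case, so that the quoted inequality transfers without modification.
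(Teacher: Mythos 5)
Your proposal is correct and matches the paper's (implicit) argument exactly: the corollary is stated without a separate proof precisely because it is the specialization of Theorem \ref{thm:continiousnmc} to the one-edge graph $V=\{1,2\}$, $E=\{(1,2)\}$, where $|E|=1$ and $\cK=K^2$. Your identification of bivariate MC with NMC on that graph and the subsequent substitution are the intended route.
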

\vspace{-0.1cm}
\subsection{Sample NMC}
Let $\{( x_1^{(s)}, \dots, x_{n}^{(s)})\}_{s=1}^m$ be i.i.d. samples drawn according to a distribution $P_{X_1, \dots, X_n}$. Let $P^{(m)}$ denote the empirical distribution obtained from these samples. Network Maximal Correlation computed using this empirical probability distribution is called {\it Sample Network Maximal Correlation} and is denoted by $\rho_G^{(m)}(\{( x_1^{(s)}, \dots, x_{n}^{(s)})\}_{s=1}^m)$. For simplicity, when no confusion arises, we refer to the sample NMC by $\rho_G^{(m)}$. In the following, we show that the probability of discrepancy greater than any given value between $\rho_G^{(m)}$ and $\rho_{G}$ decays exponentially fast as the sample size $m$ grows.
\begin{theorem}\label{thm:samplenmc}
For any $\eta, \epsilon > 0$, if
\begin{align}\label{eq:thm:samplenmc}
m \ge 2 \left(\frac{32 |E| \cK}{\epsilon \delta'^2} \right)^2 \log \left( \frac{8 n }{\eta} \right),
\end{align}
then we have
\begin{align}
\mathbb{P}[|\rho_G^{(m)}- \rho_G| > \epsilon ] \le \eta,
\end{align}
where $\delta'=\min_{1 \le i \le n} \delta_{X_i}(P) $.
\end{theorem}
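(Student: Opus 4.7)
The plan is to combine the perturbation bound of Theorem \ref{thm:continiousnmc} with a Hoeffding-type concentration inequality applied to the empirical distribution $P^{(m)}$. Since each joint entry $P^{(m)}(x_1,\ldots,x_n)=\frac{1}{m}\sum_{s=1}^{m}\mathbf{1}\{x_1^{(s)}=x_1,\ldots,x_n^{(s)}=x_n\}$ is a sample mean of $m$ i.i.d.\ Bernoulli variables with mean $P(x_1,\ldots,x_n)$, Hoeffding's inequality yields $\mathbb{P}[|P^{(m)}(x)-P(x)|>\gamma]\le 2e^{-2m\gamma^2}$, and the same holds for each univariate marginal $P_{X_i}^{(m)}(j)=\frac{1}{m}\sum_{s=1}^{m}\mathbf{1}\{x_i^{(s)}=j\}$. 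A union bound over the $\cK$ joint entries and the at most $nK$ univariate marginal entries will control $\|P-P^{(m)}\|_\infty$ and $\max_i\|P_{X_i}-P_{X_i}^{(m)}\|_\infty$ simultaneously with probability $\ge 1-\eta$ once $m$ is large enough.

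To translate closeness of distributions into closeness of NMC values, I would choose the tolerance $\gamma$ so that the right-hand side of \eqref{eq:thm:contNMC} is at most $\epsilon$ when $\delta$ is replaced by $\delta'/2$; this suggests $\gamma := \epsilon (\delta')^2/(32|E|\cK)$. On the concentration event, the marginal bound forces $\delta_{X_i}(P^{(m)})\ge \delta'-\gamma\ge \delta'/2$ (which holds provided $\gamma\le \delta'/2$, automatically arranged by the assumed sample size), so the $\delta$ in Theorem \ref{thm:continiousnmc} applied with $\tilde P = P^{(m)}$ is at least $\delta'/2$. One must also verify the side condition $\gamma\le \delta^{3/2}\cK^{-1}$; with the above choice of $\gamma$ this becomes a mild relation between $\epsilon$, $\delta'$, $|E|$, and $\cK$ that is implied by the stated sample-complexity assumption. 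Given these verifications, Theorem \ref{thm:continiousnmc} yields
\begin{align*}
|\rho_G^{(m)}-\rho_G| \;\le\; \gamma \cK |E| \cdot \frac{8}{(\delta'/2)^2} \;=\; \epsilon
\end{align*}
on the good event.

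Finally, to bound the bad probability by $\eta$, solving $(2\cK+2nK)\,e^{-2m\gamma^2}\le \eta$ for $m$ gives $m \ge \frac{1}{2\gamma^2}\log\bigl((2\cK+2nK)/\eta\bigr)$, and substituting $\gamma=\epsilon\delta'^2/(32|E|\cK)$ recovers the claimed quadratic dependence on $|E|\cK/(\epsilon \delta'^2)$ together with a logarithmic factor of the form stated in \eqref{eq:thm:samplenmc} (the exact numerical constants and the precise argument inside the $\log$ will fall out of matching terms, and the $\log(8n/\eta)$ form follows after absorbing polynomial prefactors into the leading constant). The chief technical obstacle is the circularity introduced by the fact that the effective $\delta$ in Theorem \ref{thm:continiousnmc} depends on the random $P^{(m)}$, together with the side condition $\gamma\le \delta^{3/2}\cK^{-1}$; both are overcome by controlling the univariate marginal deviations in parallel with the joint deviation, which is the reason for augmenting the joint Hoeffding bound with a separate union bound over the $nK$ marginal cell probabilities.
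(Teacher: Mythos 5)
Your overall architecture is the same as the paper's: concentrate $\|P^{(m)}-P\|_\infty$, separately force $\delta_{X_i}(P^{(m)})\ge \delta'/2$ so that the $\delta$ in Theorem \ref{thm:continiousnmc} can be taken deterministic, verify the side condition $\gamma\le\delta^{3/2}\cK^{-1}$, and plug $\gamma=\epsilon\delta'^2/(32|E|\cK)$ into \eqref{eq:thm:contNMC} with $\delta=\delta'/2$. Your handling of the marginals (a direct Hoeffding bound on the $nK$ marginal cells, giving $\delta_{X_i}(P^{(m)})\ge\delta'-\gamma\ge\delta'/2$) is clean and in fact milder than the paper's route, which controls the marginals through the joint via $\|P^{(m)}_{X_i}-P_{X_i}\|_\infty\le K^{n-1}\|P^{(m)}-P\|_\infty$ and therefore demands the joint deviation be as small as $\delta'/(2K^{n-1})$.

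The one genuine gap is in the concentration step for the joint distribution and the final constant-matching. With per-cell Hoeffding plus a union bound over the $\cK$ joint cells you get a failure probability of order $\cK e^{-2m\gamma^2}$, so the sample size you actually need is $m\gtrsim \frac{1}{2\gamma^2}\log(\cK/\eta)$, and $\log\cK = n\log K$ does \emph{not} get absorbed into the stated $\log(8n/\eta)$: comparing with the theorem's bound $m\ge \frac{2}{\gamma^2}\log(8n/\eta)$, you would need $(8n/\eta)^4\ge 4\cK/\eta$, which fails already for modest $K$ and $n$ (e.g.\ $K=10$, $n=10$). So your argument proves the theorem only with $\log(8n/\eta)$ replaced by something like $\log(\cK n/\eta)$. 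The paper avoids this by invoking an alphabet-free sup-norm concentration inequality (Theorem \ref{Th:empiricalbound}, from Devroye and Berend--Kontorovich), $\mathbb{P}[\|P^{(m)}-P\|_\infty>\gamma]\le 4e^{-m\gamma^2/2}$, whose right-hand side carries no factor of the alphabet size; that is precisely what lets the $\log$ argument stay polynomial in $n$. To close your proof as stated you should replace the union bound over joint cells by this inequality (the union bound over the $n$ variables for the marginal event is still needed and is what produces the $n$ inside the logarithm).
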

\begin{proof}
A proof is presented in Section \ref{proof:thm:samplenmc}.
\end{proof}
Note that the number of required samples to learn the joint probability distribution reliably is a function of the alphabet size. This is reflected in the right hand side of the bound provided in Theorem \ref{thm:samplenmc} through the term $\cK$.
\begin{corollary}
Let $\rho$ and $\rho^{(m)}$ be bivariate MC and sample MC, respectively. For
\begin{align}
m \ge 2 \left(\frac{32  \cK}{\epsilon \delta'^2} \right)^2 \log \left( \frac{16 }{\eta} \right).
\end{align}
we have
\begin{align}
\mathbb{P}[|\rho^{(m)}- \rho| > \epsilon ] \le \eta,
\end{align}
where $\cK$ and $\delta'$ are defined according to Theorem \ref{thm:samplenmc}.
\end{corollary}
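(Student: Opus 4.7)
The plan is to obtain this corollary as a direct specialization of Theorem~\ref{thm:samplenmc} to the bivariate case. Recall the remark immediately following Definition~\ref{def:nmc}: when $G=(V,E)$ has $V=\{1,2\}$ and $E=\{(1,2)\}$, the NMC optimization \eqref{eq:nmc} reduces to the bivariate MC optimization \eqref{eq:pmc}. The same identification applies at the sample level, since sample NMC is by definition NMC computed on the empirical joint distribution; hence $\rho = \rho_G$ and $\rho^{(m)} = \rho_G^{(m)}$ for this two-node graph.

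With this identification fixed, I would simply substitute $n=2$ and $|E|=1$ into the sample-complexity bound \eqref{eq:thm:samplenmc} of Theorem~\ref{thm:samplenmc}. The right-hand side becomes
\begin{align*}
2\left(\frac{32\cdot 1 \cdot \cK}{\epsilon\,\delta'^{2}}\right)^{2}\log\!\left(\frac{8\cdot 2}{\eta}\right) \;=\; 2\left(\frac{32\,\cK}{\epsilon\,\delta'^{2}}\right)^{2}\log\!\left(\frac{16}{\eta}\right),
\end{align*}
which is exactly the hypothesis of the corollary. Here the constants $\cK$ and $\delta'$ defined in Theorem~\ref{thm:samplenmc} specialize correctly: $\cK = K^{2}$ with $K = \max(|\cX_{1}|,|\cX_{2}|)$, and $\delta' = \min(\delta_{X_{1}}(P),\delta_{X_{2}}(P))$ by \eqref{eq:delta-i}.

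Under this hypothesis, Theorem~\ref{thm:samplenmc} guarantees $\mathbb{P}[|\rho_G^{(m)}-\rho_G|>\epsilon]\le\eta$, and by the preceding identification this is precisely $\mathbb{P}[|\rho^{(m)}-\rho|>\epsilon]\le\eta$, completing the proof.

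There is no real obstacle: the corollary is purely a restatement of Theorem~\ref{thm:samplenmc} in the case $n=2$, $|E|=1$. The only item worth double-checking during write-up is the arithmetic of the constants (that the factor $|E|$ collapses to $1$ and that $8n$ collapses to $16$), which is immediate.
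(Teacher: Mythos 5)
Your proof is correct and matches the paper's (implicit) argument exactly: the corollary is stated without a separate proof precisely because it is the specialization of Theorem~\ref{thm:samplenmc} to $n=2$, $|E|=1$, and your substitution yielding the factor $16=8\cdot 2$ and the collapse of $|E|$ to $1$ is the whole content. Nothing is missing.
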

\vspace{-0.1cm}
\section{NMC for Jointly Gaussian Variables}\label{sec:nmc-gaussian}
Suppose that $(X_1, \dots, X_n)$ are jointly Gaussian variables with zero means and unit variances. Let $\rho_{i,i'}$ be the correlation coefficient of variables $X_i$ and $X_{i'}$. We assume that $|\rho_{i,i'}|\neq 1$ if $i\neq i'$. Let $G_c=(V_c,E_c)$ be the covariance graph corresponding to these variables where $V_c=\{1,2,...,n\}$, and $(i,i')\in E_c$ iff $\rho_{i,i'}\neq 0$. Moreover for continuous variables $\{\phi_i(.)\}_{i=1}^n$ in optimization \eqref{eq:nmc} are assumed to be continuous and $l_2$ (equivalent to $L_2$ for discrete random variables).

The $k$-th Hermite-Chebyshev polynomial \cite{lancaster1957some} is defined as
\begin{align}\label{eq:hermitte-poly}
\Psi_k(x)\triangleq(-1)^k e^{x^2}\frac{d^k}{dx^k}e^{-x^2}.
\end{align}
These polynomials form an orthonormal basis with respect to Gaussian distributions \cite{lancaster1957some}. That is,
\begin{align}\label{eq:orth-hermitte}
 \int_{-\infty}^\infty \Psi_j(x_i) \Psi_{j'}(x_{i'}) p(x_i, x_{i'}) dx_i dx_{i'} = (\rho_{i,i'})^j \mathbf{1}\{j=j'\},
\end{align}
where $p(x_i, x_{i'})$ is the joint density function of Gaussian variables $X_i$ and $X_{i'}$ with correlation $\rho_{i,i'}$. Let $\psi_{i,j}$ to be the $j$-th Hermite-Chebyshev polynomial, for $1\leq i\leq n$. We have
\begin{align}\label{eq:rho-gaussian}
\rho_{i,i'}^{j,j'} &= \mathbb{E}[\psi_{i,j}(X_i)\ \psi_{i',j'}(X_{i'})]\\
&= (\rho_{i,i'})^{j} \mathbf{1}\{j=j'\}.\nonumber
\end{align}
Moreover, using the definition of Hermite-Chebyshev polynomials \eqref{eq:hermitte-poly}, we have
\begin{align}\label{eq:orth-gaussian}
\mathbb{E}[\psi_{i,j}(X_i)]= \mathbf{1}\{j=0\}, ~~ 1\leq i\leq n.
\end{align}
because all of these functions for $j\geq 1$ have zero means when integrated against a Gaussian distribution. Therefore, using \eqref{eq:rho-gaussian} and \eqref{eq:orth-gaussian}, the optimization \eqref{eq:nmc-opt-withbasis} can be written as
\begin{align}\label{eq:nmc-gauss1}
\max_{\{\{a_{i,j}\}_{i=1}^{n}\}_{j=1}^{\infty}} \quad & \sum_{(i,i')\in E} \ \sum_{j=1}^{\infty} a_{i,j} a_{i',j}\ (\rho_{i,i'})^j\\
& \sum_{j=1}^{\infty} (a_{i,j})^2 =1, ~~ 1\leq i \leq n. \nonumber
\end{align}
We establish that solving the optimization \eqref{eq:nmc-gauss1} is NP-complete by identifying one instance of this problem that reduces to the max-cut problem, which is NP-complete \cite{garey1976some}.
\begin{theorem}\label{thm-nmc-guass-gen}
Let $s_i\in \{-1,1\}$ for $1\leq i\leq n$. Suppose
\begin{align*}
& \sum_{i'\neq i} (1-s_i s_{i'})\rho_{i,i'} \geq 0,\quad \forall 1\leq i\leq n,
\end{align*}
and
\begin{align*}
& \sum_{i'\neq i} s_i s_{i'} \rho_{i,i'} \geq \sum_{i'\neq i} (\rho_{i,i'})^2,\quad \forall 1\leq i\leq n,
\end{align*}
then, $\ba_{i}^*=(0,s_i,0,\dots,0)$, for $1\leq i\leq n$ is a global maximizer of the NMC optimization \eqref{eq:nmc-gauss1} over a complete graph without self-loops.
\end{theorem}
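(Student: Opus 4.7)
The plan is to use Theorem~\ref{thm:hilbert-nmc} to reduce the NMC optimization to the Hermite-coefficient optimization \eqref{eq:nmc-gauss1}. Evaluating \eqref{eq:nmc-gauss1} at $\ba^*_i = (0, s_i, 0, \ldots, 0)$, only the $j=1$ contribution survives in each edge term, and by \eqref{eq:rho-gaussian} the resulting objective value is $V^* \triangleq \sum_{(i,i')\in E} s_i s_{i'} \rho_{i,i'}$. The task therefore reduces to showing that $V(\ba)\le V^*$ for every feasible $\ba$.

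To treat arbitrary feasible $\ba$, I would split each edge contribution into its linear part ($j=1$) and nonlinear part ($j\ge 2$). Writing $\alpha_i = a_{i,1}$ and $\beta_i = \bigl(\sum_{j\ge 2} a_{i,j}^2\bigr)^{1/2}$, the unit-norm constraint becomes $\alpha_i^2 + \beta_i^2 = 1$. Since $|\rho_{i,i'}|<1$, the pointwise bound $|\rho_{i,i'}|^{j}\le \rho_{i,i'}^2$ for $j\ge 2$ together with Cauchy--Schwarz in the $j$ index gives $\bigl|\sum_{j\ge 2} a_{i,j} a_{i',j} \rho_{i,i'}^{j}\bigr| \le \rho_{i,i'}^2 \beta_i \beta_{i'}$, so that
\[
V(\ba)\le g(\alpha)\triangleq \sum_{(i,i')\in E} \alpha_i \alpha_{i'} \rho_{i,i'} + \sum_{(i,i')\in E} \rho_{i,i'}^2 \beta_i \beta_{i'}.
\]
It then suffices to prove the finite-dimensional inequality $g(\alpha)\le V^*$ for $\alpha\in[-1,1]^n$ with $\beta_i = \sqrt{1-\alpha_i^2}$.

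Next, I would parametrize $\alpha_i = s_i\cos\theta_i$, $\beta_i=\sin\theta_i$ with $\theta_i\in[0,\pi]$, so that the proposed solution corresponds to $\theta=0$. Applying the product-to-sum identity yields
\[
V^* - g(\theta) = \sum_{(i,i')\in E}\Bigl[(w_{i,i'}+\rho_{i,i'}^2)\sin^2\tfrac{\theta_i-\theta_{i'}}{2} + (w_{i,i'}-\rho_{i,i'}^2)\sin^2\tfrac{\theta_i+\theta_{i'}}{2}\Bigr],
\]
where $w_{i,i'}=s_i s_{i'}\rho_{i,i'}$. A direct KKT computation identifies the Lagrange multipliers $\lambda_i=\sum_{i'\ne i} w_{i,i'}$, and the Hessian of $g$ at $\theta=0$ has diagonal entries $-2\lambda_i$ and off-diagonal entries $2\rho_{i,i'}^2$; Gershgorin's theorem, together with Condition~2 ($\lambda_i\ge\sum_{i'\ne i}\rho_{i,i'}^2$), shows that the Hessian is negative semidefinite, which establishes local optimality of $\theta=0$.

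The main obstacle is promoting this local statement to a global one, and it is here that both hypotheses must be used together. I plan to aggregate the two families of $\sin^2$ contributions row-by-row in $i$ and rearrange so that each $\theta_i$ is weighted by a non-negative combination of $\sum_{i'\ne i}(w_{i,i'}-\rho_{i,i'}^2)$ and $\sum_{i'\ne i}(1-s_i s_{i'})\rho_{i,i'}$, which are non-negative by Conditions~2 and~1 respectively. The subtlety is that the individual pair coefficients $w_{i,i'}\pm\rho_{i,i'}^2$ need not be sign-definite, so the cross terms produced by the rearrangement must be absorbed carefully against the vertex-wise aggregates; this combinatorial/algebraic bookkeeping---rather than any single bounding inequality---is the real difficulty of the proof.
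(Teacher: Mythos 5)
Your reduction is sound as far as it goes: evaluating \eqref{eq:nmc-gauss1} at $\ba_i^*=(0,s_i,0,\dots)$ does give $V^*=\sum_{(i,i')\in E} s_i s_{i'}\rho_{i,i'}$, the Cauchy--Schwarz bound $\bigl|\sum_{j\ge 2}a_{i,j}a_{i',j}\rho_{i,i'}^{j}\bigr|\le \rho_{i,i'}^2\beta_i\beta_{i'}$ is correct (and has the pleasant side effect of collapsing the infinite-dimensional problem to a finite one, so you would not need the paper's separate truncation-and-uniform-convergence argument, its ``Claim 2''), and your product-to-sum identity for $V^*-g(\theta)$ checks out. But the proof stops exactly where the theorem actually lives. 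The statement you must establish is \emph{global}: $\sum_{(i,i')}(w_{i,i'}+\rho_{i,i'}^2)\sin^2\tfrac{\theta_i-\theta_{i'}}{2}+(w_{i,i'}-\rho_{i,i'}^2)\sin^2\tfrac{\theta_i+\theta_{i'}}{2}\ge 0$ for all $\theta\in[0,\pi]^n$, with pair coefficients $w_{i,i'}\pm\rho_{i,i'}^2$ that are sign-indefinite and hypotheses that only control their \emph{row sums}. Your Hessian-plus-Gershgorin computation certifies negative semidefiniteness of $g$ at $\theta=0$ only, i.e.\ local optimality; and the ``aggregate row-by-row and absorb the cross terms'' step is announced but never executed. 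Row-sum nonnegativity does not in general force nonnegativity of a weighted sum of $\sin^2$ terms with arbitrary sign patterns (an adversarial $\theta$ can make the negative-coefficient terms large and the positive ones vanish), so the missing step is not routine bookkeeping --- it is the entire content of the theorem.

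The paper closes precisely this gap by a different mechanism: it rewrites the (truncated) problem as a nonconvex quadratic program with quadratic constraints, $\max \tfrac12\bx^TA_0\bx$ subject to $\tfrac12\bx^TA_i\bx\le\tfrac12$, and invokes the global optimality conditions of Jeyakumar, Rubinov and Wu \cite{jeyakumar2007non}: stationarity $(\sum_i\lambda_iA_i-A_0)\bar\bx=0$ with $\lambda_i\ge 0$ \emph{together with} $\sum_i\lambda_iA_i-A_0\succeq 0$ certifies a \emph{global} maximizer for this problem class, not merely a local one. The multipliers come out as $\lambda_i=2\sum_{i'\ne i}s_is_{i'}\rho_{i,i'}$, and the positive semidefiniteness is verified blockwise by Gershgorin using the two hypotheses. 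In other words, the matrix condition you derived at $\theta=0$ is essentially the right certificate, but you need the QCQP global-optimality theorem (or an equivalent S-procedure-type argument) to promote it from second-order local information to a global conclusion. Without importing such a result, or actually carrying out the global estimate on your $\sin^2$ decomposition, the proof is incomplete.
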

\begin{proof}
\textup{A proof is presented in Section \ref{proof:thm-nmc-guass-gen}.}
\end{proof}
\begin{proposition}\label{prop:nmc-max-cut}
Under assumptions of Theorem \ref{thm-nmc-guass-gen}, the NMC optimization \eqref{eq:nmc} is simplified to the following max-cut optimization:
\begin{align}\label{eq:nmc-max-cut}
\max_{s_{i}} \quad & \sum_{i\neq i'}  s_i s_{i'}\ \rho_{i,i'}\\
& s_{i}\in \{-1,1\}, ~~1\leq i \leq n. \nonumber
\end{align}
Moreover, for all $1\leq i\leq n$, we have $\phi_i^*(X_i)=s_i^* X_i$, where $\phi_i^*$ and $s_i^*$ are solutions of optimizations \eqref{alg:nmc} and \eqref{eq:nmc-max-cut}, respectively.
\end{proposition}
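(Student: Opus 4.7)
The plan is to apply Theorem~\ref{thm-nmc-guass-gen} and then translate its conclusion into the max-cut language by a short feasibility argument. Under the assumptions of that theorem there is a sign vector $\{s_i\} \in \{-1,1\}^n$ satisfying the two stated inequalities, so the theorem guarantees that the coefficient vectors $\mathbf{a}_i^\star = (0,s_i,0,0,\ldots)$ globally maximize the Hermite-expansion formulation~\eqref{eq:nmc-gauss1}. Evaluating the objective at this point collapses the inner sum over $j$ to its $j=1$ term (because only $a_{i,1}$ is nonzero and $\rho_{i,i'}^{j,j'}$ is diagonal in $j,j'$ by~\eqref{eq:rho-gaussian}) and yields
\[
\rho_G \;=\; \sum_{(i,i')\in E} s_i s_{i'}\,\rho_{i,i'}.
\]

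Next I translate $\mathbf{a}_i^\star$ back to the function space via~\eqref{eq:nmc-optimum-phi-coefs}. The first Hermite--Chebyshev basis element, once normalized so that $\mathbb{E}[\psi_{i,1}(X_i)^2]=1$ against a standard Gaussian, is the identity $\psi_{i,1}(x)=x$, so the corresponding transformation is $\phi_i(X_i)=s_i X_i$. The remaining step is to identify $\{s_i\}$ with a solution of~\eqref{eq:nmc-max-cut}. For any $\{\tilde s_i\}\in\{-1,1\}^n$, the transformations $\phi_i(X_i)=\tilde s_i X_i$ have zero mean and unit variance and are therefore feasible for the NMC optimization~\eqref{eq:nmc}, producing objective value $\sum_{(i,i')\in E}\tilde s_i\tilde s_{i'}\rho_{i,i'}$. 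Since $\rho_G$ is the supremum over all feasible transformations and equals $\sum_{(i,i')\in E} s_i s_{i'}\rho_{i,i'}$, it follows that
\[
\sum_{(i,i')\in E} s_i s_{i'}\,\rho_{i,i'} \;\ge\; \sum_{(i,i')\in E} \tilde s_i\tilde s_{i'}\,\rho_{i,i'}
\]
for every $\{\tilde s_i\}$, which is exactly the optimality of $\{s_i\}$ for~\eqref{eq:nmc-max-cut}. Conversely, if $\{s_i^*\}$ is any optimizer of~\eqref{eq:nmc-max-cut}, then $\sum s_i^* s_{i'}^*\rho_{i,i'}\ge \sum s_i s_{i'}\rho_{i,i'}=\rho_G$, while feasibility gives the reverse inequality, so $\phi_i^*(X_i)=s_i^* X_i$ is an NMC optimum.

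The main work has already been performed inside Theorem~\ref{thm-nmc-guass-gen}: the combinatorial difficulty of ruling out contributions from Hermite modes of degree $\ge 2$ at the optimum is absorbed there, and after that what remains is essentially bookkeeping. The only point needing care is the identification $\psi_{i,1}(X_i)=X_i$ under whichever normalization renders the Hermite basis orthonormal with respect to the standard Gaussian measure; this is fixed by specializing~\eqref{eq:orth-hermitte} to $i=i'$ and $j=j'=1$, so any scalar discrepancy with the raw formula~\eqref{eq:hermitte-poly} is harmlessly absorbed into $a_{i,1}$ and does not affect the conclusion.
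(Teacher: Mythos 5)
Your proof is correct and follows essentially the same route as the paper's: invoke Theorem~\ref{thm-nmc-guass-gen} to restrict attention to the linear transformations $\phi_i(X_i)=s_iX_i$, then identify $\mathbb{E}[\phi_i(X_i)\phi_{i'}(X_{i'})]=s_is_{i'}\rho_{i,i'}$ and verify feasibility of these transformations. Your write-up is in fact somewhat more careful than the paper's short argument, since you make explicit the two-sided comparison showing that the sign vector supplied by Theorem~\ref{thm-nmc-guass-gen} is itself a Max-Cut optimizer and that, conversely, any Max-Cut optimizer yields an NMC optimizer --- a correspondence the paper leaves implicit.
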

\begin{proof}
\textup{A proof is presented in Section \ref{proof:nmc-max-cut}.}
\end{proof}

For bivariate jointly Gaussian variables, the conditions of Proposition \ref{prop:nmc-max-cut} are always satisfied. For multivariate jointly Gaussian variables however an optimal NMC solution $\phi_i^*(X_i)$ can be different than $\pm X_i$. Proposition \ref{prop:nmc-max-cut} provides conditions under which $\phi_i^*(X_i)=\pm X_i$ for the multivariate setup.

In general, the Max-Cut optimization \eqref{eq:nmc-max-cut} is NP-complete \cite{garey1976some}. However, there exist algorithms to approximate its solution using semidefinte programming (SDP) \cite{goemans1995improved}.
\begin{assumption}\label{asump1}
\textup{Let $(X_1,...,X_n)$ be jointly Gaussian variables, with zero means and unit variances such that $\rho_{i,i'}\neq 1$ if $i\neq i'$, and
\begin{align}
\sum_{i'\neq i} \rho_{i,i'} \geq \sum_{i'\neq i} (\rho_{i,i'})^2,\quad \forall 1\leq i\leq n,
\end{align}
where $\rho_{i,i'}$ is the correlation coefficient of $X_i$ and $X_i'$.}
\end{assumption}
If all correlation coefficients are non-negative (i.e., $\rho_{i,i'}\geq 0$ for $1\leq i,i'\leq n$), Assumption \ref{asump1} is immediately satisfied. However, Assumption \ref{asump1} is more general as some correlation coefficients can be negative and the condition will still hold.
\begin{corollary}\label{cor:nmc-guass-all-ones}
Under Assumption \ref{asump1}, $\phi_i^*(X_i)=X_i$ is a solution of the NMC optimization \eqref{eq:nmc-gauss1}.
\end{corollary}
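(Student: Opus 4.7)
The plan is to observe that Corollary \ref{cor:nmc-guass-all-ones} is essentially Proposition \ref{prop:nmc-max-cut} specialized to the sign assignment $s_i \equiv +1$, so the task reduces to verifying that this uniform assignment satisfies the hypotheses of Theorem \ref{thm-nmc-guass-gen} under Assumption \ref{asump1}.

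First I would set $s_i = +1$ for every $1 \leq i \leq n$ and check the two conditions of Theorem \ref{thm-nmc-guass-gen} in order. Because $s_i s_{i'} = 1$, the factor $(1 - s_i s_{i'})$ vanishes identically, so the first condition
\[
\sum_{i' \neq i} (1 - s_i s_{i'}) \rho_{i,i'} \geq 0
\]
holds trivially, with equality. The second condition becomes
\[
\sum_{i' \neq i} \rho_{i,i'} \;\geq\; \sum_{i' \neq i} (\rho_{i,i'})^2 \qquad \text{for all } 1 \leq i \leq n,
\]
which is precisely Assumption \ref{asump1}. Thus the hypotheses of Theorem \ref{thm-nmc-guass-gen} are met by $s_i \equiv 1$.

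Next I would invoke Theorem \ref{thm-nmc-guass-gen} directly to conclude that $\ba_i^* = (0,1,0,\ldots,0)$ is a global maximizer of the NMC optimization \eqref{eq:nmc-gauss1} on the complete graph. Translating back to the functional form via Theorem \ref{thm:hilbert-nmc}, only the coefficient on the first non-constant Hermite-Chebyshev polynomial is nonzero. Because $X_i$ has unit variance under $N(0,1)$ the normalization fixes this polynomial to be the identity $\psi_{i,1}(x) = x$, giving $\phi_i^*(X_i) = X_i$; equivalently, Proposition \ref{prop:nmc-max-cut} already asserts $\phi_i^*(X_i) = s_i^* X_i$, so the conclusion follows immediately from $s_i^* = 1$.

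There is essentially no obstacle beyond bookkeeping, since the heavy lifting (global optimality against all competing polynomial expansions, and the reduction of NMC to Max-Cut in this regime) has been done in Theorem \ref{thm-nmc-guass-gen} and Proposition \ref{prop:nmc-max-cut}. The only point worth flagging in the write-up is that Assumption \ref{asump1} is strictly more general than the case $\rho_{i,i'} \geq 0$: some pairwise correlations may be negative provided each row sum still dominates the corresponding sum of squared correlations, so the corollary is not vacuous beyond the obvious non-negative regime.
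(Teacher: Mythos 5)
Your proposal is correct and is exactly the argument the paper intends: the corollary is stated without a separate proof precisely because setting $s_i\equiv 1$ makes the first condition of Theorem \ref{thm-nmc-guass-gen} vanish identically and turns the second into Assumption \ref{asump1} verbatim, after which the theorem (equivalently Proposition \ref{prop:nmc-max-cut} with $s_i^*=1$) yields $\phi_i^*(X_i)=X_i$.
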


\section{Illustration of NMC's Use}\label{sec:applications}
In this section, we illustrate some applications of NMC.

\subsection{Inference of Graphical Models for Functions of Gaussian Variables}\label{subsec:inference-guassian-nets}
Suppose that $(X_1, \dots, X_n)$ are jointly Gaussian variables with the covariance matrix $\Lambda_X$. Without loss of generality, we assume all variables have zero means and unit variances. I.e., $\EE[X_i]=0$ and $\EE[X_i^2]=1$, for all $1\leq i\leq n$.

\begin{definition}\label{def:graphica-model}
The graphical model $G_X=(V_X,E_X)$ is defined such that if $(i,i')\notin E_X$, then
\begin{align}
X_i \perp\!\!\!\perp X_{i'} | \{X_{k},k\neq i,i'\},
\end{align}
where $\perp\!\!\!\perp$ represents independence between variables.
\end{definition}
Let $J_X$ be the information (precision) matrix \cite{wainwright2008graphical} of these variables where $J_X=\Lambda_X^{-1}$.
\begin{theorem}[\!\! \cite{wainwright2008graphical}, Example 3.3]\label{thm:guassian-graphical-model}
For jointly Gaussian variables, $(i,i')\in E_X$ if and only if $J_X(i,i')\neq 0$.
\end{theorem}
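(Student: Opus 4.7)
The plan is to exploit the explicit form of the Gaussian density in terms of the precision (information) matrix $J_X$, and to read off the conditional independence structure directly from that form.

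First I would write the joint density of $(X_1,\ldots,X_n)$ as
\begin{align*}
p(x_1,\ldots,x_n) \;\propto\; \exp\!\left(-\tfrac{1}{2}\, x^T J_X\, x\right)
\;=\; \exp\!\left(-\tfrac{1}{2}\sum_{j,k} J_X(j,k)\, x_j x_k\right),
\end{align*}
using the standing assumption that the means are zero. Fixing any realization of $X_{-\{i,i'\}} = \{x_k : k\neq i,i'\}$, the conditional density of $(X_i,X_{i'})$ given $X_{-\{i,i'\}}$ is proportional (as a function of $x_i,x_{i'}$) to
\begin{align*}
\exp\!\left(-\tfrac{1}{2} J_X(i,i) x_i^2 -\tfrac{1}{2} J_X(i',i') x_{i'}^2 - J_X(i,i')\, x_i x_{i'} + \alpha(x_{-\{i,i'\}})\, x_i + \beta(x_{-\{i,i'\}})\, x_{i'}\right),
\end{align*}
where the linear coefficients $\alpha,\beta$ absorb all cross-terms between $\{i,i'\}$ and the conditioned variables and the pure $x_k$-terms are constants.

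Next I would argue the two directions. For the ``if'' direction, assume $J_X(i,i')=0$. Then the cross-term $J_X(i,i') x_i x_{i'}$ vanishes, so the displayed exponent separates into a sum of a function of $x_i$ alone plus a function of $x_{i'}$ alone. Hence the conditional joint density factors, giving $X_i \perp\!\!\!\perp X_{i'}\mid X_{-\{i,i'\}}$, i.e., $(i,i')\notin E_X$ in the contrapositive sense, or equivalently if $J_X(i,i')\neq 0$ then $(i,i')\in E_X$. For the ``only if'' direction, suppose $X_i\perp\!\!\!\perp X_{i'}\mid X_{-\{i,i'\}}$. Then the conditional log-density must admit a decomposition $f(x_i, x_{-\{i,i'\}}) + g(x_{i'}, x_{-\{i,i'\}})$. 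Since the full conditional log-density is a quadratic in $(x_i,x_{i'})$, the only quadratic cross-term is $J_X(i,i') x_i x_{i'}$, and separability forces this coefficient to be zero, i.e.\ $J_X(i,i')=0$.

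The main ``obstacle,'' such as it is, is purely bookkeeping: making sure that when one conditions, no hidden cross-term between $x_i$ and $x_{i'}$ is produced from the other blocks of $J_X$. This is immediate because terms involving $x_k$ for $k\notin\{i,i'\}$ either (i) do not involve $x_i$ or $x_{i'}$ at all (pure constants after conditioning), or (ii) couple at most one of $x_i,x_{i'}$ with a conditioned $x_k$, producing only linear terms in $x_i$ or $x_{i'}$ separately; no $x_ix_{i'}$ term arises from these. Consequently the coefficient of $x_ix_{i'}$ in the conditional log-density equals $J_X(i,i')$ exactly, and the equivalence follows. This avoids any explicit Schur-complement computation, although one could alternatively deduce the same fact by writing the conditional covariance of $(X_i,X_{i'})$ given the rest via block matrix inversion, whose inverse is the $2\times 2$ principal submatrix of $J_X$ indexed by $\{i,i'\}$.
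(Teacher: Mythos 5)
Your proof is correct. The paper does not supply its own proof of this statement---it is imported verbatim from Wainwright and Jordan (Example 3.3)---and your argument, reading the conditional independence structure off the quadratic form $x^T J_X x$ and noting that the only $x_i x_{i'}$ cross-term in the conditional log-density has coefficient exactly $J_X(i,i')$, is precisely the standard derivation given in that reference; both directions, including the bookkeeping that conditioning introduces no hidden cross-terms, are handled correctly.
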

This Theorem has been stated in other references as well (e.g. \cite{whittaker2009graphical}). Theorem \ref{thm:guassian-graphical-model} represents a way to model explicitly the joint distribution of Gaussian variables using a graphical model $G_X=(V_X,E_X)$. This result is critical in several applications involved with Gaussian variables which requires computation of marginal distributions, or computation of the mode of the distribution. These computations can be performed efficiently over the graphical model using belief propagation approaches \cite{wainwright2008graphical}. Moreover, Gaussian graphical models play an important role in many applications such as linear regression \cite{neter1996applied}, partial correlation \cite{baba2004partial}, maximum likelihood estimation \cite{banerjee2008model}, etc. In many applications, even if variables are not jointly Gaussian, a Gaussian approximation is used often, partially owing to the efficient inference of their graphical models.

In the following, under some conditions, we use the multiple MC \eqref{eq:multiple-MC} and NMC \eqref{eq:nmc} optimizations to characterize graphical models for functions of latent jointly Gaussian variables. These functions are unknown, bijective, and can be linear or nonlinear. More precisely, let $Y_i=f_i(X_i)$, where $f_i:\mathbb{R}\to\mathbb{R}$ is a bijective function. Our goal is to characterize a graphical model for the variables $(Y_1,Y_2,\ldots,Y_n)$ without knowledge of the functions $f_i(\cdot)$.

Consider the following optimizations:
\begin{align}\label{eq:multiple-MC-Y}
\sup_{g_{i,i'},g_{i',i}}\ \EE[g_{i,i'}(Y_{i})\ g_{i',i}(Y_{i'})],
\end{align}
such that $g_{i,i'}: \mathcal{Y}_i \to \mathbb{R}$ is Borel measurable, $\EE[g_{i,i'}(Y_i)]=0$, and $\EE[g_{i,i'}(Y_i)^2]=1$, for $1\leq i,i'\leq n$, and

\begin{align}\label{eq:nmc-Y}
\max_{g_i:\mathbb{R}\to\mathbb{R}} \quad & \sum_{(i, i')} \mathbb{E}[g_i(Y_i)\ g_{i'}(Y_{i'})],\\
&\mathbb{E}[g_i(Y_i)]=0, ~~ 1\leq i \leq n,\nonumber\\
&\mathbb{E}[g_i^2(Y_i)]=1, ~~ 1\leq i \leq n,\nonumber
\end{align}
such that $g_{i}: \mathcal{Y}_i \to \mathbb{R}$ is Borel measurable, $\EE[g_{i}(Y_i)]=0$, and $\EE[g_{i}(Y_i)^2]=1$, for $1\leq i\leq n$. Optimization \eqref{eq:multiple-MC-Y} solves multiple MC between all pairs of variables while optimization \eqref{eq:nmc-Y} solves NMC considering a fully connected graph. Suppose $g_{i,i'}^*$ and $g_i^*$ represent solutions of \eqref{eq:multiple-MC-Y} and \eqref{eq:nmc-Y}, respectively.
\begin{theorem}\label{thm:nonlinear-guassian-graphical-model}
$g_{i,i'}^{*}(Y_i)=\pm X_i$ for all $1 \leq i,i'\leq n$ is a solution of \eqref{eq:multiple-MC-Y}. Moreover, if Assumption \ref{asump1} holds, $g_{i}^{*}(Y_i)= X_i$ for all $1 \leq i\leq n$ is a solution of \eqref{eq:nmc-Y}.
\end{theorem}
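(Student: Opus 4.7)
The plan is to leverage the bijectivity of each $f_i$ to reduce both optimizations in $(Y_1,\ldots,Y_n)$ to their counterparts in the latent jointly Gaussian variables $(X_1,\ldots,X_n)$, for which optimal transformations have already been identified earlier in the paper. First I would observe that for any Borel measurable $g:\mathcal{Y}_i\to\mathbb{R}$, the composition $\phi\triangleq g\circ f_i$ is a Borel measurable function of $X_i$, and conversely every such $\phi$ arises as $\phi\circ f_i^{-1}$ composed back with $f_i$ (here $f_i^{-1}$ is well-defined and Borel measurable because $f_i$ is a bijection of $\mathbb{R}$). Under this correspondence, the constraints $\mathbb{E}[g(Y_i)]=0$ and $\mathbb{E}[g(Y_i)^2]=1$ translate verbatim into $\mathbb{E}[\phi(X_i)]=0$ and $\mathbb{E}[\phi(X_i)^2]=1$, and the objective inner products $\mathbb{E}[g_{i,i'}(Y_i)g_{i',i}(Y_{i'})]$ and $\mathbb{E}[g_i(Y_i)g_{i'}(Y_{i'})]$ coincide with $\mathbb{E}[\phi_{i,i'}(X_i)\phi_{i',i}(X_{i'})]$ and $\mathbb{E}[\phi_i(X_i)\phi_{i'}(X_{i'})]$, respectively.

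For the first claim, this reduction turns each pairwise problem in \eqref{eq:multiple-MC-Y} into the bivariate MC optimization \eqref{eq:pmc} applied to the jointly Gaussian pair $(X_i,X_{i'})$. By Lancaster's classical result (invoked in Section \ref{sec:mc} and used throughout Section \ref{sec:nmc-gaussian}), the maximal correlation of jointly Gaussian variables equals the absolute value of their correlation coefficient and is attained by the linear transformations $\phi(x)=\pm x$ (with signs chosen so that the product expectation is $|\rho_{i,i'}|$). Pushing back through the bijection gives $g_{i,i'}^{*}(Y_i)=g_{i,i'}^{*}(f_i(X_i))=\pm X_i$. For the second claim, the same substitution rewrites \eqref{eq:nmc-Y} as the NMC optimization \eqref{eq:nmc-gauss1} for $(X_1,\ldots,X_n)$ on the complete graph, and Corollary \ref{cor:nmc-guass-all-ones} asserts that under Assumption \ref{asump1} the choice $\phi_i^{*}(X_i)=X_i$ is a global maximizer. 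Translating back through $g_i=\phi_i\circ f_i^{-1}$ delivers $g_i^{*}(Y_i)=X_i$.

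The technical obstacle is almost entirely nominal: it reduces to verifying carefully that the composition $g\leftrightarrow\phi=g\circ f_i$ defines a constraint-preserving bijection between the admissible function classes $H_{Y_i}$ and $H_{X_i}$, so that the suprema in the two formulations coincide and the optimal $\phi$ on the Gaussian side maps directly to an optimal $g$ on the transformed side. Once this reduction is in place, the substance of both assertions is inherited from the Gaussian analysis already developed, with no new optimization being solved.
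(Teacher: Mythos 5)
Your proposal is correct and follows essentially the same route as the paper: the paper likewise defines $\phi_i = g_i\circ f_i$, uses bijectivity of the $f_i$ to identify the feasible sets of \eqref{eq:nmc-Y} and \eqref{eq:nmc}, and then invokes Corollary \ref{cor:nmc-guass-all-ones} (the paper dismisses the first, bivariate part as ``straightforward,'' which you fill in via Lancaster's result exactly as intended). The only nominal difference is that you make the measurability of $f_i^{-1}$ explicit, a point the paper glosses over.
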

\begin{proof}
A proof is presented in Section \ref{proof:nonlinear-gaussian}.
\end{proof}
In the multiple MC optimization \eqref{eq:multiple-MC-Y}, each variable $Y_i$ is assigned to $n-1$ transformation functions $\{g_{i,i'}^{*}(Y_i): 1\leq i'\neq i\leq n\}$. However, when variables $X_1$,...,$X_n$ are jointly Gaussian, all these functions are equal to $\pm X_i$. In general this is not true for non-Gaussian distributions. On the other hand, when the Assumption \ref{asump1} holds, a solution of the NMC optimization \eqref{eq:nmc-Y} recovers the sign of latent variables as well.

We define the matrices $\Lambda_{MultiMC}$ and $\Lambda_{NMC}$ by
\begin{align*}
\Lambda_{MultiMC}(i,i')&=\EE[g_{i,i'}^*(Y_i) g_{i',i}^*(Y_{i'})]\\
\Lambda_{NMC}(i,i')&=\EE[g_i^*(Y_i) g_{i'}^*(Y_{i'})].
\end{align*}
Moreover, we let $J_{MultiMC}=\Lambda_{MultiMC}^{-1}$ and $J_{NMC}=\Lambda_{NMC}^{-1}$. We define $G_{NMC}=(V_{NMC},E_{NMC})$ such that $(i,j)\in E_{NMC}$ if and only if $J_{NMC}(i,j)\neq 0$. We also let $\cG_{NMC}$ be the set of all possible $G_{NMC}$ since the solution of the optimization \eqref{eq:nmc-Y} may not be unique. Similarly, we define $G_{MultiMC}$ and $\cG_{MultiMC}$.

\begin{corollary}\label{cor:nonlinear-guassian-graphical-model}
Let $G_{Y}$ be a graphical model of variables $Y_i=f_i(X_i)$ according to Definition \ref{def:graphica-model}, where $f_i:\mathbb{R}\to\mathbb{R}$ is a bijective function, for $1\leq i\leq n$. Then $G_{Y}\in \cG_{MultiMC}$. Moreover, if $X_i$ $1\leq i\leq n$ satisfy Assumption \ref{asump1}, then we have $G_{Y}\in \cG_{NMC}$.
\end{corollary}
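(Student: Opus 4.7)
The plan is to reduce the corollary to its two genuine ingredients: bijectivity of the $f_i$'s and the explicit form of the optimizers supplied by Theorem~\ref{thm:nonlinear-guassian-graphical-model}. First I would observe that since each $f_i:\mathbb{R}\to\mathbb{R}$ is a bijection, $\sigma(Y_i)=\sigma(X_i)$, so any conditional independence relation among the $Y_i$'s translates verbatim into the same relation among the $X_i$'s. By Definition~\ref{def:graphica-model} this forces $G_Y = G_X$, where $G_X$ is the Gaussian graphical model of $(X_1,\ldots,X_n)$, and by Theorem~\ref{thm:guassian-graphical-model}, $G_X$ is encoded by the zero pattern of $J_X = \Lambda_X^{-1}$. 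It therefore suffices to show that $G_X \in \cG_{MultiMC}$ and, under Assumption~\ref{asump1}, that $G_X \in \cG_{NMC}$.

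For the NMC half, I would invoke the second statement of Theorem~\ref{thm:nonlinear-guassian-graphical-model}: under Assumption~\ref{asump1}, $g_i^*(Y_i) = X_i$ is an admissible solution of \eqref{eq:nmc-Y}. Substituting into the definition of $\Lambda_{NMC}$ gives $\Lambda_{NMC}(i,i') = \mathbb{E}[X_i X_{i'}] = \Lambda_X(i,i')$, so $\Lambda_{NMC} = \Lambda_X$ and hence $J_{NMC} = J_X$. Since $G_{NMC}$ is defined as the zero pattern of $J_{NMC}$, this realization yields $G_{NMC} = G_X = G_Y$, so $G_Y \in \cG_{NMC}$.

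For the multiple MC half, the first statement of Theorem~\ref{thm:nonlinear-guassian-graphical-model} gives $g^*_{i,i'}(Y_i) = \pm X_i$ for every ordered pair $(i,i')$. Within the $\pm$ freedom allowed by that theorem I would select the admissible family $g^*_{i,i'}(Y_i) = s_i X_i$, with a node-indexed sign $s_i \in \{-1,+1\}$ that does not depend on the second index $i'$. Then $\Lambda_{MultiMC}(i,i') = s_i s_{i'}\,\Lambda_X(i,i')$, i.e.\ $\Lambda_{MultiMC} = S\,\Lambda_X\,S$ with $S = \mathrm{diag}(s_1,\ldots,s_n)$, so $J_{MultiMC} = S\,J_X\,S$ has exactly the same zero pattern as $J_X$. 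Consequently $G_{MultiMC} = G_X = G_Y$ for this admissible solution, and $G_Y \in \cG_{MultiMC}$.

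The main obstacle I anticipate is the sign bookkeeping in the multiple MC step. One has to argue that picking the sign $s_i$ to depend only on the node $i$ is a legitimate member of the multiple MC solution set, because a generic pair-by-pair sign assignment would collapse $\Lambda_{MultiMC}$ to the entrywise $|\Lambda_X|$, whose inverse need not share the zero pattern of $J_X$. The saving point is precisely the definition of $\cG_{MultiMC}$ as the union of graphs over all solutions of \eqref{eq:multiple-MC-Y}: it is enough to exhibit one sign pattern whose induced precision matrix $J_{MultiMC}$ is related to $J_X$ by diagonal conjugation, which is exactly what the node-indexed choice provides.
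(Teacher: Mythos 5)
Your reduction $G_Y=G_X$ via bijectivity of the $f_i$'s, and your treatment of the NMC half, are correct and coincide with what the paper intends: the corollary carries no separate proof and is read off directly from Theorem~\ref{thm:nonlinear-guassian-graphical-model} together with Theorem~\ref{thm:guassian-graphical-model}, and for the NMC half the admissible solution $g_i^*(Y_i)=X_i$ under Assumption~\ref{asump1} does give $\Lambda_{NMC}=\Lambda_X$, hence $J_{NMC}=J_X$ and $G_Y\in\cG_{NMC}$.

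The multiple-MC half, however, contains a genuine gap, located exactly at the point you flag as the main obstacle. The family $g^*_{i,i'}(Y_i)=s_iX_i$ with node-indexed signs is a solution of \eqref{eq:multiple-MC-Y} only if, for every pair with $\rho_{i,i'}\neq 0$, it attains the bivariate maximal correlation $\rho(Y_i,Y_{i'})=|\rho_{i,i'}|$; this forces $s_is_{i'}=\mathrm{sign}(\rho_{i,i'})$ for all such pairs. A sign vector $s\in\{-1,+1\}^n$ satisfying these constraints exists if and only if the signed correlation graph is balanced, which is not guaranteed: with three variables and $\rho_{1,2},\rho_{1,3}>0$ but $\rho_{2,3}<0$, no node-indexed choice works, since the product of the required edge signs around the triangle is $-1$ while $(s_1s_2)(s_1s_3)(s_2s_3)=+1$. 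In the unbalanced case every genuine solution of \eqref{eq:multiple-MC-Y} yields $\Lambda_{MultiMC}(i,i')=|\rho_{i,i'}|$ --- precisely the entrywise collapse to $|\Lambda_X|$ that you correctly identify as dangerous --- so appealing to the union over all solutions in the definition of $\cG_{MultiMC}$ does not rescue the argument: the solution you need is simply absent from the solution set. To close the gap one must either add a hypothesis guaranteeing sign-balancedness (for instance $\rho_{i,i'}\ge 0$ for all pairs, under which your diagonal-conjugation argument goes through with $S=I$), or read the solution set of \eqref{eq:multiple-MC-Y} more loosely than its literal definition permits; the paper itself makes this leap silently.
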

Corollary \ref{thm:nonlinear-guassian-graphical-model} characterizes the graphical model of variables $\{Y_i\}$ that are related to latent jointly Gaussian variables $\{X_i\}$ through the unknown bijective functions $\{f_i\}$. The family of distributions considered in this corollary is broad and includes many Gaussian distributions as well as distributions whose variables are bijective functions of Gaussian variables. Graphical models characterized in Corollary \ref{thm:nonlinear-guassian-graphical-model} can be used in computation of marginal distributions, computation of the mode of the joint distribution, and in other applications of estimation and prediction similarly to the case of Gaussian graphical models.

\begin{figure}[t]
  \centering
      \includegraphics[width=0.5\textwidth]{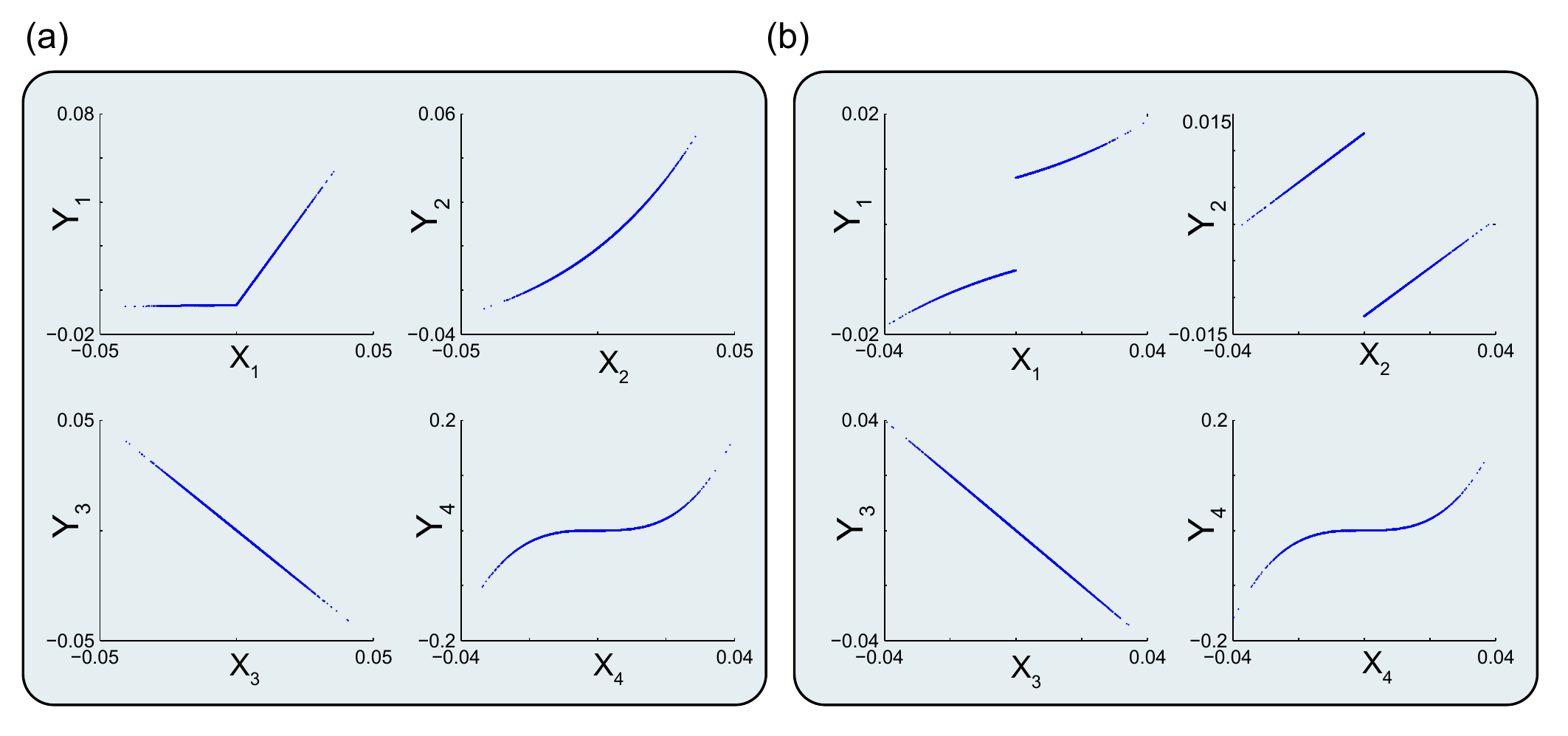}
  \caption{Relationships between jointly Gaussian variables $X_i$ and observations $Y_i$ for example 1 (panel a), and for example 2 (panel b).}
  \label{fig:x-y-phi}
\end{figure}

Next we provide two examples to highlight similarities and differences between our proposed inference framework (Theorem \ref{cor:nonlinear-guassian-graphical-model}) and the one in reference \cite{liu2012high}. Liu et al.  \cite{liu2012high} considers the graphical model inference problem for functions of jointly Gaussian variables when the underlying link functions are monotone. In \cite{liu2012high} to characterize the graphical model of variables $Y_i$ according to Definition \ref{def:graphica-model} (or equivalently the covariance matrix of jointly Gaussian variables $X_i$), rank statistics between variables and connections between Spearman's and Pearson's correlation coefficients are employed.

Consider four zero mean jointly Gaussian variables $X_1$,...,$X_4$ with the covariance matrix
\begin{align}
\Lambda_X=\begin{bmatrix}
1.0  &  0.4 &   0.2 &   0.3\\
0.4  &  1.0  &  0.3  &  0.2 \\
0.2  &  0.3  &  1.0 &   0.4\\
0.3  &  0.2  &  0.4  &  1.0
\end{bmatrix}.
\end{align}
In this case we have $J_X(1,3)\approx 0$ and $J_X(2,4)\approx 0$ (the underlying graphical model is illustrated in Figure \ref{fig:ex-graph}). We observe samples from $Y_i=f_i(X_i)$. In Example 1, we have
\begin{align}\label{SIeq:setup1}
Y_1&=f_1(X_1)= \begin{cases}
    10X_1,& \text{if } X_1\geq 0,\\
    \frac{1}{10} X_1,              & \text{otherwise},
\end{cases}, \quad Y_2=f_2(X_2)=e^{20X_2},\\
Y_3&=f_3(X_3)=-X_3,\quad Y_4=f_4(X_4)=X_4^3. \nonumber
\end{align}
In this example all link functions are continuous and monotone satisfying the model assumption of our proposed inference framework (Corollary \ref{cor:nonlinear-guassian-graphical-model}) and the copula method \cite{liu2012high}. In Example 2, we violate the model assumption for both our inference framework and for the copula method by considering non-monotone link functions:

\begin{align}\label{SIeq:setup2}
Y_1&=f_1(X_1)= \begin{cases}
    e^{20X_1},& \text{if } X_1\geq 0,\\
    -e^{-20X_1},   & \text{otherwise},
\end{cases}\\
Y_2&=f_2(X_2)= \begin{cases}
    \frac{1}{\max(X_2)}X_2-1,& \text{if } X_2\geq 0,\\
    \frac{-1}{\min(X_2)}X_2+1,   & \text{otherwise},
\end{cases}\nonumber\\
Y_3&=f_3(X_3)=-X_3,\quad Y_4=f_4(X_4)=X_4^3.\nonumber
\end{align}
Relationships between samples of these variables are illustrated in Figure \ref{fig:x-y-phi}. We have drawn 10,000 samples of these random variables. Note that all variables are normalized to have zero means and unit variances. The functions $f_i(\cdot)$ remain unknown for the inference part.

In the computation of NMC, if variables are continuous and we only observe samples from their joint distributions, the empirical computation of conditional expectations in Algorithm \ref{alg:nmc} may be challenging owing to the lack of sufficient samples. One approach to compute empirical conditional expectations at point $x_i\in \mathbb{R}$ is to use all samples in its $B_i$ neighborhood. With $m$ samples $\{( x_1^{(s)}, \dots, x_{n}^{(s)})\}_{s=1}^m$, for any $i, i'$, this leads to
\begin{align}
P^{(m)}_{X_i, X_{i'}}(x_i, x_{i'})  =\frac{1}{m} \sum_{s=1}^m \mathbf{1} \Big\{ &x_i^s \in \left[ x_i - \frac{B_i}{2}, x_i+ \frac{B_i}{2} \right],\\
&x_{i'}^s \in \left[ x_{i'} - \frac{B_{i'}}{2}, x_{i'}+ \frac{B_{i'}}{2} \right] \Big\}.\nonumber
\end{align}
In our ACE implementations to compute NMC for continuous variables we have considered both fixed and variable window sizes (i.e., $B_i$'s). In our simulations, in the variable window size case, we consider $10$ bins and choose $B_i$'s so that the number of samples in different bins are the same.

Let $\tilde{J}$ be an estimation of the inverse covariance matrix. Since in the underlying graphical model there are no edges between nodes 1 and 3, and nodes 2 and 4 (Figure \ref{fig:ex-graph}), we use $\left(|\tilde{J}(1,3)|+|\tilde{J}(2,4)|\right)/\sum_{1\leq i,j\leq n} |\tilde{J}(i,j)|$ as a measure to evaluate error in inferring the graphical model structure.

Figure \ref{fig:error-gaussian} shows inference errors for $J_{nmc}$ (i.e., the NMC inference framework), $J_{copula}$ (i.e., the copula method \cite{liu2012high}), $J_{Y}$ (i.e., observed variables) and $J_{X}$ (i.e., latent variables) for Example 1 and 2. In the case of Example 1, both NMC and copula inference frameworks have small and comparable errors. In the case of Example 2 where we violate the monotonicity assumption of link functions, NMC appears to outperform copula. Experiments have been repeated over 50 random realizations of variables.

In this section we focused on the application of MC and NMC in learning graphical models for functions of jointly Gaussian variables. A similar MC/NMC framework can potentially be useful in learning graphical models in other setups such as tree graphical models with incomplete samples \cite{choi2011learning}. We leave further exploration of this application for future research.

\begin{figure}[t]
  \centering
      \includegraphics[width=0.2\textwidth]{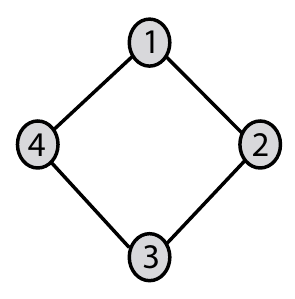}
  \caption{The underlying graphical model considered in Examples 1 and 2 in Section \ref{subsec:inference-guassian-nets}.}
  \label{fig:ex-graph}
\end{figure}

\begin{figure}[t]
  \centering
      \includegraphics[width=0.4\textwidth]{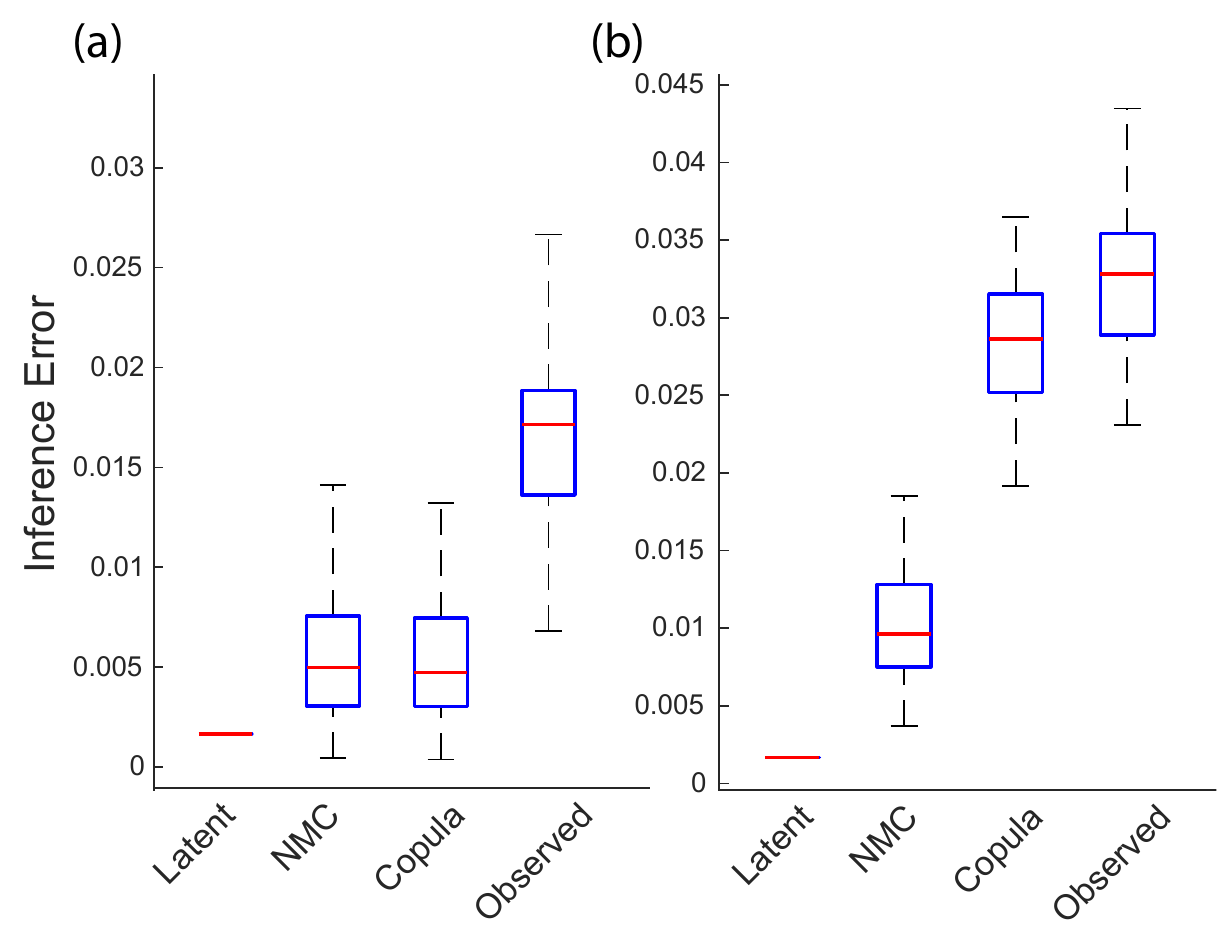}
  \caption{Inference errors for $J_{nmc}$ (i.e., the NMC inference framework), $J_{copula}$ (i.e., the copula method \cite{liu2012high}), $J_{Y}$ (i.e., observed variables) and $J_{X}$ (i.e., latent variables) for setups of Example 1 (panel a) and Example 2 (panel b). Experiments have been repeated over 50 random realizations of variables. The red line in the middle of each box is the sample median. The tops and bottoms of each box are the 25-th and 75-th percentiles of the samples, respectively. The performance of NMC and copula inference frameworks are comparable in the setup of Example 1, while in the one of Example 2 NMC outperforms copula.}
  \label{fig:error-gaussian}
\end{figure}

\subsection{Illustration of Sample NMC's Use in a Data Application}\label{sec:cancer}
Having illustrated applications of the NMC optimization in learning nonlinear dependencies among variables, here we provide an example to demonstrate NMC's utility in a real data application. We validate inference results with second, distinct data set that was not used in the inference part. This validation step provides evidence that inference results are likely to be meaningful.

{\bfseries Data:} Cancer is a complex disease involving abnormal cell growth with the potential to invade or spread to other parts of the body \cite{stewart2003world}. Different studies have shown associations of micro RNA patterns in different human cancers \cite{calin2006microrna,li2014potential}. In this section, we use normalized RNA sequence counts from the TCGA data portal for the Glioma cancer (GBMLGG) at the gene level \cite{li2014potential}. Other cancer types can be considered in our framework as well. We use the processed data provided in \cite{li2014potential}. Let the variable $X_i$ denote the RNA sequence counts of the gene $i$. We have samples from this variable in $m$ patients denoted by $\{x_i^j\}_{j=1}^{m}$. In the following we explain inference assumptions and steps we have taken in this application.

{\bfseries Inferring nonlinear gene-gene interactions:} for each cancer type, first we select the top $500$ highly-variant genes based on their normalized variances (i.e., $n=500$) \cite{huber2002variance}. Let $\phi_i^*(X_i)$ for $1\leq i\leq n$ be a solution for the NMC optimization \eqref{eq:nmc} over a complete graph without self-loops. Let $A_{NMC}\in\mathbb{R}^{n\times n}$ be a symmetric adjacency matrix where $A_{NMC}[i,i']=\EE[\phi_i^*(X_i) \phi_{i'}^*(X_{i'})]$ for $1\leq i,i'\leq n$. Moreover, let $A_{Lin}$ be a symmetric linear covariance matrix where $A_{Lin}[i,i']=\EE[X_i X_{i'}]$ for $1\leq i,i'\leq n$. Figure \ref{fig:cancer1}-a illustrates top $5\%$ of elements of the $|A_{NMC}|-|A_{Lin}|$ matrix for the Glioma Cancer (GBMLGG). The non-zero elements of this matrix represent gene pairs whose RNA sequence counts are strongly and nonlinearly associated to each other. In Figure \ref{fig:cancer1-details}, we consider some other density parameters as well (e.g., $1\%$ and $10\%$).

It is important to note that this real-data example is based on several heuristic steps and assumptions about the underlying data that we explain below.

{\bfseries Assumptions:} we assume that different RNA sequence count samples for a gene (i.e., $\{x_i^j\}_{j=1}^{m}$) are independent. That is, there is no between-patient correlations among these samples \cite{pritchard2000inference}. Also we assume that conditions of Theorem \ref{thm-nmc-guass-gen} holds. That is, input data comes from, possibly nonlinear, functions of some latent jointly Gaussian variables satisfying conditions of Theorem \ref{thm-nmc-guass-gen}. These functions are unknown and bijective. This assumption is less restrictive than the one of the standard covariance analysis where input variables are assumed to be jointly Gaussian.

{\bfseries Inferring Nonlinear Gene Modules:} we partition each network to $k$ groups using a standard spectral clustering algorithm based on the modularity transformation \cite{newman2006modularity}. We consider different values of $k$ (between 1 and 20) to obtain dense and large clusters. Then, in a heuristic manner, we merge small and heavily overlapping modules to form large and dense ones. We define {\it a gene module} as a group of genes that are densely connected to each other in the network. A gene module is called nonlinear if it is present in the NMC network but not in the linear one. We use a permutation test \cite{good2013permutation} to compute a $p$-value for each gene module in the network by permuting the network structure and comparing the density of the module in the original network with the ones in permutated ones. We only consider gene modules with $p$-values less than $0.05$. An example of such a nonlinear gene module module is illustrated by a yellow box in Figure \ref{fig:cancer1}-a.

{\bfseries Validations using Survival Time Analysis:} we partition individuals to two equal-size groups based on their average ranks of normalized RNA sequence counts in that module to evaluate if there are statistically significant differences between survival times of patients in the two groups. In order to do that, we perform a standard survival time analysis for each module based on Kaplan-Meier procedure to estimate the underlying survival function \cite{kaplan1958nonparametric}. We compute its associated log-rank $p$-value to determine its association with individual survival times in the considered cancer type \cite{bland2004logrank}. We perform Benjamini and Hochberg multiple hypothesis correction \cite{benjamini1995controlling} for the computed $p$-values of different nonlinear modules. We find that this gene module is significantly associated with survival times of cancer patients (Figure \ref{fig:cancer1}-b) while it is not detected using linear association measures. Several references \cite{bailey2009nonlinear,huang2007potential,lowengrub2010nonlinear,nagel1971computer,wagner1986steady} have hypothesized that complex nonlinear relationships among genes may play important roles in cancer pathways. The proposed NMC algorithm and inferred nonlinear gene modules can be used in discovering such complex nonlinear relationships in different cancer types. To substantiate these inferences, further experiments should be performed to determine the involvement of these nonlinear gene interactions and modules in different cancers, which is beyond the scope of the present paper.
\begin{figure}[t]
  \centering
      \includegraphics[width=9cm]{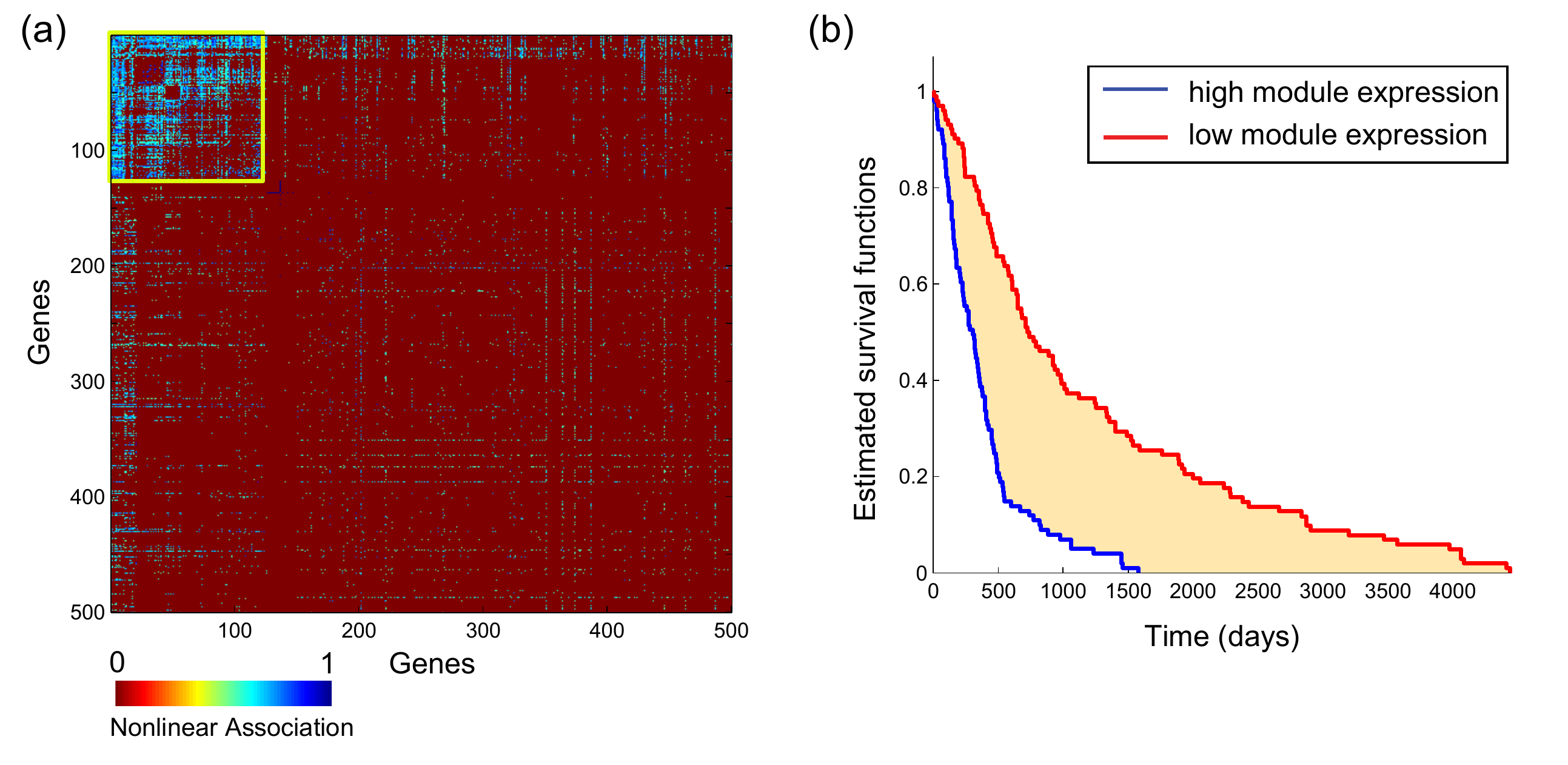}
      \caption{(a) A nonlinear gene module of Glioma cancer (GBMLGG) defined in Section \ref{sec:cancer}, as a group of genes whose RNA sequence counts are strongly and nonlinearly dependent among each other. (b) Survival time curves for the corresponding nonlinear cancer module. Survival times of cancer patients are significantly associated with average normalized RNA sequence counts of inferred gene module.}
  \label{fig:cancer1}
\end{figure}

\begin{figure}[t]
  \centering
      \includegraphics[width=9cm]{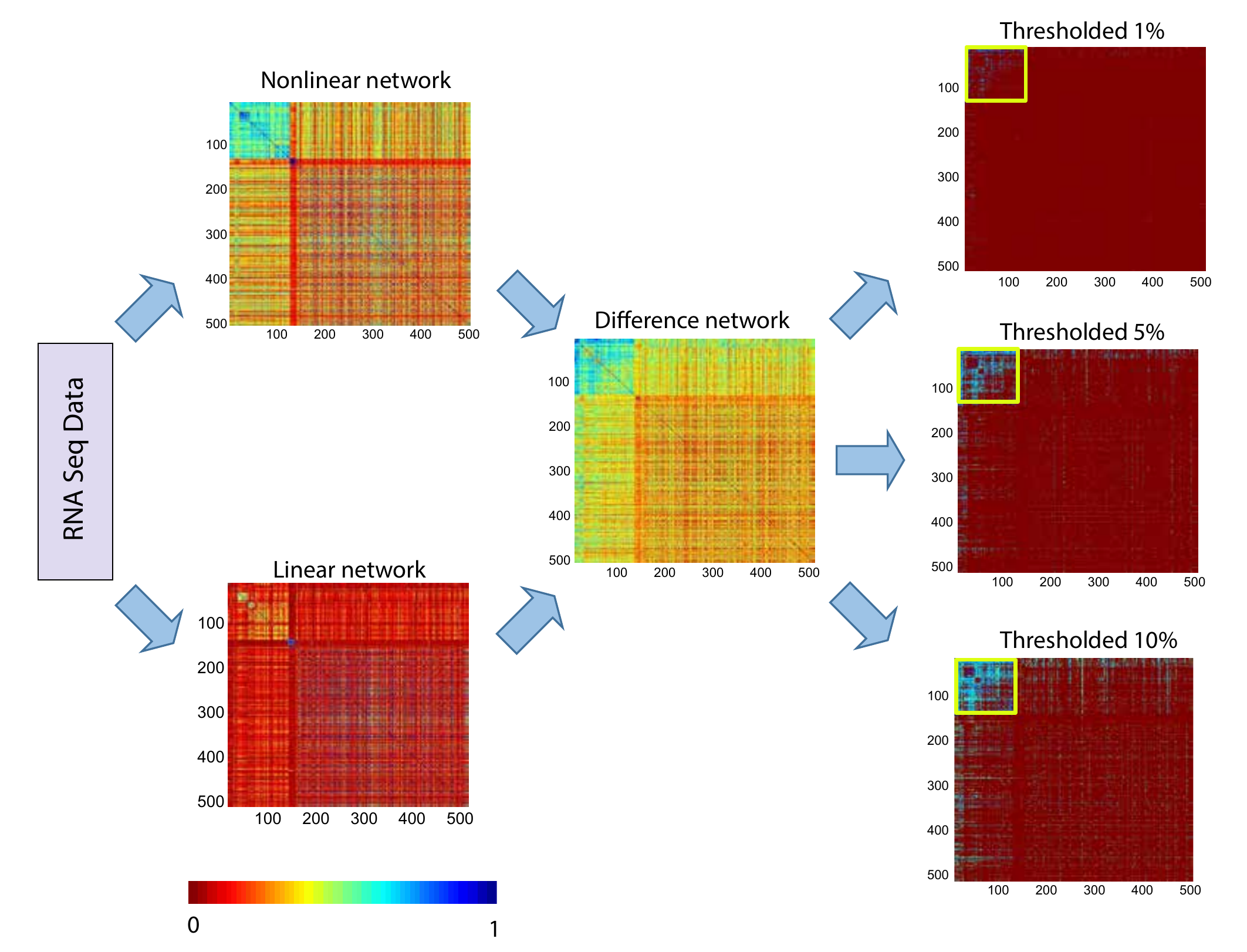}
      \caption{An example illustrating the framework considered in Section \ref{sec:cancer} for Glioma (GBMLGG). Using normalized RNA sequence counts from the TCGA data portal, we construct both linear ($A_{Lin}$) and nonlinear ($A_{NMC}$) complete dependency graphs between genes. We then select the top $5\%$ interactions among genes in the NMC network with the largest nonlinear association increases compared to their linear association strengths. Then, using spectral clustering \cite{newman2006modularity} along with a heuristic step to merge small overlapping clusters, we identify dense and large clusters over the thresholded network. In this example, we illustrate the inferred nonlinear gene module over networks with density parameters $1\%$, $5\%$ and $10\%$.}
  \label{fig:cancer1-details}
\end{figure}

\section{Proofs}\label{Sec:Proofs}

\subsection{Proof of Theorem \ref{thm:hilbert-nmc}}\label{subsec:proof-hilbert-nmc}
\begin{proof}
Recall that $ \{ \psi_{i,j} \}_{j=1}^{\infty}$ is the orthonormal basis of $H_{X_i}$ for $1\leq i\leq n$. We can represent functions $\phi_i$ and $\phi_{i'}$ in terms of the basis functions, i.e., for any $x_i \in \mathcal{X}_i, x_{i'} \in \mathcal{X}_{i'}$,
\begin{align*}
& \phi_i(x_i) = \sum_{j=1}^{\infty} a_{i,j} \psi_{i,j} (x_i), \\
& \phi_{i'}(x_{i'}) = \sum_{j=1}^{\infty} a_{i',j} \psi_{i',j} (x_{i'}),
\end{align*}
for two sequences of coefficients $\{a_{i, j}\}_{j=1}^{\infty}$ and $\{a_{i', j}\}_{j=1}^{\infty}$.
Thus, the constraint $\EE[\phi_k(X_k)^2]=1$ in optimization \eqref{eq:nmc} would be translated into $\sum_{j=1}^{\infty} a_{k,j}^2 =1$ and the constraint $\mathbb{E}[\phi_k(X_k)]=0$ is simplified to $\sum_{j=1}^{\infty} a_{k,j} \mathbb{E}[\psi_{k,j} (X_k)]=0$, for $k=i, i'$. Moreover, we have
\begin{align}
\mathbb{E}[\phi_i(X_i)\phi_{i'}(X_{i'})]= \sum_{j, j'=1}^{\infty} a_{i,j} a_{i',j'}\ \mathbb{E}[\psi_{i,j} (X_i) \psi_{i',j'} (X_{i'})].
\end{align}
\noindent
Thus, Network Maximal Correlation optimization \eqref{eq:nmc} can be re-written as follows:
\begin{align}\label{eq:proof-hilbert-opt1}
\sup_{\{\{a_{i,j}\}_{i=1}^{n}\}_{j=1}^{\infty}} \quad &\sum_{i,i'}\sum_{j, j'} a_{i,j} a_{i',j'}\ \mathbb{E}[\psi_{i,j} (X_i) \psi_{i',j'} (X_{i'})] \\
& \sum_{j=1}^{\infty} a_{i,j}^2 =1, ~~ 1\leq i\leq n, \nonumber \\
& \sum_{j=1}^{\infty} a_{i,j}\ \mathbb{E}[\psi_{i,j} (X_i)]=0, ~~ 1\leq i\leq n.\nonumber
\end{align}
\noindent
Moreover, $\{\psi_{i,j} \psi_{i',j'}\}_{j,j'}$ form a basis for the following set of functions $\{\phi_{i,i'}: \mathcal{X}_i \times \mathcal{X}_{i'} \to \mathbb{R}~:~ \mathbb{E}[\phi_{i,i'}^2(X_i, X_{i'})] < \infty\}$. Since $P_{X_i, X_{i'}}(\cdot,\cdot) \in $ belongs to this set of functions, we can write
\begin{align}\label{eq:proof-hilbert-joint}
P_{X_i, X_{i'}} (x_i, x_{i'})= \sum_{j, j'} \rho_{i,i'}^{j,j'} \psi_{i,j} (x_i) \psi_{i',j'} (x_{i'}).
\end{align}
This completes the proof.
\end{proof}
\noindent

\subsection{Proof of Theorem \ref{thm:NMC2MCP}}\label{proof:thm:nmc2mcp}
\begin{proof}
To prove Theorem \ref{thm:NMC2MCP}, first we show that the constraints $\mathbb{E}[\phi_i(X_i)]=0$ and $\mathbb{E}[\phi_i(X_i)^2]=1$ lead to the same solution as the constraints $\text{var} (\phi_i(X_i)) =1$. We then show that the constraint $\mathbf{a}_i \perp \sqrt{\mathbf{p}_i}$ can be incorporated into the objective function, without changing the solution.
\begin{lemma}\label{lemma:NMC_reformulate_variance}
The NMC optimization \eqref{eq:nmc} can be written as follows:
\begin{align}\label{eq:NMC-variance-Reformulate}
\max_{\phi_1, \dots, \phi_n}\quad & \sum_{(i, i') \in E} \mathbb{E}[(\phi_i (X_i)- \bar{\phi_i})(\phi_{i'} (X_{i'})- \bar{\phi_{i'}})]\\
&\text{var}(\phi_i(X_i))=1, \quad 1\leq i\leq n,\nonumber
\end{align}
where $\bar{\phi_i}$ and $\text{var }(\phi_i(X_i))$ represent the mean and the variance of the random variable $\phi_i(X_i)$.
\end{lemma}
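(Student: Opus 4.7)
The plan is to establish a bijection between feasible points of the two optimizations that preserves the objective value, which will immediately yield equivalence of the suprema. The key observation is that the cross-term $\mathbb{E}[(\phi_i(X_i)-\bar\phi_i)(\phi_{i'}(X_{i'})-\bar\phi_{i'})]$ is the covariance, hence shift-invariant: replacing $\phi_i$ by $\phi_i - c_i$ for any constant $c_i$ does not change it, while it does reduce the mean to zero when $c_i = \bar\phi_i$.

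First I would take any feasible $(\phi_1,\ldots,\phi_n)$ for the variance-constrained problem \eqref{eq:NMC-variance-Reformulate} and define the centered transformations $\tilde\phi_i(X_i) \triangleq \phi_i(X_i) - \bar\phi_i$. Then $\mathbb{E}[\tilde\phi_i(X_i)] = 0$ and $\mathbb{E}[\tilde\phi_i(X_i)^2] = \text{var}(\phi_i(X_i)) = 1$, so the tuple $(\tilde\phi_1,\ldots,\tilde\phi_n)$ is feasible for the original NMC optimization \eqref{eq:nmc}. Moreover, expanding shows
\begin{align*}
\mathbb{E}[\tilde\phi_i(X_i)\tilde\phi_{i'}(X_{i'})] = \mathbb{E}[(\phi_i(X_i)-\bar\phi_i)(\phi_{i'}(X_{i'})-\bar\phi_{i'})],
\end{align*}
so the edgewise summands agree term by term and the two objective values coincide.

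Conversely, any feasible $(\phi_1,\ldots,\phi_n)$ for \eqref{eq:nmc} automatically satisfies $\bar\phi_i = 0$ and $\text{var}(\phi_i(X_i)) = \mathbb{E}[\phi_i(X_i)^2] = 1$, so it is feasible for \eqref{eq:NMC-variance-Reformulate} with the same objective. Combining the two directions shows the feasible sets are mapped onto each other (up to additive constants that do not affect the objective), and therefore the suprema are identical, with optima of one yielding optima of the other via centering.

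The main potential subtlety is ensuring measurability and integrability are preserved under the centering step, but this is immediate since $\bar\phi_i$ is a constant and $\phi_i \in H_{X_i}$ already has finite second moment by the definition \eqref{def:Hi}. There is no real obstacle in the argument; the lemma is essentially the observation that the covariance form of the objective is invariant under translation of each $\phi_i$, which decouples the zero-mean constraint from the unit-second-moment constraint.
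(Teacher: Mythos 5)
Your proof is correct and follows essentially the same route as the paper's: both directions rest on the observation that centering $\phi_i \mapsto \phi_i - \bar\phi_i$ preserves the covariance objective and converts the variance constraint into the zero-mean, unit-second-moment constraints. The only (minor, and slightly cleaner) difference is that you argue with arbitrary feasible points rather than with optimal solutions as the paper does, so you compare suprema directly without assuming maximizers exist.
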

\begin{proof}
In this proof, we denote the random variable $\phi_i(X_i)$ by $\phi_i$. We let the optimal objective value of optimization \eqref{eq:NMC-variance-Reformulate} be $\tilde{\rho}_G$. We also let $\phi_i^*$ be an solution of \eqref{eq:nmc}. The set of functions $\phi_i^*$ for $i=1, \dots, n$ is feasible for optimization \eqref{eq:NMC-variance-Reformulate} and therefore we have $\rho_G \le \tilde{\rho}_G $. On the other hand, let $\phi_i^{**}$ be an solution of optimization \eqref{eq:NMC-variance-Reformulate}. Let $\tilde{\phi}_i= \phi_i^{**}- \bar{\phi_i}^{**}$. The set of functions $\tilde{\phi}_i$ for $i=1, \dots, n$ is feasible for optimization \eqref{eq:nmc}. Thus, we have $\rho_G \ge \tilde{\rho}_G$. Therefore, we have that $\rho_G = \tilde{\rho}_G$.
\end{proof}
\noindent
We have
\begin{align}
1& = \mathbb{E}[\phi_i(X_i)^2]- (\mathbb{E}[\phi_i(X_i)])^2= ||\mathbf{a}_i||_2^2- (\mathbf{a}_i \sqrt{\mathbf{p}_i})^2\\
& = \mathbf{a}_i^T \left( I - \sqrt{\mathbf{p}_i} \sqrt{\mathbf{p}_i}^T \right) \mathbf{a}_i.\nonumber
\end{align}
We next show that the matrix $I - \sqrt{\mathbf{p}_i} \sqrt{\mathbf{p}_i}^T $ is positive semidefinite and the only vectors in its null space are $\mathbf{0}$ and $\sqrt{\mathbf{p}_i}$. This is because
\begin{align}\label{eq:sdp-proof}
\mathbf{x}^T \left( I - \sqrt{\mathbf{p}_i} \sqrt{\mathbf{p}_i}^T \right) \mathbf{x} = ||\bx||_2^2- (\mathbf{x} \sqrt{\mathbf{p}_i})^2 \ge 0,
\end{align}
where we use Cauchy-Schwartz and $||\sqrt{\mathbf{p}_i}||_2^2=1$ to obtain the last inequality \eqref{eq:sdp-proof}. This inequality becomes an equality if and only if $\mathbf{x}=0$ or $\mathbf{x}= \sqrt{\mathbf{p}_i}$.
\\Now consider the objective function of optimization \eqref{eq:NMC-variance-Reformulate}. We have
\begin{align*}
& \mathbb{E}[(\phi_i(X_i)- \bar{\phi_i})(\phi_{i'}(X_{i'})- \bar{\phi_{i'}})]= \mathbb{E}[\phi_i(X_i) \phi_{i'}(X_{i'})] - \bar{\phi_i} \bar{\phi_{i'}} \nonumber \\
& = \mathbf{a}_i^T Q_{i,i'} \mathbf{a}_{i'}- (\mathbf{a}_i^T \sqrt{\mathbf{p}_i}) (\mathbf{a}_{i'}^T \sqrt{\mathbf{p}_{i'}})= \mathbf{a}_i^T \left( Q_{i,i'}- \sqrt{\mathbf{p}_i} \sqrt{\mathbf{p}_{i'}}^T \right) \mathbf{a}_{i'}.\nonumber
\end{align*}
Therefore, optimization \eqref{eq:NMC-variance-Reformulate} (which is equivalent to the NMC optimization \eqref{eq:nmc} according to Lemma \ref{lemma:NMC_reformulate_variance}) can be written as,
\begin{align}\label{eq:nmc-mcp-proof-opt}
\max_{\mathbf{a}_i} \quad & \sum_{(i, i') \in E} \mathbf{a}_i^T \left( Q_{i,i'}- \sqrt{\mathbf{p}_i} \sqrt{\mathbf{p}_{i'}}^T \right) \mathbf{a}_{i'}\\
& \mathbf{a}_i^T \left( I - \sqrt{\mathbf{p}_i} \sqrt{\mathbf{p}_i}^T \right) \mathbf{a}_i=1, \quad 1\leq i\leq n.\nonumber
\end{align}
For each $i$, since $ I - \sqrt{\mathbf{p}_i} \sqrt{\mathbf{p}_i}^T $ is positive semidefinte. Thus, we can write $I - \sqrt{\mathbf{p}_i} \sqrt{\mathbf{p}_i}^T= B_i B_i^T$. Recall $\mathbf{b}_i= B_i \mathbf{a}_i$. Thus, constraints of optimization \eqref{eq:nmc-mcp-proof-opt} can be written as $\mathbf{b}_i^T \mathbf{b}_i=||\mathbf{b}_i||_2^2=1$. We next write $\mathbf{a}_i$ as a function of $\mathbf{b}_i$. Note that since $B_i$ is not invertible, there are many choices for $\mathbf{a}_i$ as a function of $\mathbf{b}_i$ characterized as follows:
\noindent
Let $U_i \Sigma_i U_i^T $ be the singular value decomposition of $B_i$. The vector $\sqrt{\mathbf{p}_i}$ is the singular vector corresponding to singular value zero ($\sigma_{i}^{(1)}=0$).
\begin{align}\label{eq:ba-i-choices}
\mathbf{a}_i = &\big( [U^{(2)}_i, \dots, U^{(|\mathcal{X}_i|)}_i] \text{diag}(1/ \sigma_i^{(2)}, \dots, 1/ \sigma_i^{(n_i)})\\ &[U^{(2)}_i, \dots, U^{(|\mathcal{X}_i)}_i]^T \big) \mathbf{b}_i+ \alpha_i \sqrt{\mathbf{p}_i}= A_i \mathbf{b}_i + \alpha_i  \sqrt{\mathbf{p}_i},\nonumber
\end{align}
where $\alpha_i$ can be any scalar.\footnote{Since $B_i$ is symmetric it has a set of $|\mathcal{X}_i|$ orthonormal eigenvectors and can be written as \[B_i= \sum_{j=1}^{|\mathcal{X}_i|} \mathbf{v}_j \sigma_i^{(j)} \mathbf{v}_j^T.\] We have $\mathbf{b}_i= \sum_{j=2}^{|\mathcal{X}_i|} \beta_j \mathbf{v}_j$ and $\mathbf{a}_i= \sum_{j=1}^{|\mathcal{X}_i|} \alpha_j \mathbf{v}_j$. From $\mathbf{b}_i= B_i \mathbf{a}_i$, we obtain that $\alpha_j= \beta_j/\sigma_i^{(j)}$ for $j \ge 2$, where $\alpha_1$ can be any scalar.} Below, we show that all choices of $\ba_i$ according to \eqref{eq:ba-i-choices} lead to the same objective function of optimization \eqref{eq:nmc-mcp-proof-opt}:
\begin{align*}
& \mathbf{a}_i^T \left( Q_{i,i'}- \sqrt{\mathbf{p}_i} \sqrt{\mathbf{p}_{i'}}^T \right) \mathbf{a}_{i'}  =  \mathbf{b}_i^T A_i^T \left( Q_{i,i'}- \sqrt{\mathbf{p}_i} \sqrt{\mathbf{p}_{i'}}^T \right)  A_{i'} \mathbf{b}_{i'}  \\
&  + \mathbf{b}_i^T A_i^T \left( Q_{i,i'}- \sqrt{\mathbf{p}_i} \sqrt{\mathbf{p}_{i'}}^T \right)  \alpha_{i'} \sqrt{\mathbf{p}_{i'}}  \\
& + \alpha_{i} \sqrt{\mathbf{p}_{i}}^T \left( Q_{i,i'}- \sqrt{\mathbf{p}_i} \sqrt{\mathbf{p}_{i'}}^T \right)  A_{i'} \mathbf{b}_{i'} \\
& + \alpha_{i} \sqrt{\mathbf{p}_{i}}^T \left( Q_{i,i'}- \sqrt{\mathbf{p}_i} \sqrt{\mathbf{p}_{i'}}^T \right)  \alpha_{i'} \sqrt{\mathbf{p}_{i'}}\\
& \stackrel{(1)}{=} \mathbf{b}_i^T A_i^T \left( Q_{i,i'}- \sqrt{\mathbf{p}_i} \sqrt{\mathbf{p}_{i'}}^T \right)  A_{i'} \mathbf{b}_{i'}  \\
& + \mathbf{b}_i^T A_i^T Q_{i,i'} \alpha_{i'} \sqrt{\mathbf{p}_{i'}}   + \alpha_{i} \sqrt{\mathbf{p}_{i}}^T Q_{i,i'} A_{i'} \mathbf{b}_{i'}   \\
& \stackrel{(2)}{=}  \mathbf{b}_i^T A_i^T \left( Q_{i,i'}- \sqrt{\mathbf{p}_i} \sqrt{\mathbf{p}_{i'}}^T \right)  A_{i'} \mathbf{b}_{i'}   + \mathbf{b}_i^T A_i^T \sqrt{\mathbf{p}_{i}}  \alpha_{i'}  \\
& + \alpha_{i} \sqrt{\mathbf{p}_{i'}}^T A_{i'} \mathbf{b}_{i'} =  \mathbf{b}_i^T A_i^T \left( Q_{i,i'}- \sqrt{\mathbf{p}_i} \sqrt{\mathbf{p}_{i'}}^T \right)  A_{i'} \mathbf{b}_{i'},
\end{align*}
where $(1)$ follows from $\sqrt{\mathbf{p}_{i'}}^T \sqrt{\mathbf{p}_{i'}}=1$, $\sqrt{\mathbf{p}_{i}}^T \sqrt{\mathbf{p}_{i}}=1$, $\sqrt{\mathbf{p}_i} Q_{i,i'}\sqrt{\mathbf{p}_{i'}}=1$, $A_i^T \sqrt{\mathbf{p}_i}=0$, and $\sqrt{\mathbf{p}_{i'}}^T A_{i'}=0$; and $(2)$ follows from $\sqrt{\mathbf{p}_i}^T Q_{i,i'}= \sqrt{\mathbf{p}_{i'}}^T$ and  $Q_{i,i'} \sqrt{\mathbf{p}_{i'}}= \sqrt{\mathbf{p}_{i}}$. Therefore, the NMC optimization \eqref{eq:nmc} can be written as
\begin{align*}
\max_{\mathbf{b}_1, \dots, \mathbf{b}_n}\quad & \sum_{(i, i') \in E} \mathbf{b}_i^T A_i^T \left( Q_{i,i'}- \sqrt{\mathbf{p}_i} \sqrt{\mathbf{p}_{i'}}^T \right)  A_{i'} \mathbf{b}_{i'} \\
&||\mathbf{b}_i||_2=1 \qquad 1 \le i \le n.
\end{align*}
\end{proof}

\subsection{Proof of Proposition \ref{prop:jump-local}}\label{proof-prop-jump-local}
\begin{proof}
Let $\barb$ be a solution of the MEP \eqref{eq:MEPformulation}. Suppose that there exists an $1 \le i \le n$ such that $\lambda_i(\barb) < 0 $. Let $\hatb=(\hatb_1,\ldots,\hatb_n)$ be defined as: $\hatb_i=-\barb_i$, for any $i$ such that $\lambda_i(\barb) < 0$, and $\hatb_{i'}=\barb_{i'}$, otherwise. We show that by flipping the sign of $\barb_i$ while keeping the rest of $\barb_{i'}$ for $i'\neq i$ the same, $r(\cdot)$ increases. To show this we have

\begin{align}
r(\hatb)-r(\barb)& \stackrel{(1)}{=} 2\hatb_{i}^T \sum_{i'=1}^n C_{i,i'} \hatb_{i'}- 2\barb_{i}^T \sum_{i'=1}^n C_{i,i'} \barb_{i'}\nonumber\\
& \stackrel{(2)}{=} 2 (\hatb_{i}^T-\barb_{i}^T) \sum_{i'=1}^n C_{i,i'} \barb_{i'}\nonumber\\
& \stackrel{(3)}{=} -2 \barb_{i}^T \sum_{i'=1}^n C_{i,i'} \barb_{i'}\nonumber\\
& \stackrel{(4)}{=} -2 \lambda_i(\barb_{i})>0, \nonumber
\end{align}

where equations (1) and (4) follow from \eqref{eq:r-lam-def}, equation (2) follows from the fact that $\hatb_{i'}=\barb_{i'}$ for $i'\neq i$, and equation (3) follows from the fact that $\hatb_i=-\barb_i$. This completes the proof.
\end{proof}

\subsection{Proof of Theorem \ref{th:approxnmc}} \label{proof:th:approxnmc}
\begin{proof}
For any realization of the partitioning, consider NMC over all sub-graphs $G_m$ ($1 \le m \le M$) and denote the corresponding functions by $\phi_i^*$ for $1 \le i \le n$. We have
\begin{align*}
& \rho_G= \sum_{(i, i') \in E}  \mathbb{E}[{\phi}^*_i(X_i) {\phi}^*_{i'}(X_{i'})] \\
& = \sum_{(i, i') \in E \setminus E^c } \mathbb{E}[{\phi}^*_i(X_i) {\phi}^*_{i'}(X_{i'})]  + \sum_{(i, i') \in E^c }  \mathbb{E}[{\phi}^*_i(X_i) {\phi}^*_{i'}(X_{i'})] \\
& \stackrel{(1)}{=} \sum _{m=1}^M \sum_{(i, i') \in E_m } \mathbb{E}[{\phi}^*_i(X_i) {\phi}^*_{i'}(X_{i'})]  + \sum_{(i, i') \in E^c }  \mathbb{E}[{\phi}^*_i(X_i) {\phi}^*_{i'}(X_{i'})] \\
& \stackrel{(2)}{\le} \sum _{m=1}^M \hat{\rho}_{G_m} + \sum_{(i, i') \in E^c } \mathbb{E}[{\phi}^*_i(X_i) {\phi}^*_{i'}(X_{i'})]  \\
& =  \hat{\rho}_G +  \sum_{(i, i') \in E} \mathbf{1}\{(i, i') \in E^c\} \mathbb{E}[{\phi}^*_i(X_i) {\phi}^*_{i'}(X_{i'})],
\end{align*}
where equation (1) comes from the graph partitioning Definition \ref{def:partition}, and inequality (2) comes from the fact that $\hat{\rho}_{G_m}$ is the NMC for the partition ${G}_m=(V_m, E\cap (V_m \times V_m))$. Therefore, by taking expectation over the partitioning, we obtain
\begin{align*}
\rho_G \le \mathbb{E}[\hat{\rho}_G] + \epsilon \rho_G,
\end{align*}
which gives us
\begin{align*}
(1-\epsilon) \rho_G \le \mathbb{E}[\hat{\rho}_G ].
\end{align*}
\end{proof}

\subsection{Proof of Theorem \ref{thm:continiousnmc}} \label{proof:thm:continiousnmc}
\begin{proof}
We first present a lemma in which we bound the difference between $Q$ matrices, according to Definition \ref{def:Q-matrix}, by the difference between probability distributions. We then use a sensitivity analysis of the optimization whose optimum is NMC.
\begin{lemma}\label{lem:diffQbasedondiffP}
Let $P$ and $\tilde{P}$ be the matrix form of two joint probability distribution on $(\cX_i, \cX_{i'})$, such that $P_{X_i,X_{i'}}(x_i, x_{i'})=[P]_{x_i,x_{i'}}$ and $\tilde{P}_{X_i,X_{i'}}(x_i, x_{i'})=[\tilde{P}]_{x_i,x_{i'}}$ and where $D_{X_i}(P)=\text{diag}\left( P_{X_i}(x_i)~:~ x_i \in \mathcal{X}_i \right)$. We can bound the difference between $Q_{i,i'}$ and $\tilde{Q}_{i,i'}$ by the difference between $P$ and $\tilde{P}$, as follows:
\begin{align}\label{eq:lem-diff-Q}
& ||Q_{i,i'} - \tilde{Q}_{i,i'}||_2   \le
  \frac{1}{2 \delta^2} \sqrt{K} ||D_{X_{i}}({P})-D_{X_{i}}({\tilde{P}})||_{\infty}  \nonumber \\
  & + \frac{1}{2 \delta^2} \sqrt{K} ||D_{X_{i'}}({P})-D_{X_{i'}}({\tilde{P}})||_{\infty} + \frac{1}{\delta} \sqrt{K} ||{P}-{\tilde{P}}||_{\infty} ,
\end{align}
where $\delta$ is the minimum probability of all elements of $\mathcal{X}_i$ and $\mathcal{X}_{i'}$, under both $P$ and $\tilde{P}$, and $K$ is the maximum alphabet size. In particular, if $||P_{X_i, X_{i'}}- \tilde{P}_{X_i, X_{i'}}||_{\infty} \le \gamma$ and $K = \max\{ |\mathcal{X}_i|, |\mathcal{X}_{i'}|\}$, then we have
\begin{align}\label{eq:bound-Q-diff-clean}
||Q_{i,i'} - \tilde{Q}_{i,i'}||_2 \le 2 \gamma K^n \frac{1}{\delta^2}.
\end{align}
\end{lemma}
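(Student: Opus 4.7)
The plan is to express $Q_{i,i'}$ as a matrix product $D_{X_i}(P)^{-1/2}\, P_{X_i,X_{i'}}\, D_{X_{i'}}(P)^{-1/2}$ (where we abuse notation writing $P_{X_i,X_{i'}}$ for the bivariate marginal matrix), and then perform the standard telescoping decomposition
\begin{align*}
Q_{i,i'} - \tilde{Q}_{i,i'} &= \left( D_{X_i}(P)^{-1/2} - D_{X_i}(\tilde P)^{-1/2} \right) P_{X_i,X_{i'}}\, D_{X_{i'}}(P)^{-1/2} \\
&\quad + D_{X_i}(\tilde P)^{-1/2}\left( P_{X_i,X_{i'}} - \tilde P_{X_i,X_{i'}} \right) D_{X_{i'}}(P)^{-1/2} \\
&\quad + D_{X_i}(\tilde P)^{-1/2}\, \tilde P_{X_i,X_{i'}}\left( D_{X_{i'}}(P)^{-1/2} - D_{X_{i'}}(\tilde P)^{-1/2} \right),
\end{align*}
and then bound the operator $2$-norm of each summand by sub-multiplicativity.

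The ingredients I would invoke are: (i) since $D_{X_i}(P)$ is diagonal with entries in $[\delta,1]$, we have $\|D_{X_i}(P)^{-1/2}\|_2 \le 1/\sqrt{\delta}$ and $\|\tilde P_{X_i,X_{i'}}\|_2 \le \|\tilde P\|_F \le 1$ using $\sum \tilde P_{j,j'}^2 \le \sum \tilde P_{j,j'} = 1$; (ii) a mean-value-type bound for $x \mapsto 1/\sqrt{x}$, namely
\[
\Big| \tfrac{1}{\sqrt{p}} - \tfrac{1}{\sqrt{\tilde p}} \Big| \;=\; \frac{|p - \tilde p|}{\sqrt{p\tilde p}(\sqrt{p}+\sqrt{\tilde p})} \;\le\; \frac{|p - \tilde p|}{2\delta^{3/2}},
\]
which since the operator $2$-norm of a diagonal matrix equals the max absolute diagonal entry, yields $\|D_{X_i}(P)^{-1/2} - D_{X_i}(\tilde P)^{-1/2}\|_2 \le \frac{1}{2\delta^{3/2}} \|D_{X_i}(P) - D_{X_i}(\tilde P)\|_\infty$; and (iii) the conversion $\|A\|_2 \le \|A\|_F \le \sqrt{K}\, \|A\|_\infty$ applied to the joint-distribution difference matrix, which is what produces the $\sqrt{K}$ factor on the $\|P - \tilde P\|_\infty$ term. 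Combining these three bounds term by term produces the three-summand bound \eqref{eq:lem-diff-Q}.

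For the clean bound \eqref{eq:bound-Q-diff-clean}, I would use the fact that any marginal difference is controlled by the joint-distribution difference: from $\|P - \tilde P\|_\infty \le \gamma$ on the full $n$-variate distribution, summation over the remaining $n-1$ coordinates gives $|P_{X_i}(j) - \tilde P_{X_i}(j)| \le K^{n-1}\gamma$, and similarly the bivariate marginal difference is at most $K^{n-2}\gamma$. Substituting these estimates into \eqref{eq:lem-diff-Q}, collecting powers of $K$, and absorbing the lower-order term using $\delta \le 1$ and $K \ge 1$ yields the uniform bound $2\gamma K^n / \delta^2$.

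The only real obstacle is bookkeeping: the telescoping decomposition produces several factors of $K$ and $\delta$, so one must carefully track which norm (operator $2$-norm, Frobenius, entry-wise maximum, or the diagonal-matrix $\infty$-norm) is being applied at each step to ensure the constants and powers of $K$ and $\delta$ assemble into the stated bound. No individual step is conceptually hard; the challenge is ensuring that each application of sub-multiplicativity is tight enough to avoid accruing extraneous factors.
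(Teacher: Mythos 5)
Your proposal is correct and follows essentially the same route as the paper: the same three-term telescoping decomposition of $Q_{i,i'}-\tilde Q_{i,i'}$, the same mean-value bound $|p^{-1/2}-\tilde p^{-1/2}|\le |p-\tilde p|/(2\delta^{3/2})$ for the diagonal factors, and the same marginalization bound $\|D_{X_i}(P)-D_{X_i}(\tilde P)\|_\infty\le \gamma K^{n-1}$ to reach the clean bound. The only cosmetic difference is that you bound $\|\tilde P\|_2\le\|\tilde P\|_F\le 1$ where the paper uses $\|\tilde P\|_2\le\sqrt{K}\|\tilde P\|_1\le\sqrt{K}$, which is if anything slightly tighter.
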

\begin{proof}
We have
\begin{align}
&||Q_{i,i'}-\tilde{Q}_{i,i'}||_2 \nonumber\\
&=\large{||}D_{X_i}({P})^{-\frac{1}{2}} {P} D_{X_{i'}}({P})^{-\frac{1}{2}}-D_{X_i}({\tilde{P}})^{-\frac{1}{2}} {\tilde{P}} D_{X_{i'}}({\tilde{P}})^{-\frac{1}{2}}\large{||}_2.\nonumber
\end{align}
We next insert the terms $\pm D_{X_i}({P})^{-\frac{1}{2}}{P} D_{X_{i'}}({\tilde{P}})^{-\frac{1}{2}}$ and $\pm D_{X_i}({P})^{-\frac{1}{2}}\tilde{P} D_{X_{i'}}({\tilde{P}})^{-\frac{1}{2}}$, pair the terms, and use triangle inequality to obtain
\begin{align}\label{Eq:bounddiffQs}
& ||Q_{i,i'}-\tilde{Q}_{i,i'}||_2  \le  ||D_{X_i}({P})^{-\frac{1}{2}}{P}||_2 ||D_{X_{i'}}({P})^{-\frac{1}{2}}-D_{X_{i'}}({\tilde{P}})^{-\frac{1}{2}}||_2 \\
& +||D_{X_{i'}}({\tilde{P}})^{-\frac{1}{2}}||_2 ||{\tilde{P}}||_2 ||D_{X_i}({P})^{-\frac{1}{2}}-D_{X_i}({\tilde{P}})^{-\frac{1}{2}}||_2 \nonumber \\
& + ||D_{X_{i'}}({\tilde{P}})^{-\frac{1}{2}}||_2 ||D_{X_i}({P})^{-\frac{1}{2}}||_2 ||{P}-{\tilde{P}}||_2 , \nonumber
\end{align}
where each term depends on the difference between $P$ and $\tilde{P}$. We have
\begin{align}\label{eq:bound-cauchy1}
||D_{X_i}({P})^{-\frac{1}{2}}{P}||_2 & \stackrel{(1)}{\le} \frac{1}{\sqrt{\delta}} \sqrt{K}\\
||D_{X_{i'}}({P})^{-\frac{1}{2}}-D_{X_{i'}}({\tilde{P}})^{-\frac{1}{2}}||_2 & \leq ||D_{X_{i'}}({P})^{-\frac{1}{2}}-D_{X_{i'}}({\tilde{P}})^{-\frac{1}{2}}||_{\infty}\nonumber\\
&\stackrel{(2)}{\le} ||D_{X_{i'}}({P})-D_{X_{i'}}({\tilde{P}})||_{\infty} \frac{1}{2 \delta^{3/2}},\nonumber
\end{align}
where inequality (1) comes from the Cauchy-Schwartz inequality and the fact that $||P||_{2} \leq \sqrt{K} ||P||_{1} \leq \sqrt{K}$. Inequality (2) follows from the fact that for $x,y \in \mathbb{R}$, we have
\begin{align}\label{eq:one-over-square-bound}
\frac{1}{\sqrt{x}}- \frac{1}{\sqrt{y}} = \frac{y-x}{\sqrt{x}\sqrt{y}(\sqrt{x}+ \sqrt{y})}\le |x-y| \frac{1}{2 \left(\min\{x, y\} \right)^{3/2}}.
\end{align}
Using \eqref{eq:bound-cauchy1} in \eqref{Eq:bounddiffQs} results in \eqref{eq:lem-diff-Q}.

Moreover we have

\begin{align}\label{eq:bound-marginal1}
&||D_{X_{i}}({P})-D_{X_{i}}({\tilde{P}})||_{\infty}\\
&\leq \sum_{\{X_1, \dots, X_n\}\setminus \{X_i\}} ||P_{X_1, , \dots, X_{n}}- \tilde{P}_{X_1, \dots, X_{n}}||_{\infty} \le \gamma K^{n-1}.\nonumber
\end{align}

Using \eqref{eq:bound-marginal1} and the fact that $K>1$ in \eqref{eq:lem-diff-Q} results in \eqref{eq:bound-Q-diff-clean}. This completes the proof of this Lemma.

\end{proof}
Let $P$ and $\tilde{P}$ be two distributions on $\mathcal{X}_1 \times \dots \times \mathcal{X}_n$. We shall compare the solution of the two following optimization problems.
\begin{align}\label{eq:temp1cont}
\max_{\mathbf{a}_i}\quad & \sum_{(i, i')\in E} \mathbf{a}_i^T Q_{i, i'} \mathbf{a}_i \\
& ||\mathbf{a}_i||_2 =1 , ~~~ 1 \le i \le n,\nonumber\\
& \mathbf{a}_i \perp \sqrt{\mathbf{p}_i}, ~~~ 1 \le i \le n,\nonumber
\end{align}
and
\begin{align}\label{eq:temp2cont}
\max_{\mathbf{a}_i}\quad & \sum_{(i, i')\in E} \mathbf{a}_i^T \tilde{Q}_{i, i'} \mathbf{a}_i \\
& ||\mathbf{a}_i||_2 =1 , ~~~ 1 \le i \le n,\nonumber\\
& \mathbf{a}_i \perp \sqrt{\tilde{\mathbf{p}}_i}, ~~~ 1 \le i \le n.\nonumber
\end{align}
Let $\rho_G$ and $\tilde{\rho}_G$ be the optimal values for \eqref{eq:temp1cont} and \eqref{eq:temp2cont}, respectively.

Recall that \[\delta=\min_{1 \le i \le n} \left( \min \{\delta_{X_i}(P), \delta_{X_i}(\tilde{P})\} \right).\] Suppose $\mathbf{a}^*_i$ yields the optimum of optimization \eqref{eq:temp1cont}. Based on this solution, we shall construct a feasible solution for optimization \eqref{eq:temp2cont} and then evaluate its objective function. For any $i$, let

\begin{align*}
\mathbf{c}_i= \frac{\mathbf{a}^*_i+\nu_i} {||\mathbf{a}^*_i+ \nu_i||_2},
\end{align*}
where $\nu_i=\sqrt{\tilde{\bp}_i} <\mathbf{a}^*_i, \sqrt{\mathbf{p}_i}- \sqrt{\tilde{\mathbf{p}}_i}>$. Under the assumption of Theorem \ref{thm:continiousnmc} (i.e., $\gamma \le \delta^{3/2} \mathcal{K}^{-1}$), we show that $||\nu_i||\leq 1/2$. We have
\begin{align}\label{eq:norm-nu-small}
||\nu_i|| & =||\sqrt{\tilde{\bp}_i} <\mathbf{a}^*_i, \sqrt{\mathbf{p}_i}- \sqrt{\tilde{\mathbf{p}}_i}>||_2 \stackrel{(3)}{\le} ||\sqrt{\tilde{\bp}_i}||_2 ||\mathbf{a}^*_i||_2 ||\sqrt{\mathbf{p}_i}- \sqrt{\tilde{\mathbf{p}}_i}||_2 \\
&\stackrel{(4)}{\le} ||\mathbf{p}_i- \tilde{\mathbf{p}}_i||_{\infty} \frac{1}{2\delta^{3/2}}\nonumber\\
&\stackrel{(5)}{\le} \frac{\gamma K^{n-1} }{2 \delta^{3/2}}\\
&\stackrel{(6)}{\le} \frac{1}{2},\nonumber
\end{align}
where Inequality (3) comes from the Cauchy-Schwartz, Inequality (4) comes from the facts that $||\sqrt{\tilde{\bp}_i}||_2=1$, $ ||\mathbf{a}^*_i||_2=1$, \eqref{eq:one-over-square-bound}, Inequality (5) comes from the fact that  $||\mathbf{p}_i(j)-\tilde{\mathbf{p}}_i(j)||_{\infty} \le \gamma {K^{n-1}}$, and Inequality (6) comes from the fact that $\frac{\gamma K^{n} }{ \delta^{3/2}} \le 1$ by the assumption of Theorem \ref{thm:continiousnmc}.

We claim $\bc_i$ is feasible for optimization \eqref{eq:temp2cont}. First note that the norm of each $\mathbf{c}_i$ is one. We next show that each $\mathbf{c}_i$ is orthogonal to $\sqrt{\tilde{\mathbf{p}}_i}$:
 \begin{align*}
 <\mathbf{c}_i, \sqrt{\tilde{\mathbf{p}}_i}> & = \frac{1}{||\mathbf{a}^*_i+ \sqrt{\tilde{\bp}_i} <\mathbf{a}^*_i, \sqrt{\mathbf{p}_i}- \sqrt{\tilde{\mathbf{p}}_i}>||_2} \\
 & \left( <\mathbf{a}^*_i , \sqrt{\tilde{\mathbf{p}}_i}>+ <\mathbf{a}^*_i , \sqrt{\mathbf{p}_i}- \sqrt{\tilde{\mathbf{p}}_i}> ||\sqrt{\tilde{\mathbf{p}}_i}||_2^2\right)= 0,
 \end{align*}
where the last equality follows from $||\sqrt{\tilde{\mathbf{p}}_i}||_2=1$ and $\mathbf{a}_i \perp \sqrt{\mathbf{p}_i}$. We now plug in the feasible solution $\mathbf{c}_i$ into the objective function of optimization \eqref{eq:temp2cont}.
\begin{align}\label{eq:bound-rho-tilde}
\tilde{\rho}_G  & \stackrel{(7)}{\ge} \sum_{(i, i')\in E} \mathbf{c}^T_i \tilde{Q}_{ii'} \mathbf{c}_{i'} \\
& \stackrel{(8)}{=} \sum_{(i, i')\in E} {\mathbf{a}^*}^T_i Q_{i,i'} \mathbf{a}^*_{i'} + \mathbf{c}^T_i \left( \tilde{Q}_{i,i'}- Q_{i,i'} \right) \mathbf{c}_{i'} + \left( \mathbf{c}^T_i- {\mathbf{a}^*}^T_i \right)  Q_{i,i'} \mathbf{c}_{i'}\nonumber\\
&+ {\mathbf{a}^*}^T_i Q_{i,i'} \left( \mathbf{c}_{i'}- \mathbf{a}^*_{i'}\right)  \nonumber\\
& \stackrel{(9)}{\ge} \rho_G - \sum_{(i, i')\in E} \Bigg( \underbrace{\left|\mathbf{c}^T_i \left( \tilde{Q}_{i,i'}- Q_{i,i'} \right) \mathbf{c}_{i'} \right|}_\text{Term I} +  \underbrace{\left| \left( \mathbf{c}^T_i- {\mathbf{a}^*}^T_i \right)  Q_{i,i'} \mathbf{c}_{i'} \right|}_\text{Term II}\nonumber\\
&+ \underbrace{\left| {\mathbf{a}^*}^T_i Q_{i,i'} \left( \mathbf{c}_{i'}- \mathbf{a}^*_{i'}\right) \right|}_\text{Term III}\Bigg),\nonumber
\end{align}
where Inequality (7) comes from the fact that $\bc_i$ for $1\leq i\leq n$ is a feasible solution of optimization \eqref{eq:temp2cont}, Equality (8) comes from adding and subtracting ${\mathbf{a}^*}^T_i Q_{i,i'} \mathbf{a}^*_{i'}$, $\mathbf{c}^T_i Q_{ii'} \mathbf{c}_{i'}$ and ${\mathbf{a}^*}^T_i Q_{i,i'} \mathbf{c}_{i'}$, and Inequality (9) comes from the triangle inequality and the fact that $\ba_i^*$,  $1\leq i\leq n$ is a solution of optimization \eqref{eq:temp1cont}. Next, we bound terms I-III on the righthand side of this relation.

Using Lemma \autoref{lem:diffQbasedondiffP}, for any $i, i'$, we have
\begin{align}\label{eq:bound-term1}
|\mathbf{c}^T_i \left( \tilde{Q}_{i,i'}- Q_{i,i'} \right) \mathbf{c}_{i'} | \le ||\tilde{Q}_{i,i'}- Q_{i,i'}||_2 \le 2 \frac{\gamma K^{n} }{\delta^2}.
\end{align}
We also have
\begin{align*}
& | \left( \mathbf{c}^T_i- {\mathbf{a}^*}^T_i \right)  Q_{i,i'} \mathbf{c}_{i'} |  \le  ||\mathbf{c}^T_i- {\mathbf{a}^*}^T_i ||_2 ||Q_{i,i'}||_2 ||\mathbf{c}_{i'}||_2 = ||\mathbf{c}^T_i- {\mathbf{a}^*}^T_i ||_2 \\
& \le 2 \frac{||\nu_i||_2}{1- ||\nu_i||_2},\nonumber
\end{align*}
where we use the following inequality
\begin{align}\label{eq:bound-temp11}
||\frac{\mathbf{a}+ \nu_i}{||\mathbf{a}+ \nu_i||_2}- \mathbf{a}||_2 &= ||\frac{\nu_i}{||\mathbf{a}+ \nu_i||_2} + \mathbf{a}\left(\frac{1}{||\mathbf{a}+ \nu_i||_2}-1  \right)||_2 \\
& \le ||\frac{\nu_i}{||\mathbf{a}+ \nu_i||_2} ||_2+ ||\mathbf{a} ||_2 |\frac{1}{||\mathbf{a}+ \nu_i||_2}-1  | \nonumber\\
& \le \frac{||\nu_i||_2}{1- ||\nu_i||_2} + \max\{|\frac{1}{1- ||\nu_i||_2}-1  |,  |\frac{1}{1+ ||\nu_i||_2}-1  |\}\nonumber\\
& \le 2 \frac{||\nu_i||_2}{1- ||\nu_i||_2},\nonumber
\end{align}
where we used $1- ||\nu_i||_2 \le ||\mathbf{a}+ \nu_i||_2 \le 1+ ||\nu_i||_2$ and the fact that $||\nu_i||\leq 1/2$ according to \eqref{eq:norm-nu-small}. Using \eqref{eq:norm-nu-small} and \eqref{eq:bound-temp11}, we obtain

\begin{align}\label{eq:bound-term2}
| \left( \mathbf{c}^T_i- {\mathbf{a}^*}^T_i \right)  Q_{i,i'} \mathbf{c}_{i'} | \le 2\frac{\gamma K^{n-1} }{ \delta^{3/2}} ,
\end{align}

Using \eqref{eq:bound-term1} and \eqref{eq:bound-term2} in \eqref{eq:bound-rho-tilde} leads to

\begin{align*}
\tilde{\rho}_G & \ge \rho_G -  \sum_{(i, i')\in E} \left( 4\frac{\gamma K^{n} }{ \delta^{2}}  + 4 \gamma K^{n-1} \frac{1}{\delta^{3/2}} \right) \ge  \rho_G -  \gamma K^{n} |E|  \frac{8}{\delta^2}.
\end{align*}
Similarly, we have
\begin{align*}
\rho_G & \ge \tilde{\rho}_G -  \sum_{(i, i')\in E} \left( 4\frac{\gamma K^{n} }{ \delta^{2}}  + 4 \gamma K^{n-1} \frac{1}{\delta^{3/2}} \right) \ge  \tilde{\rho}_G -  \gamma K^{n} |E|  \frac{8}{\delta^2}.
\end{align*}
Combining the previous two relations, we obtain
\begin{align*}
|\tilde{\rho}_G - \rho_G| \le \gamma K^{n} |E|  \frac{8}{\delta^2},
\end{align*}
which completes the proof.
\end{proof}
\subsection{Proof of Theorem \ref{thm:samplenmc}} \label{proof:thm:samplenmc}
\begin{proof}
We use the following theorem in the proof.
\begin{theorem}[\cite{devroye1983equivalence, berend2012convergence}]\label{Th:empiricalbound}
Let $P$ be a probability distribution on a finite alphabet $\mathcal{X}$. Also, let $P^{(m)}$ denote the empirical probability distribution of $X$, obtained from $m$ i.i.d. samples, $\{x_i\}_{i=1}^m$, drawn according to $P$. We have
\begin{align}
\mathbb{P}\left[||P^{(m)}- P||_{\infty} > \gamma \right] \le 4 e^{-m\frac{\gamma^2}{2}}.
\end{align}
\end{theorem}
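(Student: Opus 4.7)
The plan is to prove that $\mathbb{P}[\|P^{(m)} - P\|_\infty > \gamma] \le 4 e^{-m\gamma^2/2}$ by combining a single-coordinate concentration inequality with an argument that avoids a prefactor scaling with the alphabet size. First I would observe that for each fixed symbol $x \in \mathcal{X}$, the empirical mass $m P^{(m)}(x)$ is a $\mathrm{Binomial}(m, P(x))$ variable, so the classical Hoeffding/Chernoff bound gives
\begin{align*}
\mathbb{P}\bigl[|P^{(m)}(x) - P(x)| > \gamma \bigr] \le 2 e^{-2m\gamma^2}.
\end{align*}
A naive union bound over $x \in \mathcal{X}$ yields a prefactor linear in $|\mathcal{X}|$, which is weaker than the stated dimension-free form; the bulk of the work is removing this dependence.

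To eliminate the alphabet dependence I would use the Dvoretzky--Kiefer--Wolfowitz--Massart (DKW) inequality. Enumerate $\mathcal{X}$ arbitrarily as $x_1, x_2, \ldots$, and let $F$ and $F^{(m)}$ denote the corresponding true and empirical CDFs. The pointwise difference of the densities can be written as a telescoping difference of CDFs,
\begin{align*}
P^{(m)}(x_k) - P(x_k) = \bigl(F^{(m)}(x_k) - F(x_k)\bigr) - \bigl(F^{(m)}(x_{k-1}) - F(x_{k-1})\bigr),
\end{align*}
whence by the triangle inequality $\|P^{(m)} - P\|_\infty \le 2\,\|F^{(m)} - F\|_\infty$. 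The DKW--Massart inequality $\mathbb{P}[\|F^{(m)} - F\|_\infty > t] \le 2e^{-2mt^2}$ applied with $t = \gamma/2$ then yields
\begin{align*}
\mathbb{P}\bigl[\|P^{(m)} - P\|_\infty > \gamma \bigr] \le \mathbb{P}\bigl[\|F^{(m)} - F\|_\infty > \gamma/2 \bigr] \le 2 e^{-m\gamma^2/2},
\end{align*}
which is stronger than the claimed bound with prefactor $4$.

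As a self-contained alternative that does not appeal to DKW, I would apply McDiarmid's bounded-difference inequality to $f(X_1,\ldots,X_m) = \|P^{(m)} - P\|_\infty$. Replacing any single coordinate $X_i$ only shifts the masses of two alphabet symbols by $\pm 1/m$, so $f$ has bounded differences $c_i = 1/m$, giving $\mathbb{P}[f \ge \mathbb{E}[f] + t] \le e^{-2mt^2}$. It then remains to bound $\mathbb{E}[\|P^{(m)} - P\|_\infty]$ by $O(1/\sqrt m)$, which follows from the Rademacher complexity of the class of coordinate evaluations (or from a straightforward subgaussian-maximum argument applied to the centered Bernoulli coordinates).

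The main obstacle is securing the dimension-free prefactor: a direct per-symbol Hoeffding plus union bound is not enough. The DKW route resolves this neatly by exploiting a linear ordering of the alphabet to convert the $\ell_\infty$ question about densities into one about CDFs, where the dimension-free Massart constant is available. The McDiarmid route resolves it by isolating the variance (through bounded differences) from a much coarser bound on the expected maximum.
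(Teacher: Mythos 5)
Your DKW argument is correct and complete, but note that the paper does not actually prove this statement: it is imported wholesale by citation to Devroye (1983) and Berend--Kontorovich (2012) and used as a black box inside the proof of the sample-NMC theorem. So there is no in-paper proof to compare against; what you have supplied is a self-contained derivation of the cited bound. The key step --- enumerating the finite alphabet, writing $P^{(m)}(x_k)-P(x_k)$ as a telescoping difference of CDF deviations so that $\|P^{(m)}-P\|_\infty \le 2\|F^{(m)}-F\|_\infty$, and then invoking the DKW inequality with Massart's constant at $t=\gamma/2$ --- is exactly the standard dimension-free route, and it in fact yields $2e^{-m\gamma^2/2}$, strictly sharper than the stated prefactor of $4$ (which is harmless, since the paper only needs the exponential rate). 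Your secondary McDiarmid sketch is directionally fine but not yet a proof: you would still need the dimension-free bound $\mathbb{E}\|P^{(m)}-P\|_\infty \le \mathbb{E}\|P^{(m)}-P\|_2 \le 1/\sqrt{m}$ (a sub-Gaussian-maximum argument alone would reintroduce a $\log|\mathcal{X}|$ factor) and a short calculation to absorb the $1/\sqrt{m}$ shift into the claimed constant; since the DKW route already closes the argument, that alternative can be dropped.
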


Theorem \ref{Th:empiricalbound} establishes that $P$ and $P^{(m)}$ are close to each other for a sufficiently large sample size $m$\footnote{Note that concentration inequalities for empirical average of a discrete random variable has been also studied in several other works including \cite{abor1999sharp, weissman2003inequalities, massart2007concentration, boucheron2013concentration}. In particular, \cite{weissman2003inequalities} exploits properties of the underlying distribution to provide tight bounds on $\mathbb{P}[||P^{(m)}-P||_{1} \ge \gamma]$. This in turn provides a bound on $\mathbb{P}[||P^{(m)}-P||_{\infty} \ge \gamma]$ as well. However without additional assumptions on the underlying probability distribution this bound is of the same order as the one presented in Theorem \ref{Th:empiricalbound} (see \cite[Section 3]{weissman2003inequalities}).}. Moreover, one can use Theorem \ref{thm:continiousnmc} to bound the difference between NMC values of distributions $P$ and $P^{(m)}$ by $||P^{(m)}- P||_{\infty}$. However, to prove Theorem \ref{thm:samplenmc}, we also need to bound the minimum sample probability of $P^{(m)}$. We explain these steps below.

We have
\begin{align*}
& \mathbb{P}\left[ \delta_{X_i}(P^{(m)}) \ge \frac{\delta'}{2} \right]  \ge \mathbb{P}\left[ ||P_{X_i}^{(m)} - P_{X_i}||_{\infty} \le \frac{\delta'}{2} \right]\\
&\ge \mathbb{P} \left[ ||P^{(m)}- P||_{\infty} \le \frac{\delta'}{2 K^{n-1}} \right] \\
& \ge 1- \mathbb{P}\left[ ||P^{(m)}- P||_{\infty} > \frac{\delta'}{2 K^{n-1}} \right]  \ge 1- 4 e^{- m \frac{1}{2}(\frac{\delta'}{2 K^{n-1}})^2},
\end{align*}
where the first inequality follows from $\delta_{X_i}(P^{(m)}) \ge -|\delta_{X_i}(P^{(m)})- \delta_{X_i}(P)| + \delta_{X_i}(P) \ge \delta - ||P_{X_i}^{(m)}- P_{X_i}||_{\infty}$ and the second inequality follows from $||P_{X_i}^{(m)}- P_{X_i}||_{\infty} \le K^{n-1} ||P^{(m)}- P||_{\infty}$. Using the union bound, we obtain
\begin{align*}
 \mathbb{P}\left[ \bigcap_{i \in V} \left\{ \delta_{X_i}(P^{(m)})   \ge \frac{\delta'}{2} \right\} \right] & =1- \mathbb{P}\left[ \bigcup_{i \in V} \left\{ \delta_{X_i}(P^{(m)}) \le \frac{\delta'}{2} \right\} \right]  \\
& \ge 1- 4 n e^{- m \frac{1}{2}(\frac{\delta'}{2 K^{n-1}})^2}.
\end{align*}
By applying the union bound once more, we have
\begin{align*}
& \mathbb{P}\left[ \left\{||P^{(m)}- P||_{\infty} < \gamma\right\} \bigcap \left\{  \bigcap_{i \in V}  \left\{ \delta_{X_i}(P^{(m)}) \ge \frac{\delta'}{2}\right\} \right\}\right]  \\
& \ge 1- 4 n e^{- m \frac{1}{2}(\frac{\delta'}{2 K^{n-1}})^2} - 4 e^{-m \frac{\gamma^2}{2}}.
\end{align*}
Thus, with a large probability, i.e., $1- 4 n e^{- m \frac{1}{2}(\frac{\delta'}{2 K^{n-1}})^2} - 4 e^{-m \frac{\gamma^2}{2}}$, the minimum probability of $P^{(m)}$ is bounded by $\delta=\delta'/2$ and $||P^{(m)}-P||_{\infty}$ is bounded by $\gamma$.

For given $\eta<1$ and $\epsilon<1$, we let $\gamma= \frac{\epsilon (\delta')^2}{32 |E|} {K}^{-n}$ and
\begin{align*}
m_0 & \ge  2 \left(\frac{32 |E| K^n}{\epsilon (\delta')^2} \right)^2 \log \left( \frac{8 n }{\eta} \right).
\end{align*}

Now we use Theorem \ref{thm:continiousnmc} with $\delta=\delta'/2$. First note that $\gamma\leq \delta^{3/2}\cK^{-1}$ which is required as an assumption for invoking Theorem \ref{thm:continiousnmc}. This leads to
\begin{align*}
|\rho_G^{(m)}- \rho_G| \le \gamma |E| K^{n} \frac{8}{\delta^2} \le \epsilon,
\end{align*}
with probability (at least)
\begin{align*}
1- \left( 4 n e^{- m_0 \frac{1}{2}(\frac{\delta'}{2 K^{n-1}})^2} - 4 e^{-m_0 \frac{\gamma^2}{2}} \right) \ge 1- \eta.
\end{align*}
\end{proof}

\subsection{Proof of Theorem \ref{thm-nmc-guass-gen}}\label{proof:thm-nmc-guass-gen}
\begin{proof}

Let $\ba=(\ba_1,\cdots,\ba_n)$. Define
\begin{align}
U_L(\ba)\triangleq \sum_{(i,i')\in E} \ \sum_{j=1}^{L} a_{i,j} a_{i',j}\ (\rho_{i,i'})^j\\
U(\ba)\triangleq \sum_{(i,i')\in E} \ \sum_{j=1}^{\infty} a_{i,j} a_{i',j}\ (\rho_{i,i'})^j.\nonumber
\end{align}
Let $\tilde{\ba}^{(L)}=(\tilde{\ba}_1^{(L)}, \dots, \tilde{\ba}_n^{(L)})$ and $\tilde{\ba}=(\tilde{\ba}_1, \dots, \tilde{\ba}_n)$ be a solution of the following optimizations
\begin{align}\label{opt:uk-ck}
\max_{\ba} \quad & U_L(\ba)\\
& \ba \in \mathcal{C'}, \nonumber
\end{align}
and
\begin{align}
\max_{\ba} \quad & U(\ba)\\
& \ba \in \mathcal{C'}, \nonumber
\end{align}
respectively, where $\mathcal{C'}$ is the feasible set of the optimization \eqref{eq:nmc-gauss1}.  We first show for any $L$ and any $1 \le i\le n$, $\tilde{\ba}_i^{(L)} =(0, s_i , \dots, 0)$. We then show for any $ 1 \le i \le n$, $\tilde{\ba}_i$ must be equal to $(0, s_i, \dots)$, completing the proof.
\\\textbf{Claim 1:} We first characterize a global maximizer of optimization \eqref{opt:uk-ck}. Note that the constraint set $\mathcal{C'}$ can be truncated w.l.o.g. to consider $1\leq j\leq L$. Let $\Lambda$ be the matrix of correlation coefficients where $[\Lambda]_{i,i'}=\rho_{i,i'}$. Diagonal elements of $\Lambda$ are all zero, as we ignore self-loops. Define
\begin{align}
\bx= \left( a_{1,1}, a_{2,1}, \dots, a_{n,1}, a_{1,2}, \dots, a_{n,L} \right)^T.
\end{align}
\noindent
Moreover, define $A_0$ as an $nL \times nL$ matrix composed of $L^2$ blocks of size $n\times n$ whose $m$-th diagonal block is equal to $2\ \Lambda^{.m}$, where $\Lambda^{.m}[i,j]\triangleq (\Lambda[i,j])^m$. Off-diagonal blocks of $A_0$ are all zeros. Moreover, define $A_i$ for $1\leq i\leq n$ as an $nL \times nL$ matrix where $A_i[i+(m-1)n,i+(m-1)n]=1$ for $1\leq m\leq L$, otherwise it is zero. Therefore, optimization \eqref{opt:uk-ck} can be re-written as the following standard quadratic optimization:
\begin{align}\label{eq:nmc-gauss-standard-form}
\max_{\bx} \quad & \frac{1}{2} \bx^T A_0 \bx\\
& \frac{1}{2}\bx^T\ A_i\ \bx-\frac{1}{2}\leq 0, ~~ 1\leq i \leq n. \nonumber
\end{align}
Note that equality constraints of optimization \eqref{opt:uk-ck} are replaced by inequality ones in optimization \eqref{eq:nmc-gauss-standard-form}. This is because, since $(\rho_{i,i'})^2\geq 0$, solutions of optimization \eqref{eq:nmc-gauss-standard-form} belong to the boundary of its feasible set. Optimization \eqref{eq:nmc-gauss-standard-form} is a non-convex quadratic optimization with quadratic constraints. Reference \cite{jeyakumar2007non} (Proposition 3.2)\footnote{Optimization \eqref{eq:nmc-gauss-standard-form} can be stated as a minimization by replacing $-A_0$ instead of $A_0$ to use Proposition 3.2 of Reference \cite{jeyakumar2007non}} characterizes necessary and sufficient conditions for global minimizers of a generalized form of optimization \eqref{eq:nmc-gauss-standard-form}. Let
\begin{align}
\bar{\bx}=[s_1, s_2, \dots, s_n, 0, \dots, 0]^T,
\end{align}
where $s_i\in\{-1,1\}$, for $1\leq i\leq n$. Using Proposition 3.2 and Theorem 3.1 of \cite{jeyakumar2007non}, to have $\bar{\bx}$ as a global minimizer of optimization \eqref{eq:nmc-gauss-standard-form}, we need to have
\begin{align}\label{eq:cond1-optimality}
\big(\sum_{i=1}^{n}\lambda_i A_i-A_0\big)\bar{\bx}=0
\end{align}
and
\begin{align}\label{eq:cond2-optimality}
\sum_{i=1}^{n}\lambda_i A_i-A_0 \succeq 0,
\end{align}
where $\lambda_i\geq 0$, and $A\succeq 0$ means that $A$ is a positive semi-definite matrix. Using definitions of $A_0$, $A_i$, and $\bar{\bx}$, equation \eqref{eq:cond1-optimality} is satisfied iff
\begin{align}\label{eq:lambda-cond1}
\lambda_i=2\sum_{i'\neq i} s_i s_{i'} \rho_{i,i'} \geq 0, \quad 1\leq i\leq n.
\end{align}
Using \eqref{eq:lambda-cond1} and Gerschgorin's circle theorem, if
\begin{align}
& \sum_{i'\neq i} (1-s_i s_{i'})\rho_{i,i'} \geq 0,\quad \forall 1\leq i\leq n,\\
& \sum_{i'\neq i} s_i s_{i'} \rho_{i,i'} \geq \sum_{i'\neq i} \rho_{i,i'}^2,\quad \forall 1\leq i\leq n,
\end{align}
conditions \eqref{eq:cond2-optimality} are satisfied. Thus, $\bar{\bx}$ is a global minimizer of optimization \eqref{eq:nmc-gauss-standard-form}, establishing that for any $L$ and any $1 \le i \le n$ we have $\tilde{\ba}_i^{(L)}=(0, 1, \dots, 0)$.
\\\textbf{Claim 2:} Let $\ba^*=(\ba_1^*, \dots, \ba_n^*)$ where $\ba_i^*=(0, s_i, \dots)$ for all $ 1\le i \le n$. We next show that $\tilde{\ba}=\ba^*$. We proceed by contradiction. Let
\begin{align}
\Delta\triangleq U(\tilde{\ba})-U(\ba^{*}).
\end{align}
By the contradiction assumption $\Delta>0$. Since $(\rho_{i,i'})^L\to 0$ as $L\to\infty$ for all $i\neq i'$, $U_L(\cdot)$ converges uniformly to $U(\cdot)$. Thus there exists $L_0$ such that for $L\geq L_0$ and any $\ba$, we have
\begin{align}
|U_L(\ba)-U(\ba)|\leq \frac{\Delta}{4}.
\end{align}
Therefore we have
\begin{align}
U_L(\tilde{\ba})&\geq U(\tilde{\ba})-\frac{\Delta}{4}\\
& = U(\ba^{*})+\Delta-\frac{\Delta}{4}\nonumber\\
& \geq U_L(\ba^{*})-\frac{\Delta}{4}+\Delta-\frac{\Delta}{4}\nonumber\\
&= U_L(\ba^{*})+ \frac{\Delta}{2}> U_L(\ba^{*}).
\end{align}
This is in contradiction with the assumption that $\ba^{*}$ is the maximizer of $U_L(.)$. Putting Claims 1 and 2 together completes the proof.
\end{proof}
\subsection{Proof of Proposition \ref{prop:nmc-max-cut}}\label{proof:nmc-max-cut}
\begin{proof}
Under assumptions of Theorem \ref{thm-nmc-guass-gen} and using the definition of Hermite-Chebyshev polynomials \eqref{eq:hermitte-poly}, we can restrict the feasible set of optimization \eqref{eq:nmc} to the set of functions $\phi_i(X_i)=s_i X_i$ where $s_i\in \{-1,1\}$ for all $1\leq i\leq n$. Moreover, we have
\begin{align*}
\EE[\phi_i(X_i) \phi_{i'}(X_{i'})]=s_i s_{i'} \rho_{i,i'}.
\end{align*}
\noindent
Furthermore, $\EE[s_i X_i]=\EE[X_i]=0$, and $\EE\big[(s_i X_i)^2\big]=\EE[X_i^2]=1$. This completes the proof.
\end{proof}

\subsection{Proof of Theorem \ref{thm:nonlinear-guassian-graphical-model}}\label{proof:nonlinear-gaussian}
\begin{proof}
The proof of the first part of this Theorem is straightforward. To prove the second part, we re-write optimization \eqref{eq:nmc-Y} as follows:
\begin{align}\label{eq:nmc-Y2}
\max & \sum_{(i, i')} \mathbb{E}[g_i(f_i(X_i))\ g_{i'}(f_{i'}(X_{i'}))],\\
&\mathbb{E}[g_i(f_i(X_i))]=0, ~~ 1\leq i \leq n,\nonumber\\
&\mathbb{E}[g_i(f_i(X_i)^2]=1, ~~ 1\leq i \leq n.\nonumber
\end{align}
\noindent
Define $\phi_i(X_i)=g_i(f_i(X_i))$ for $1\leq i\leq n$. Since $f_i$'s are bijective, feasible regions of optimizations \eqref{eq:nmc-Y2} and \eqref{eq:nmc} are equal. Under the assumptions of Corollary \ref{cor:nmc-guass-all-ones}, $\phi_i^*(X_i)=X_i$. Thus, $g_i^*(f_i(X_i))=X_i$.
\end{proof}

\section{Discussion and Conclusion}\label{sec:disscussion}
The techniques we have developed in this paper can be used in other related formulations as well. In the following, we briefly highlight two examples of such formulations, namely absolute NMC and regularized NMC. Considering further properties of these optimizations is beyond the scope of the present paper.
\vspace{-0.3cm}
\subsection{Other Objective Functions}\label{Sec:SI-NMC}
The optimization \eqref{eq:nmc} maximizes aggregate pairwise correlations over the network. In some applications, the strength of an association does not depend on the sign of the correlation coefficient. In those cases, one can re-write the NMC optimization \eqref{eq:nmc} to maximize the total absolute pairwise correlations over the graph as follows:
\begin{definition}[Absolute Network Maximal Correlation]\label{def:anmc}
\textup{Consider the following optimization:
\begin{align}\label{eq:nmc-wit-abs}
\rho_{G}^{A}(X_1,\ldots,X_n)\triangleq \sup_{\phi_1, \dots, \phi_n}\ \sum_{(i, i')\in E} \left| \mathbb{E}[\phi_i(X_i)\ \phi_{i'}(X_{i'})] \right|,
\end{align}
such that $\phi_i: \mathcal{X}_i \to \mathbb{R}$ is Borel measurable, $\EE[\phi_i(X_i)]=0$, and $\EE[\phi_i(X_i)^2]=1$, for all $1\leq i\leq n$. $G=(V,E)$ is a graph with vertices $V=\{1,2,\ldots, n\}$ and edges $E=\{(i,i'):i,i'\in V, i\neq i'\}$. We refer to this optimization as an absolute NMC optimization.}
\end{definition}
\textup{A similar approach can be used to characterize the absolute NMC optimization \eqref{eq:nmc-wit-abs} by introducing extra variables $s_{i,i'}$ to represent correlation signs of edges $(i,i')$:}
\begin{align}\label{eq:nmc-opt-dis1_abs}
\sup_{\phi_1, \dots, \phi_n} \quad & \sum_{(i,i')\in E}  s_{i, i'} \EE[\phi_i(X_i) \phi_{i'}(X_{i'})] \nonumber \\
& \EE[\phi_i(X_i)]=0, ~ \EE[\phi_i^2(X_i)]=1,\quad ~ 1\leq i\leq n \nonumber\\
& s_{i,i'}\in \{-1,1\},\quad ~ 1\leq i,i' \leq n.
\end{align}
The NMC optimization \eqref{eq:nmc} results in $n$ possibly nonlinear transformation functions $\phi_i^*(X_i)$ whose correlation with the original variables can be small. In some applications, one may wish to restrict the set of possible transformations of the NMC optimization to control the correlation  between transformed and original variables. This can be done by introducing a regularization term in the definition of NMC as follows.
\begin{definition}[Regularized NMC]\label{def:regularized-nmc}
\textup{
The regularized NMC of real-valued $X_1, \dots, X_n$ connected by a graph $G=(V,E)$ is defined as the solution of the following optimization:
\begin{align}\label{eq:reg-nmc}
\rho_{G}^{R} \left( X_1,\ldots,X_n \right)=  \sup_{\phi_1, \dots, \phi_n} &(1-\lambda)\sum_{(i, i')\in E}\mathbb{E}\left[ \phi_i(X_i)\ \phi_{i'}(X_{i'}) \right]\\
&+ \lambda \sum_{i\in V} \mathbb{E} \left[ \phi_i(X_i)\ (X_i- \mathbb{E}[X_i] ) \right],\nonumber
\end{align}
such that $\phi_i: \mathcal{X}_i \to \mathbb{R}$ is Borel measurable, $\EE[\phi_i(X_i)]=0$, and $\EE[\phi_i(X_i)^2]=1$, for all $1\leq i\leq n$. $G=(V,E)$ is a graph with vertices $V=\{1,2,\ldots, n\}$ and edges $E=\{(i,i'):i,i'\in V, i\neq i'\}$. $0\leq \lambda\leq 1$ is the regularization parameter.
}
\end{definition}
Unlike MC and NMC, which only depend on the joint distributions of variables, the regularized NMC depends on both joint distributions and support of variables because of the regularization term. Moreover, one can define regularized absolute NMC similarly to the optimization \eqref{eq:nmc-wit-abs}.
\\ Let the optimal transformation functions computed by optimization \eqref{eq:reg-nmc} be $\phi_{i,\lambda}^*(X_i)$. If $\lambda=0$, $\phi_{i,\lambda}^*(X_i)=\phi_{i}^*(X_i)$, while if $\lambda=1$, $\phi_{i,\lambda}^*(X_i)=X_i$. By varying $\lambda$ between 0 and 1, $\phi_{i,\lambda}^*(X_i)$ vary from $\phi_i^*(X_i)$ to $X_i$. Define
\begin{equation}
R_{G,\lambda}(X_1,\ldots,X_n)\triangleq \sum_{(i,i')\in E} \mathbb{E} \left[ \phi_{i,\lambda}^*(X_i) \ \phi_{i',\lambda}^*(X_{i'}) \right].\nonumber
\end{equation}
Therefore, $R_{G,0}=\rho_{G}$ while $R_{G,1}$ is the total linear correlations over the network. By the definition of NMC, $R_{G,0}\geq R_{G,1}$. Note that we can use a similar algorithm to Algorithm \ref{alg:nmc} to compute regularized NMC of Definition \ref{def:regularized-nmc}. The objective function of the regularized NMC optimization \eqref{eq:reg-nmc} can be written as follows:
\begin{align}
\sum_{i\in V} \mathbb{E}\left[\phi_i(X_i)\ \left( \frac{1-\lambda}{2}\sum_{j\in \cN(i)} \phi_{i'}(X_{i'})+\lambda (X_i- \mathbb{E}[X_i]) \right)\right],
\end{align}
where $\cN(i)$ represents the set of neighbors of node $i$ in the graph $G=(V,E)$. To compute the regularized NMC, one can use an algorithm similar to the ACE Algorithm \ref{alg:nmc} with the following updates for transformation functions:
\begin{align}
\phi_i^{*}(X_i)=\EE \left[ \frac{1-\lambda}{2}\sum_{j\in \mathcal{N}(i)} \phi_{i'}(X_{i'})+\lambda (X_{i}- \mathbb{E}[X_i])|X_{i} \right].
\end{align}
\vspace{-0.7cm}
\subsection{Conclusion}\label{sec:conc}
In this paper, we propose NMC as a measure to capture nonlinear associations among variables. We show that NMC extends the standard bivariate MC to the case of having large number of variables, by assigning each variable to a single transformation function, thus avoiding over-fitting issues of using multiple MC optimizations over variable pairs. We also introduce a regularized NMC optimization which penalizes total distances of inferred transformed variables from the original ones. One can use other standard regularization techniques to further restrict inferred nonlinear functions in practical applications.

One of the main contributions of this work is providing a unifying framework to compute NMC (and therefore, MC) for both discrete and continuous variables using projections over appropriate Hilbert spaces. Using this framework, we establish a connection between the NMC optimization with the MCP and MEP for discrete random variables, and with the Max-Cut problem for jointly Gaussian variables. Using these relationships, we provide efficient algorithms to compute NMC in different cases. Note that properties of NMC for finite discrete variables characterized in Theorems \ref{thm:continiousnmc} and \ref{thm:samplenmc} depend on minimum marginal probabilities of variables (i.e., $\delta$). Therefore, extending these properties to the continuous case is not straightforward and requires exploiting techniques tailored for continuous variables and measures. To compute NMC for continuous random variables with general distributions, one can use the proposed optimization framework by choosing appropriate orthonormal basis for Hilbert spaces. For example, we use projections over Hermite-Chebyshev polynomials to characterize a solution of the NMC optimization for jointly Gaussian variables. Finally note that the inferred, possibly nonlinear, functions of the NMC optimization can be used in other applications such as nonlinear regression.
\vspace{-0.2cm}
\section{Acknowledgements}
The authors would like to thank Yue Li (MIT) and Gerald Quon (UC Davis) for helpful discussions regarding the cancer application, Yue Li for providing the cancer datasets, and Flavio P. Calmon (MIT) and Vahid Montazerhojjat (MIT) for helpful discussions. In addition, the authors would like to thank anonymous reviewers and associate editor for useful comments.
\vspace{-0.2cm}


\end{document}